\documentclass[english]{article}
 
\usepackage[T1]{fontenc}
\usepackage[utf8]{inputenc}
\usepackage{amsthm}
\usepackage{amsmath}
\usepackage{amssymb}
\usepackage{graphicx}
\usepackage{dsfont}
\usepackage{amsmath}
\usepackage{array}
\usepackage{multirow}
\usepackage{caption}
\usepackage{color}
\usepackage{cancel}

\usepackage{multicol}

\theoremstyle{plain}

\theoremstyle{plain}

\theoremstyle{plain}
\newtheorem{lem}{\protect\lemmaname}
\theoremstyle{plain}
\newtheorem{thm}{\protect\theoremname}
\theoremstyle{plain}
\newtheorem{cor}{\protect\corollaryname}  
\theoremstyle{definition}

\theoremstyle{definition}

\theoremstyle{definition}

\renewenvironment{proof}{{\bf Proof. }}{\qed}

\makeatother
  
\usepackage{babel} 

\providecommand{\claimname}{Claim}
\providecommand{\lemmaname}{Lemma}
\providecommand{\propositionname}{Proposition}
\providecommand{\theoremname}{Theorem}
\providecommand{\corollaryname}{Corollary} 
\providecommand{\definitionname}{Definition}
\providecommand{\assumptionname}{Assumption}
\providecommand{\remarkname}{Remark}

\newcommand{\mywidehat}[1]{\mkern 1.5mu\widehat{\mkern-1.5mu#1\mkern-0mu}\mkern 0mu}
  
\DeclareMathOperator*{\argmax}{arg\,max}
\DeclareMathOperator*{\argmin}{arg\,min}

\newcommand{\openone}{\mathds{1}}


\newcommand{\Svhat}{\mywidehat{\mathbf{S}}}
\newcommand{\Shat}{\mywidehat{S}}





\newcommand{\balpha}{\boldsymbol{\alpha}}













%
%

%
%

%
%

%
%

\newcommand{\Hc}{\mathcal{H}}

%
%


\newcommand{\Zhat}{\hat{Z}}

\newcommand{\Tr}{\mathrm{Tr}}

\newcommand{\Nc}{\mathcal{N}}

%
%

\newcommand{\xvhat}{\hat{\mathbf{x}}}

%
%

%
%

%
%

\newcommand{\xv}{\mathbf{x}}

\newcommand{\yv}{\mathbf{y}}
\newcommand{\Yv}{\mathbf{Y}}

\newcommand{\Ac}{\mathcal{A}}

\newcommand{\Dc}{\mathcal{D}}

\newcommand{\Pc}{\mathcal{P}}
\newcommand{\Sc}{\mathcal{S}}

\newcommand{\Xc}{\mathcal{X}}

\newcommand{\EE}{\mathbb{E}}
\newcommand{\PP}{\mathbb{P}}
\newcommand{\RR}{\mathbb{R}}
\newcommand{\NN}{\mathbb{N}}

\newcommand{\var}{\mathrm{Var}}

%
%

%
%

%
%

\newcommand{\Iv}{\mathbf{I}}

\newcommand{\Kv}{\mathbf{K}}
\newcommand{\kv}{\mathbf{k}}

\newcommand{\Sv}{\mathbf{S}}

\newcommand{\bzero}{\boldsymbol{0}}

\newcommand{\bbeta}{\boldsymbol{\beta}}

%
%

\newcommand{\bxi}{\boldsymbol{\xi}}

\newcommand{\rwst}{\mathcal{E}_{\rm wst}^k(T)}
\newcommand{\ravg}{\mathcal{E}_{\rm avg}^k(T)}
\newcommand{\ravgsig}{\mathcal{E}_{\rm avg}^k(T,\sigma)}

\newcommand{\rwstmat}{\mathcal{E}_{\rm wst}^{\rm M}(T)}
\newcommand{\ravgmat}{\mathcal{E}_{\rm avg}^{\rm M}(T)}
\newcommand{\ravgsigmat}{\mathcal{E}_{\rm avg}^{\rm M}(T,\sigma)}

\newcommand{\ravgse}{\mathcal{E}_{\rm avg}^{\rm SE}(T)}
\newcommand{\ravgsigse}{\mathcal{E}_{\rm avg}^{\rm SE}(T,\sigma)}

\newcommand{\fhat}{\hat{f}}

\newcommand{\Ihat}{\hat{I}}

\newcommand{\Rhat}{\hat{R}}

\newcommand{\sob}{W_2^s}
\newcommand{\mat}{\Hc_{\rm M}}
\newcommand{\se}{\Hc_{\rm SE}}
\newcommand{\matsub}{\overline{\Hc}_{\rm M}}
\newcommand{\Hk}{\Hc_{k}}
\newcommand{\Dtilde}{\widetilde{D}}
\newcommand{\ahat}{\hat{a}}

\newcommand{\knu}{k_{\rm M}}
\newcommand{\kse}{k_{\rm SE}}

\newcommand{\beps}{\boldsymbol{\epsilon}}

\newcommand{\bomega}{\boldsymbol{\omega}}
\newcommand{\khat}{\hat{k}}
\newcommand{\Sigmahat}{\hat{\Sigma}}

\newcommand{\nuhat}{\hat{\nu}}
\newcommand{\num}{\nu^{-}}
\newcommand{\nup}{\nu^{+}} 

\usepackage{hyperref} 
\usepackage{enumitem}
\usepackage{subcaption}
\usepackage{float}
\usepackage{setspace}
\usepackage[square,sort,comma,numbers ]{natbib}
\usepackage[title]{appendix}

\usepackage{geometry} 
\usepackage{setspace}
\geometry{verbose,tmargin=0.99in,bmargin=0.99in,lmargin=0.99in,rmargin=0.99in}

\newcommand{\sizecustom}{0.24\linewidth}
\def\doubleunderline#1{\underline{\underline{#1}}}

\usepackage{algorithm}
\usepackage{algorithmic}
\def\arraystretch{2}

\begin{document} 

\title{On Average-Case Error Bounds for \\ Kernel-Based Bayesian Quadrature}
\author{Xu Cai, Chi Thanh Lam, and Jonathan Scarlett}
\date{}

\sloppy
\onehalfspacing

\maketitle

\long\def\symbolfootnote[#1]#2{\begingroup\def\thefootnote{\fnsymbol{footnote}}\footnote[#1]{#2}\endgroup}

\symbolfootnote[0]{
The authors are with the  Department of Computer Science, School of Computing, National University of Singapore (NUS). J.~Scarlett is also with the Department of Mathematics and the Institute of Data Science at NUS. e-mail: \url{caix@u.nus.edu}, \url{chithanh@u.nus.edu}, \url{scarlett@comp.nus.edu.sg}. 

This work was supported by the Singapore National Research Foundation (NRF) under grant number R-252-000-A74-281. C.~T.~Lam is supported by the Singapore-MIT Alliance for Research and Technology (SMART) PhD Fellowship.}

\allowdisplaybreaks

\begin{abstract}

In this paper, we study error bounds for {\em Bayesian quadrature} (BQ), with an emphasis on noisy settings, randomized algorithms, and average-case performance measures.  We seek to approximate the integral of functions in a {\em Reproducing Kernel Hilbert Space} (RKHS), particularly focusing on the Mat\'ern-$\nu$ and squared exponential (SE) kernels, with samples from the function potentially being corrupted by Gaussian noise.  We provide a two-step meta-algorithm that serves as a general tool for relating the average-case quadrature error with the $L^2$-function approximation error. 
When specialized to the Mat\'ern kernel, we recover an existing near-optimal error rate while avoiding the existing method of repeatedly sampling points.  When specialized to other settings, we obtain new average-case results for settings including the SE kernel with noise and the Mat\'ern kernel with misspecification.  Finally, we present algorithm-independent lower bounds that have greater generality and/or give distinct proofs compared to existing ones.


\end{abstract}
\section{Introduction} \label{sec:intro}

The integration of black-box functions is a fundamental problem with numerous applications, with Bayesian inference being a prominent example. The method of {\em Bayesian Quadrature} (BQ) \citep{Oha91, Ras03} has particularly gained popularity, adopting Bayesian modeling techniques to model the unknown function and reduce the required number of function evaluations.  Mathematically, the goal is to approximate the quantity
\begin{equation}
    I(f) = \int f(\xv)p(\xv)d\xv, \label{eq:integral0}
\end{equation}
where $p(\xv)$ is a known weighting function, but we only have black-box access to $f(\xv)$. 
It is common to study this problem for worst-case functions in a given class, and with noiseless function queries and deterministic algorithms (e.g., see \citep{Nov88,Kan16,Kan19}).  However, this is also substantial motivation to understand average-case performance measures, noisy queries, and randomized algorithms (e.g., see \citep{Nov88,Pla96,Nov16}).  

As an example motivating noisy settings, when performing integrals in Bayesian inference, function evaluations themselves may be implemented using a randomized subroutine whose variations can be modeled by introducing noise terms.  As a rather different example, in the same way that noisy Bayesian optimization (BO) can be used to find maximal sensor readings in a sensor network (e.g., see \citep{Kra11}), noisy BQ methods could be used to find average (or weighted average) readings.   

\subsection{Overview of Contributions}

In this paper, our focus is on kernel-based BQ methods, corresponding to functions lying in a {\em Reproducing Kernel Hilbert Space} (RKHS). 
 We focus on the widely-used Mat\'ern and Squared Exponential (SE) kernels, given by \citep{Ras06}
\begin{align}
    \knu(\xv,\xv') &= \frac{2^{1-\nu}}{\Gamma(\nu)}
    \bigg(\frac{\sqrt{2\nu}\,\|\xv - \xv'\|}{l}\bigg)^{\nu}  J_{\nu}\bigg(\frac{\sqrt{2 \nu}\,\|\xv - \xv'\|}{l} \bigg), \label{eq:kMat} \\
    \kse(\xv,\xv') &= \exp\bigg(-\frac{\|\xv-\xv'\|^2}{2l^2}\bigg), \label{eq:kSE} 
\end{align}
where $J_{\nu}$ is the modified Bessel function, $\Gamma$ is the gamma function, $l
$ is the length-scale, $\nu$ is the Mat\'ern smoothness parameter, and $\|\cdot\|$ denotes the Euclidean norm.  Our main contributions are outlined as follows.


{\bf Meta-Algorithm.} Our main algorithm is a ``meta-algorithm'' that builds on an idea originally used for Sobolev (and other) functions in noiseless settings (e.g., \citep[Sec.~2]{Nov88}).  The meta-algorithm runs any algorithm for estimating the function in $L^2$-norm using the first half of the samples, runs a Monte Carlo method using the remaining samples to estimate the residual, and then combines the two.  We give a general theorem relating the $L^2$-error of the first step with the overall integration error, and show that this has broad implications beyond the settings for which the idea was used previously, as outlined below.

{\bf Resulting Upper Bounds.} Some specific applications of our general theorem are as follows:
\begin{itemize}
    \item By using a maximxum-variance algorithm from \citep{Vak21} in the first part, we establish that for the Mat\'ern-$\nu$ kernel, one attains an order-optimal error bound of $\Theta(T^{-\frac{\nu}{d}-1} + \sigma T^{-\frac{1}{2}})$ (with time horizon $T$, noise variance $\sigma^2$, Mat\'ern smoothness $\nu$, and dimension $d$).  This scaling was previously derived in \citep{Pla96}, but used a method based on repeatedly re-sampling a carefully-chosen set of points many times to reduce noise, which may be undesirable in practice.
    \item We additionally provide corollaries of our main theorem providing results that appear to be new (though related results with other settings or criteria do exist), including for the Mat\'ern kernel with misspecified smoothness, and average-case performance for the SE kernel in both the noiseless and noisy settings.
\end{itemize}

{\bf Algorithm-Independent Lower Bounds.} While our main contributions are those outlined above, we also provide some results regarding algorithm-independent lower bounds.  For the Mat\'ern kernel with noise, we provide an alternative proof of the $\Omega(T^{-\frac{\nu}{d}-1} + \sigma T^{-\frac{1}{2}})$ lower bound from \citep{Pla96} that establishes a single source of difficulty for both terms in the bound (which were previously handled separately), and we slightly generalize to Mat\'ern parameters such that $\nu+\frac{d}{2}$ may be non-integer.  Moreover, we establish that the $\Omega( \sigma T^{-\frac{1}{2}} )$ lower bound holds much more generally in noisy settings, including for the SE kernel.


Our results are summarized in Table \ref{tab:compare}, along with various existing results that we discuss in the following subsection.

\subsection{Related Work}\label{sec:relate}
\begin{table*}[t!]
\renewcommand{\arraystretch}{1.3}
    \begin{centering}
    \resizebox{0.75\linewidth}{!}{
        \begin{tabular}{|>{\centering}m{3cm}|>{\centering}m{3.5cm}|>{\centering}m{3.5cm}|>{\centering}m{4.5cm}|}
            \hline 
            \textbf{Summarized Results} & {\small \bf Noiseless Worst} & {\small \bf Noiseless Average } & {\small \bf Noisy Average }
            \tabularnewline
            \hline
            
            {\small \bf Mat\'ern Lower} &  $\underline{\Omega\big(T^{-\frac{\nu}{d}-\frac{1}{2}}\big)}$  \\ \citep{Nov88} &  $\underline{\Omega\big(T^{-\frac{\nu}{d}-1}\big)}$ \\ \citep{Nov88} &  $\underline{\Omega\big(T^{-\frac{\nu}{d}-1}+\sigma T^{-\frac{1}{2}}\big)}$ \\ \citep{Pla96} \tabularnewline
            \hline

            {\small \bf Mat\'ern Upper}  & $O\big(T^{-\frac{\nu}{d}-\frac{1}{2}}\big)$ \\ \citep{Nov88}& $\underline{O\big(T^{-\frac{\nu}{d}-1}\big)}$ \\ \citep{Nov88} &  $\underline{O\big(T^{-\frac{\nu}{d}-1}+\sigma T^{-\frac{1}{2}}\big)}$ \\ \citep{Pla96}
            \tabularnewline
            \hline 
            
            {\small \bf SE Lower}  & $\Omega(T^{-C T^{\frac{1}{d}}})$ \\ \citep{Kuo17} (Linear algs.~only) & N/A &  $\doubleunderline{\Omega\big(\sigma T^{-\frac{1}{2}}\big)}$  \tabularnewline
            \hline 
            
            {\small \bf SE Upper} & $O\big(e^{-C T^{\frac{1}{d}}}\big)$ \\ \citep{Kan19} & $\doubleunderline{O\big(e^{-\frac{d}{2} T^{\frac{1}{d}}}\big) T^{-\frac{1}{2}}}$ & $\doubleunderline{O\big(e^{-C (\frac{T}{\log T})^{\frac{1}{d}}}T^{-\frac{1}{2}} + \sigma T^{-\frac{1}{2}}\big)}$
            \tabularnewline
            \hline 
        \end{tabular}
    }
    \par\end{centering}
        
    \caption{Summary of some of the most related existing results and our results.  Results with an underline were partially or fully existing but are extended or re-derived in our work (e.g., results extended to fractional $\nu+\frac{d}{2}$, upper bounds based on distinct algorithms, or lower bounds via alternative proof techniques).  Results with double underlines are new to this paper, to our knowledge. \label{tab:compare}}
\end{table*}

{\bf Numerical Integration and Bayesian Quadrature.} Extensive early work on numerical integration appeared in the literature on information-based complexity, e.g., see \citep{Bak59,Nov88,tra03,Nov08} and the references therein. Function classes considered included Sobolev, H\"older, and others, with the Sobolev class being particularly related to the Mat\'ern RKHS (see Appendix \ref{sec:sobolev}). 
More explicit use of kernel-based methods appeared in \citep{Nar02,Wen04,Wen05,Rie10}, and 
Bayesian quadrature from a probabilistic perspective \citep{Ras06,Kan16,Kan18,Wyn21} has gained particular popularity due to its role in statistical machine learning.  


{\bf Upper Bounds Under the Mat\'ern Kernel.}  For $d$-dimensional Mat\'ern-$\nu$ RKHS functions with query budget $T$, early literature such as \citep{Bak59, Nov88} proved that in the noiseless setting, the best possible worst-case (deterministic) error is $\Theta(T^{-\frac{\nu}{d}-\frac{1}{2}})$, whereas by considering the average-case error of a randomized algorithm, this can be reduced to $\Theta(T^{-\frac{\nu}{d}-1})$.  An extensive survey of the noiseless setting can be found in \citep{Nov88}, where it is also noted that basic Monte Carlo sampling attains $O(T^{-\frac{1}{2}})$ error; this observation also extends immediately to the noisy setting with $\sigma = O(1)$ \citep{Pla96}.  

As we hinted above, the $L^2$-error bounds for the Mat\'ern kernel with noise in \citep{Wyn21} will be particularly useful for our purposes, including in misspecified settings.  Some implications of these results for noisy BQ were also noted in \citep{Wyn21}, but they resulted in worse scaling (namely, $\Theta(T^{-\frac{s}{2s+d}})$ with $s = \nu+\frac{d}{2}$) than that shown in Table \ref{tab:compare}.  In fairness, however, several of the results in \citep{Wyn21} can also be applied to settings with non-stochastic noise, whereas we only handle i.i.d.~random noise.



{\bf Upper Bounds Under the SE Kernel.}  For functions in the SE RKHS, we observe in Table \ref{tab:compare} that there are non-minor gaps between lower and upper bounds.
For a different setting in $\RR^d$ with $p(\xv)$ in \eqref{eq:integral0} being a Gaussian distribution, \citep{Kuo17} gives a worst-case lower bound for \emph{linear algorithms}, by decomposing SE functions using an orthogonal basis in the $L^2$ space.  This lower bound has been further improved in \citep{Kar21} using another orthogonal basis in the native RKHS space, but it only applies to (linear) Gauss-Hermite rules.  Overall, research on the SE kernel appears to have mainly focused on noiseless worst-case upper bounds. 

A notable work related to ours is the adaptive BQ (ABQ) algorithm from \citep{Kan19}.  While their algorithm is adaptive, we see in Table \ref{tab:compare} that our non-adaptive algorithm yields a slight improvement over theirs, albeit with a weaker average-case guarantee.  Similar observations also apply to Mat\'ern functions, where the ABQ algorithm only achieves $O(T^{-\frac{\nu}{d}})$ scaling \citep[Cor.~C.4]{Kan19}.


{\bf Equivalence between Random Features and Bayesian Quadrature.}  An interesting equivalence between worst-case kernel-based quadrature and random feature based function approximation in $L^2$ norm has once established in \citep{Bac17}.  While the main focus in \citep{Bac17} is the worst-case noiseless setting, it is noticed in Sec.~3.1 therein that their methods are have a certain tolerance to noise.  Noisy error bounds can be obtained via this fact, but they are higher than ours, with the first of the two terms (e.g., see \eqref{eq:matern_final} below) typically being a factor $\frac{1}{\sqrt T}$ smaller in our work, and being order-optimal in several cases of interest.  We crucially rely on randomization to achieve this, whereas the proposed algorithm in \citep{Bac17} is deterministic.  We note that for the SE kernel, one of our results (Corollary \ref{cor:avg_se_sig}) uses a result from \citep{Bac17} as an intermediate step.


{\bf Relationship with Bayesian Optimization.}  The extensive literature on {\em Bayesian Optimization} (BO) is also related to our work in the sense of iterating acquisition functions.  However, BO turns out to be a strictly harder problem.  As surveyed in detail in the noiseless setting in \citep{Nov88}, noiseless BO is closely related to estimating the function in $L^{\infty}$ norm, and noiseless BQ is closely related to estimating it in $L^2$ norm, with the former being strictly harder.  Viewed differently, a key difficulty in BO is identifying a single short and narrow ``bump'' \citep{Bul11,Sca17a}, whereas in BQ the same bump contributes a negligible amount to the integral.  We elaborate on this connection in Appendix \ref{app:alt_bo}, showing that BO-type techniques yield suboptimal BQ bounds for the Mat\'ern kernel.  On the other hand, in Section \ref{sec:upper}, we will see that this approach gives a fairly good result for the SE kernel in the noiseless (but not noisy) setting.


{\bf Other Variants of BQ.} Finally, various works on BQ have explored more sophisticated techniques and variations such as adaptive sampling \citep{Kan19}, active area search \citep{Ma14}, and settings with multiple related functions \citep{Ges19} (among many others).  We are not aware of any (beyond those outlined above) that are directly related to our study of theoretical error bounds in relatively standard settings.

\section{Problem Setup} \label{sec:setup}

Let $f:D\rightarrow\RR$ be a real-valued function on the compact domain $D=[0,1]^d$.  By shifting and scaling, our results readily extend to arbitrary rectangular domains.  In addition, our upper bounds easily extend to general compact domains.  

We consider the class $\Hk(B)$ of functions whose RKHS norm $\|\cdot\|_{\Hk}$ is upper bounded by some constant $B>0$.  We focus in particular on the Mat\'ern-$\nu$ kernel (see \eqref{eq:kMat}), whose function class $\mat(B)$ is norm-equivalent to the Sobolev class (see Appendix \ref{sec:sobolev} for details).  We also derive results for the SE kernel (see \eqref{eq:kSE}), where the function class is denoted as $\se(B)$.

Let $p(\xv)$ be a known and bounded density function, i.e., $p(\xv) \in [0,p_{\max}]$ for some $p_{\max} > 0$, and $\int p(\xv)d\xv=1$.  We define $\Pc(p_{\max})$ to be the set of all functions satisfying these conditions. Our goal is to estimate the integral of an RKHS function $f:D\rightarrow \RR$ weighted by $p(\xv)$, defined in \eqref{eq:integral0}.  Before forming an approximation of this integral, the algorithm takes $T$ observations: At time step $t$, select $\xv_t\in D$, and observe $y_t=f(\xv_t)+\epsilon_t$, where $\epsilon_t \sim \Nc(0,\sigma^2)$ is Gaussian noise.  The final approximate integral is denoted by $\Ihat$.

As is commonly done for RKHS functions, we will use Bayesian methods based on a GP prior ${\rm GP}(0,k)$ and a Gaussian noise model.  After observing $t$ noisy samples, the posterior distribution is also a GP with the following posterior mean and variance:
\begin{align}
    \mu_{t}(\xv) &= \kv_t(\xv)^T\big(\Kv_t + \lambda \mathbf{I}_t \big)^{-1} y_t,  \label{eq:posterior_mean} \\ 
    \sigma_{t}^2(\xv) &= k(\xv,\xv) - \kv_t(\xv)^T \big(\Kv_t + \lambda \mathbf{I}_t \big)^{-1} \kv_t(\xv), \label{eq:posterior_variance}
\end{align}
where $y_t = [y_1,\ldots,y_t]^T$, $\kv_t(\xv) = \big[k(\xv_i,\xv)\big]_{i=1}^t$, $\Kv_t = \big[k(\xv_t,\xv_{t'})\big]_{t,t'}$ is the kernel matrix, $\mathbf{I}_t$ is the identity matrix of dimension $t$, and $\lambda>0$ is a hyperparameter.

We consider both adaptive algorithms (i.e., the algorithm observes $y_1,\dotsc,y_{t-1}$ before choosing $\xv_t$) and non-adaptive algorithms (i.e., all $\xv_1,\dotsc,\xv_T$ are chosen in advance).  In fact, we will prove our lower bound for adaptive algorithms and upper bounds for non-adaptive algorithms, thus establishing the stronger type of result in both cases.

Using the shorthands $\Hk = \Hk(B)$ (as well as $\mat$ or $\se$ when the kernel $k$ is specialized as Mat\'ern or SE) and $\Pc = \Pc(p_{\max})$, the settings in Table \ref{tab:compare} are summarized as follows, the last of which is the one we primarily focus on:
\begin{itemize}
    \item {\bf Noiseless Worst-Case Error:} $\rwst = \sup_{p \in \Pc, f\in \Hk} |I -\Ihat|$.
    \item {\bf Noiseless Average-Case Error:} $\ravg = \sup_{p \in \Pc, f\in \Hk} \EE\big[ |I - \Ihat| \big]$, where the expectation is over the randomized algorithm.
    \item {\bf Noisy Average-Case Error:} $\ravgsig = \sup_{p \in \Pc, f\in \Hk} \EE\big[ |I - \Ihat| \big]$, where the expectation is over the randomized algorithm and the noise.
\end{itemize}
We have omitted a notion of worst-case error for the noisy setting, since there are subtle issues in posing such a setting in a meaningful manner (e.g., see \citep{Pla96}).  For instance, even if the algorithm is deterministic given the noisy observations, it can still obtain randomness by taking digits after the 1000th decimal point (say) of the observed values $y_t$.  That is, the randomness from the noise alone could still give the same effect as using a randomized algorithm.

\section{Lower Bounds}
Our lower bounds for the noisy setting are stated as follows; note that we allow $\sigma$ to vary with $T$.

\begin{thm}\label{thm:noisy_lower}
    {\em (Average-Case Noisy Lower Bounds)}
    Consider our problem setup with constant parameters $(B,\nu,d,l)$, noise variance $\sigma^2$, and query budget $T$.  Then, for any algorithm (possibly adaptive and/or randomized) for estimating $I$, we have the following lower bounds on the average-case error:
    \begin{enumerate}
        \item For Mat\'ern kernel with $\nu+\frac{d}{2}\ge 1$, we have
        \begin{equation}\label{eq:mat_lb_sig}
            \ravgsigmat = \Omega\big(T^{-\frac{\nu}{d}-1} + \sigma T^{-\frac{1}{2}}\big).
        \end{equation}
        \item For any kernel such that $\max_{\xv} k(\xv,\xv) < \infty$ and $\int_{[0,1]^d} k(\xv,\xv^\natural) d\xv \ne 0$ for some $\xv^\natural$ (e.g., the SE or Mat\'ern kernel with $\xv^\natural = \bzero$), we have:
        \begin{equation}\label{eq:se_lb_sig}
            \ravgsig = \Omega\big(\sigma T^{-\frac{1}{2}} \big).
        \end{equation}
    \end{enumerate}
    Moreover, these lower bounds hold even under the fixed weight function $p(\xv) = 1$.
\end{thm}

\subsection{Proof of Theorem \ref{thm:noisy_lower}}

The first term in \eqref{eq:mat_lb_sig} follows directly from noiseless average-case error bounds (see Appendix \ref{app:existing}).  The second term can also be established by considering constant-valued functions \citep{Pla96}, or as a special case of \eqref{eq:se_lb_sig} (which is proved below).  On the other hand, it is also of interest to establish a single hard subset of functions that yields both terms in \eqref{eq:mat_lb_sig} in a unified manner, thus establishing a single source of difficulty for both terms.  We provide such an approach in  Appendix \ref{app:lowerboundproof}, considering functions composed of several small ``bumps''.  The idea is that with too few samples the algorithm cannot reliably determine which bumps are positive and which are negative, and if too many of these are uncertain, then a certain level of error is unavoidable.

It remains to prove \eqref{eq:se_lb_sig}.  To do so, we consider the function $f(\xv) = k(\xv,\xv^\natural)$, which is bounded and has a non-zero integral by assumption.  Moreover, since $k(\cdot,\xv^\natural)$ is trivially in the RKHS class, we have that the scaled function $f_c(\xv) = c f(\xv)$ has RKHS norm at most $B$ for all $c > 0$ below a suitably-chosen threshold.

We consider the sub-class of functions $f_c$ with RKHS norm at most $B$, and show that $\Omega(\sigma T^{-\frac{1}{2}})$ queries are needed even for this sub-class.  To see this, we note that sampling any point $\xv$ corresponds to observing $c$, scaled by $f(\xv)$, with $N(0,\sigma^2)$ noise.  Without loss of optimality, we can assume that the algorithm only samples at the point(s) where $|f(\xv)|$ is largest (so that $c$ is maximally scaled while the noise level is fixed), and by assumption we have $|f(\xv)| < \infty$.  Then, the noisy integration problem on this sub-class reduces to the noisy estimation of $c$ with Gaussian noise, which is well known to incur $\Theta(\sigma T^{-\frac{1}{2}})$ error (even when $c$ is known to be below an arbitrarily small constant threshold), e.g., see \citep{Pla96}.

\section{Upper Bounds}\label{sec:upper}

\begin{algorithm}[t]
	\caption{Two-batch integral estimation meta-algorithm}  \label{alg:two-phase-bq}
	\begin{algorithmic}[1]
		\STATE {\bfseries Input:}  Function $f$, domain $D$, time horizon $T$, function estimation algorithm \textsc{EstimateFunc}
		\STATE Use \textsc{EstimateFunc} with $\frac{T}{2}$ samples to produce a function estimate $\fhat$ 
		\FOR{$t=\frac{T}{2}+1,\dots,T$}
			\STATE Sample $\xv_t \sim p(\xv)$ independently
			\STATE Observe $y_t=f(\xv_t)+\epsilon_t$
		\ENDFOR
		\STATE Compute the approximate integral $\Ihat_1=\int_D p(\xv) \fhat(\xv)d\xv$
		\STATE Compute the residual $\Rhat=\frac{2}{T}\sum_{t=T/2 + 1}^{T} (y_t-\fhat(\xv_t))$
		\STATE Output $\Ihat = \Ihat_1+\Rhat$
	\end{algorithmic}
\end{algorithm}

\begin{algorithm}[t]
	\caption{Maximum variance sampling; can be used as \textsc{EstimateFunc} in Algorithm \ref{alg:two-phase-bq}}  \label{alg:max-var}
	\begin{algorithmic}[1]
		\STATE {\bfseries Input:} Function $f$, domain $D$, GP prior ($\mu_0$, $k_0$), GP noise variance $\lambda$, time horizon $\frac{T}{2}$
		\FOR{$t=1,\dots,\frac{T}{2}$}
			\STATE Select $\xv_t=\argmax_{\xv\in D}\sigma_{t-1}(\xv)$
			\STATE Receive $y_t=f(\xv_t)+\epsilon_t$
			\STATE Update $\sigma_t$ using $\xv_1,\dotsc,\xv_t$
		\ENDFOR
		\STATE Update $\mu_{T/2}(\xv)$ using $\xv_1,\dots,\xv_{T/2},y_1,\dots,y_{T/2}$
		\STATE Output $\mu_{T/2}$
	\end{algorithmic}
\end{algorithm}
In this section, we introduce our main meta-algorithm and derive upper bounds on the average-case error.  We follow the high-level idea of combining function estimation methods with Monte Carlo estimation on the residual, previously proposed for the noiseless setting \citep{Nov88}, but with different details to account for the noise.
The meta-algorithm is shown in Algorithm \ref{alg:two-phase-bq}, and is described as follows.  The samples are performed in two batches, but still in a non-adaptive manner (i.e., the second batch can be chosen without knowing the first batch).\footnote{Our main theorem remains true when the first batch consists of adaptive sampling, but all of our corollaries will be based on non-adaptive sampling.}  The first batch uses $T/2$ samples to construct an approximation $\fhat$ of $f$, which forms an initial estimate $\Ihat_1=\int_D p(\xv) \fhat(\xv) d\xv$.  The subroutine \textsc{EstimateFunc} for doing so is kept general for now, but by default, we will use maximum variance sampling\footnote{We focus on maximum variance sampling for concreteness, but our analysis and results are unchanged when, at time $t$, an arbitrary point satisfying $\sigma_{t-1}(\xv)\geq \gamma\|\sigma_{t-1}\|_{L^\infty}$ is found for some $\gamma \in (0,1)$ (e.g., $\gamma = \frac{1}{2}$).  This requirement is potentially much easier to attain in practice, instead of insisting on the global maximum.} as shown in Algorithm \ref{alg:max-var}, with the estimated $\fhat$ being given by the GP posterior mean $\mu_{T/2}$.  Note that these $T/2$ samples are non-adaptive because the posterior variance of a GP does not depend on any observations.  The remaining $T/2$ samples estimates the residual $R$ between the difference of the true integral $I$ and this value $\Ihat_1$:
\begin{equation*}
    R=I-\Ihat_1=\int_D p(\xv) (f(\xv)-\fhat(\xv))d\xv.
\end{equation*}
Having fixed $\fhat$ and hence $\Ihat_1$, we can refine our overall estimate of $I$ by estimating the residual $R$. To do so, we use the last $T/2$ samples to construct a Monte Carlo estimator $\Rhat$ of $R$, i.e., 
\begin{equation*}
\Rhat=\frac{2}{T}\sum_{t=T/2+1}^{T} (y_t-\fhat(\xv_t)),
\end{equation*}
in which each $\xv_t$ is sampled from $p(\xv)$.  Hence, the approximated integral is simply $\Ihat = \Ihat_1 + \Rhat$. 

In the following theorem, we establish the upper bound of the average-case integration error of Algorithm \ref{alg:two-phase-bq} in terms of the noisy $L^2$ distance between $f$ and $\fhat$, with an extra noise term.  This theorem is general and can be applied to a variety of kernel choices, noise settings, misspecified settings, etc., as we will see below.  In the following, all omitted proofs are deferred to Appendix \ref{app:proof-avg-upper}.

\begin{thm} \label{thm:avg_int_l2}
    {\em (Quadrature-$L^2$ Relationship)} 
     For any fixed $f\in\Hk$ and $p\in \Pc(p_{\max})$, consider running Algorithm \ref{alg:two-phase-bq} with $T$ observations being corrupted by Gaussian independent noises $\Nc(0,\sigma^2)$, and obtain the estimates $\fhat$ and $\Ihat$.  Then, the average error satisfies
    \begin{equation}
        \EE\big[|I-\Ihat|\big] \le 2\sqrt{p_{\max}} T^{-\frac{1}{2}} \EE\big[ \| f - \fhat \|_{L^2} \big] + 2\sigma T^{-\frac{1}{2}} ,
    \end{equation}
    where the expectation is averaged over the randomness of the points queried and the noise.
\end{thm}
For the Mat\'ern kernel, building on the results of \citep[Thm.~4]{Wyn21}, we obtain the following two corollaries. 

\begin{cor} \label{cor:avg_mat_sig}
    {\em (Average-Case Noisy Mat\'ern Upper Bound)} 
     For $f\in \mat$, consider the setup of Theorem \ref{thm:avg_int_l2}, where the subroutine \textsc{EstimateFunc} for producing $\fhat$ is maximum-variance sampling (Algorithm \ref{alg:max-var}) with parameter $\lambda=\Theta(T^{-\frac{\nu}{d}})$.  Then, the average-case integration error is upper bounded by
    \begin{equation}
        \ravgsigmat = O\Big(T^{-\frac{\nu}{d}-1} + \sigma T^{-\frac{1}{2}}\Big). \label{eq:matern_final}
    \end{equation}
\end{cor}

As discussed in Section \ref{sec:relate}, Corollary \ref{cor:avg_mat_sig} matches the lower bound (Theorem \ref{thm:noisy_lower}) up to constant factors, and shows that the overall difficulty of noisy BQ is as roughly hard as either noiseless BQ or univariate Gaussian mean estimation with variance $\sigma^2$ (whichever is harder).  Since the algorithm used is non-adaptive but the lower bound applies even to adaptive algorithms, we conclude that there is no adaptivity gap in terms of scaling laws in this case.

Similarly to prior works such as \citep{Kan16,Tec20,Wyn21}, we now consider the setting in which the smoothness hyperparameter $\nu$ is unknown, and Algorithm \ref{alg:max-var} is accordingly modified as follows:
\begin{itemize}
    \item Instead of assuming that $\nu$ is given (via the prior $k_0$), we assume that the algorithm is given a sequence of estimates $\{\nuhat_t\}_{t=1}^{T}$.  These are kept generic, but could correspond to estimates that are iteratively updated as more data is collected.
    \item When forming the GP posterior at time $t$, the parameter $\nuhat_t$ is used.
\end{itemize}
Following \citep{Wyn21}, given the time horizon $T$, we define $\num = \inf_{t\le T}\nuhat_t$ and $\nup = \sup_{t\le T}\nuhat_t$,\footnote{This can be generalized so that these definitions also restrict $t \ge N$ for some finite $N$, but we simply take $N=1$ to reduce notation.} and we assume that the number of distinct values in $\{\nuhat_t\}_{t=1}^{T}$ is upper bounded by fixed constant.


\begin{cor} \label{cor:avg_mat_sig_mis}
    {\em (Average-Case Noisy Mat\'ern Upper Bound -- Misspecified Setting)} 
     For $f\in \mat$, consider the setup of Theorem \ref{thm:avg_int_l2}, where the subroutine \textsc{EstimateFunc} for producing $\fhat$ is the above-described modification of maximum-variance sampling with parameters $\{\nuhat_t\}_{t=1}^{T}$ satisfying $\nup > \nu$, and with $\lambda=\Theta(T^{-\frac{\num}{d}})$.  Then, the average-case integration error is upper bounded by
    \begin{equation}
        \ravgsigmat = O\Big(T^{-\frac{\min(\num,\nu)}{d}-1} + T^{-\frac{\nu+\num-\nup}{d}-1} + \sigma T^{-\frac{1}{2}}\Big). \label{eq:misspec_final}
    \end{equation}
\end{cor}

As a special case of this result, when $\num = \nup$ (and both are greater than $\nu$, in accordance with the assumption $\nup > \nu$), the scaling precisely reduces to that of Corollary \ref{cor:avg_mat_sig}.  Thus, interestingly, over-estimating the smoothness parameter is not harmful in terms of scaling laws; similar observations were made in prior works such as \citep{Kan16} (Remark 2 therein).  More broadly, Theorem \ref{thm:avg_int_l2} can be used to convert other $L^2$ guarantees from \citep{Wyn21} (or from other works) to BQ guarantees, but for brevity we only formally state Corollary \ref{cor:avg_mat_sig_mis}.

We now turn to the SE kernel with noise, for which we follow the use of random features \citep{Rah07}.  we specifically build on \citep{Bac17}, which analyzes weighted random Fourier features from a function approximation point of view.  The approximation requires sampling from a leverage function distribution based on an infinite-dimensional integral operator.  With the help of their analysis, we  obtain the following corollary.

\begin{cor} \label{cor:avg_se_sig}
    {\em (Average-Case Noisy SE Upper Bound)} 
      For $f\in\se$, under the setup of Theorem \ref{thm:avg_int_l2}, there exists a random feature sampling algorithm that produces an approximation $\fhat$ in Algorithm \ref{alg:two-phase-bq} such that the average-case error is 
    \begin{equation*}
        \ravgsigse = O\Big(e^{-C_r (\frac{T}{\log T})^{\frac{1}{d}}}T^{-\frac{1}{2}} + \sigma T^{-\frac{1}{2}} \Big)
    \end{equation*}
    for some constant $C_r > 0$.
\end{cor}

In Section \ref{sec:intro}, we highlighted that BQ is related to RKHS-based optimization problems.  We briefly note that that the techniques from such works can also be used to obtain BQ bounds via Algorithm \ref{alg:max-var} (using it for $T$ observations rather than only the first half) or sampling on a uniform grid, but the resulting scaling is typically highly suboptimal.  Specifically, in Appendix \ref{app:alt_bo}, we show that this approach leads to the following:
\begin{align}
    &\ravgsigmat = O\Big(T^{-\frac{\nu}{2\nu+d}} (\log{T})^{\frac{4\nu+d}{4\nu+2d}}\Big),  \label{eq:mat_boup_sig}\\
    &\ravgmat = O\Big(T^{-\frac{\nu}{d}}\Big),  \label{eq:mat_boup_nosig}\\
    &\ravgsigse = O\Big((\log T)^{\frac{d+1}{2}} T^{-\frac{1}{2}}\Big),  \label{eq:se_boup_sig}\\
    &\ravgse = O\Big(e^{-\frac{d}{2} T^{\frac{1}{d}}} \Big), \label{eq:se_boup_nosig}
\end{align}
where \eqref{eq:mat_boup_sig} and \eqref{eq:se_boup_sig} hold under the assumption that $\sigma = \Theta(1)$.  These four results correspond to the Mat\'ern noisy/noiseless and SE noisy/noiseless settings respectively.  Algorithm \ref{alg:max-var} can be used directly to obtain the first three results, whereas \eqref{eq:se_boup_nosig} is based on sampling on a uniform grid.

While the first three bounds above are weaker than our earlier results, \eqref{eq:se_boup_nosig} gives fast decay to zero for the noiseless SE case.  Moreover, in Appendix \ref{app:alt_bo} we additionally show that \eqref{eq:se_boup_nosig} can be slightly reduced by a $T^{-\frac{1}{2}}$ factor using Algorithm \ref{alg:two-phase-bq}, as summarized in the following corollary.

\begin{cor} \label{cor:avg_se_nosig}
    {\em (Average-Case Noiseless SE Upper Bound)} 
      For $f\in\se$, consider the setup of Theorem \ref{thm:avg_int_l2}, where \textsc{EstimateFunc} is chosen to take samples on a uniformly-spaced grid and return $\mu_{T/2}$ with $\lambda = 0$.  Then, the average-case error is upper bounded by
    \begin{equation}\label{eq:avg_se_nosig}
        \ravgse = O\Big(e^{-\frac{d}{2} T^{\frac{1}{d}}} T^{-\frac{1}{2}}\Big).
    \end{equation}
\end{cor}

\section{Experiments} \label{sec:experiments}

In this section, we conduct simulation studies to compare our two-batch algorithm with its component parts, maximum variance sampling (MVS) and Monte Carlo sampling (MC).  These experiments serve to verify that Algorithm \ref{alg:two-phase-bq} can be effective in practice, but we will also discuss some potential gaps between the theory and practice.  In particular, our goal is \emph{not} to establish state-of-the-art practical performance.

We refer to Algorithm \ref{alg:max-var} as MVS-MAT or MVS-SE when the kernel is Mat\'ern or SE, and similarly for Algorithm \ref{alg:two-phase-bq} with MVS-MC-MAT or MVS-MC-SE.  
%
To attain a better understanding of how MVS-MC performs with respect to time, we modify it to {\em alternate} between MVS samples and MC samples, instead of doing all of one followed by all of the other.  Mathematically, this does not change the behavior at the {\em final} time step.

\subsection{Setup}
{\bf \noindent GP model.}  We adopt the common choice $\nu = 3/2$ for Mat\'ern-$\nu$ kernel, and the GP noise hyperparameter $\lambda$ is fixed as $10^{-4}$ for both kernels \footnote{Our theory uses choices such as $\lambda = \Theta(T^{-\frac{\nu}{d}})$ with unspecified implied constants, which makes it difficult to choose the parameter in a way that exactly matches the theory. 
 However, since the theory dictates choosing $\lambda$ to be small, we set it to be small here accordingly.}. 
 The lengthscale is left as a free parameter, as is an additional scale parameter that we introduce (multiplying \eqref{eq:kMat}) to permit functions with varying ranges.  Except where stated otherwise, these two parameters are learned by maximizing the data log-likelihood \citep{Ras06} using the  built-in SciPy optimizer based on L-BFGS-B, which is also used for finding the maximum variance point.  We seek to solve the BQ problem with a constant weight function, i.e., $p(\xv) = 1$.

{\bf \noindent Evaluation.}  We compare MVS-MC against its two components, MVS and Monte Carlo. For MVS and MVS-MC, we perform 100 trials, with each trial using a distinct set of 3 random initial points (but these are common to both methods).  For MC, since every round is already randomized, we simply run 100 trials without initial points.  For all functions, we consider a time horizon of $T = 250$, and evaluate the performance using the mean absolute error, with the ground truth value (and also $\Ihat_1$ at Line 7 of Algorithm \ref{alg:two-phase-bq}) being determined by 
trapezoidal rule with $10^5$ uniformly-spaced grid points (without noise).  Error bars in our plots indicate $\pm 0.5$ standard deviation with respect to the 100 trials.


{\bf \noindent Varying the split size.} In our theoretical analysis, we let MVS-MC use half of the samples for each component (MVS and MC).  However, in practice, it may be beneficial to allow different splits.  In our experiments, we additionally explore the effect of the split fraction, i.e., the fraction allocated to the MVS part, and plot the final error with respect to that fraction.  A split of 0 corresponds to MC, and a split of 1 corresponds to MVS.

\subsection{Results} \label{sec:results}
\begin{figure}[t]
\centering
    \begin{subfigure}{\columnwidth}
        \centering
        \includegraphics[width=\sizecustom]{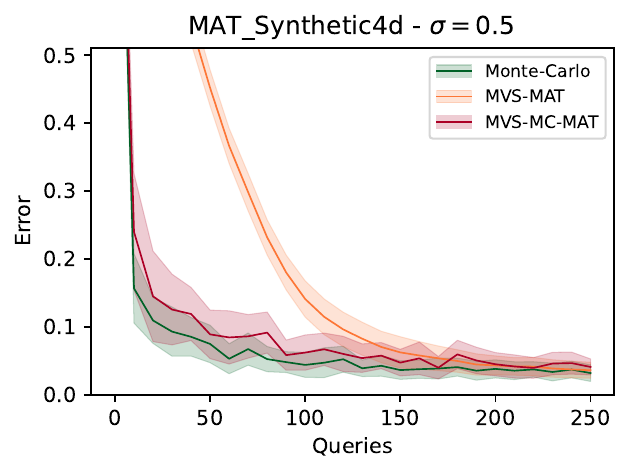}
        \includegraphics[width=\sizecustom]{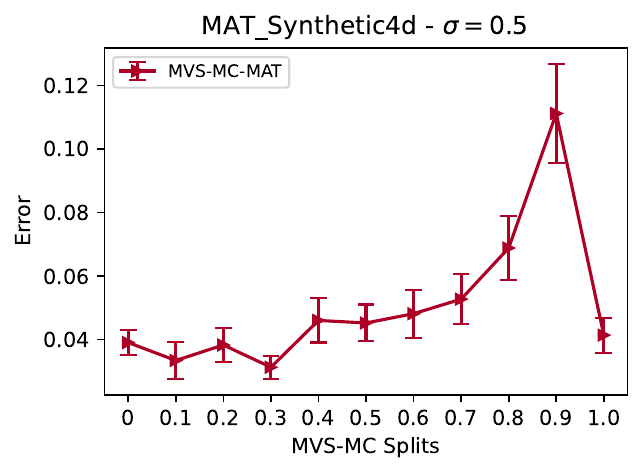}
        \includegraphics[width=\sizecustom]{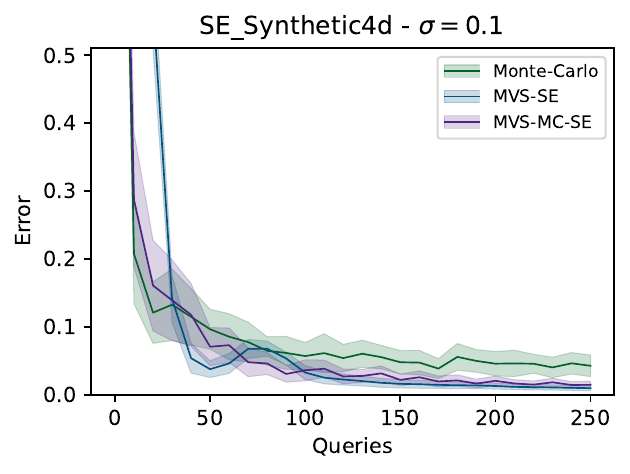}
        \includegraphics[width=\sizecustom]{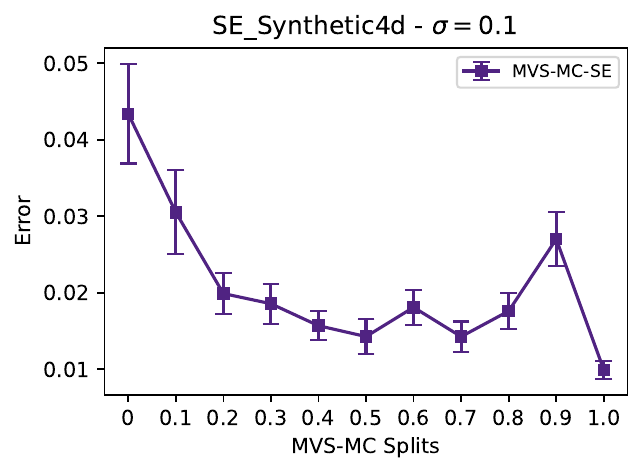}
    \end{subfigure}
    

    \caption{Results for synthetic functions. \label{fig:synthetic}}
    \vspace*{-1.5ex}
\end{figure}

We provide a selection of our results here, and give additional results in Appendix \ref{app:addexp}.   

\subsubsection{Synthetic Kernel-Based Functions}
We first simulate on a set of synthetic functions following the method of \citep{Jan20}, where each function is constructed by sampling $m=30d$ points, $\xvhat_1\ldots\xvhat_m$, uniformly on $[0,1]^d$, and $\ahat_1\ldots\ahat_m$ uniformly on $[-1,1]$.  The function is then defined as $f(\xv) = \sum_{i=1}^{m} \ahat_i k(\xvhat_i, \xv)$.  The length-scale and $\nu$ (in the case of the Mat\'ern kernel) are set to be fixed as $0.2$ and $3/2$ respectively.

We let MVS and MVS-MC know the kernel hyperparameters exactly (i.e., they match the ones used to produce the functions).  Some results for $d =4$ and $\sigma \in \{0.1,0.5\}$ are shown in Figure \ref{fig:synthetic}, and further $(d,\sigma)$ pairs are shown in Appendix \ref{app:addexp}.   We observe that MC performs well at the higher noise level, but performs poorly at the lower noise level, which aligns with the theory.  At the final time step, MVS in fact performs well in both cases, though it can perform poorly in the low-query regime (e.g., Mat\'ern 4D).  While MVS-MC is not universally best, it is generally able to capture the benefits of both MVS and MC in this experiment.

\subsubsection{Benchmark Functions and Sensor Measurement Function}
\begin{figure}[t]
\centering
    \begin{subfigure}{\columnwidth}
        \centering
        \includegraphics[width=\sizecustom]{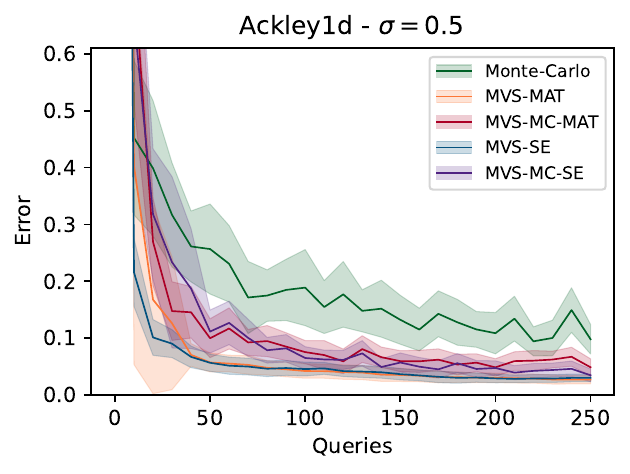}
        \includegraphics[width=\sizecustom]{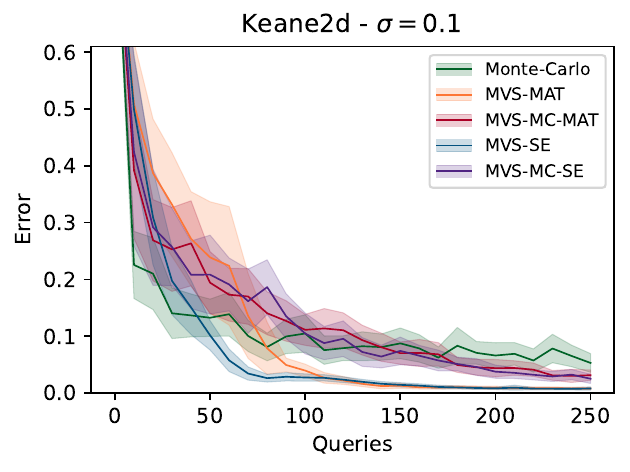}
        \includegraphics[width=\sizecustom]{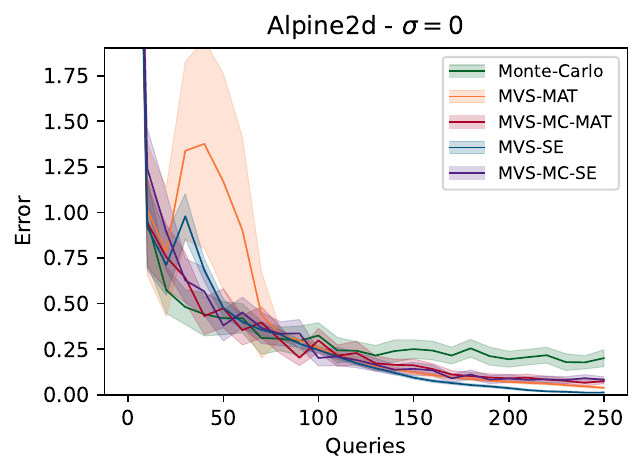}
        \includegraphics[width=\sizecustom]{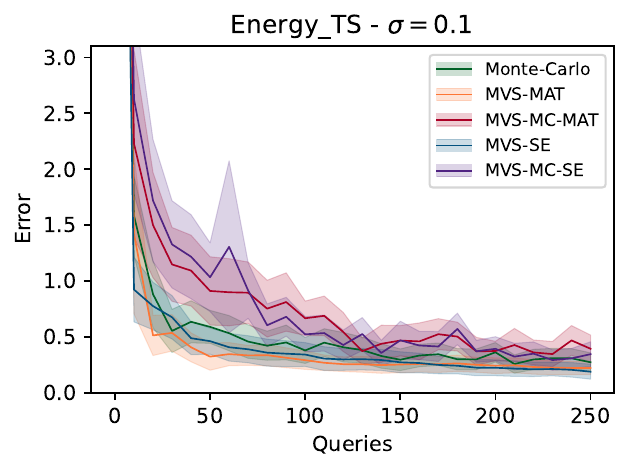}
    \end{subfigure}
    
    \vskip 0.1in
    
    \begin{subfigure}{\columnwidth}
        \centering
        \includegraphics[width=\sizecustom]{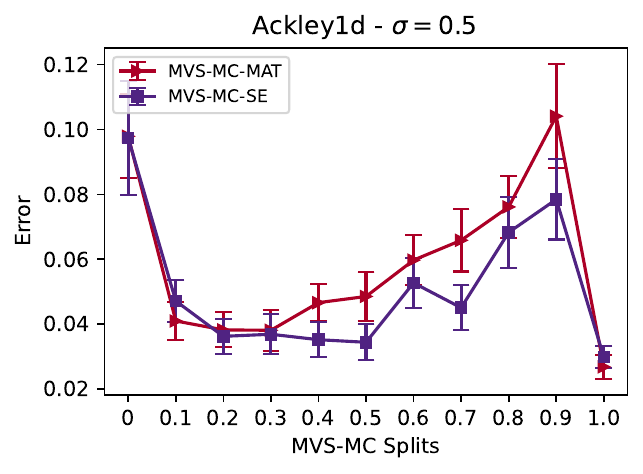}
        \includegraphics[width=\sizecustom]{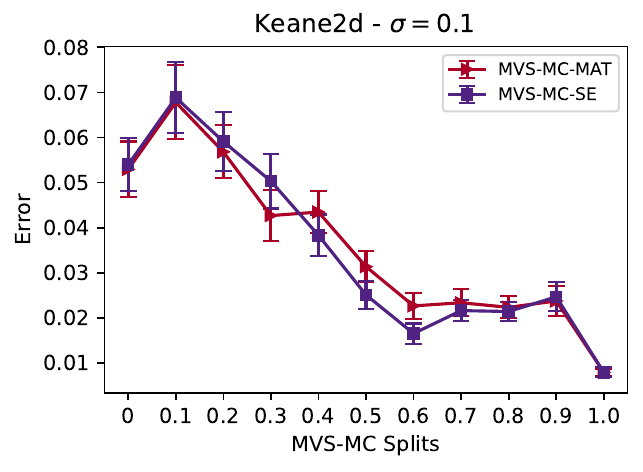}
        \includegraphics[width=\sizecustom]{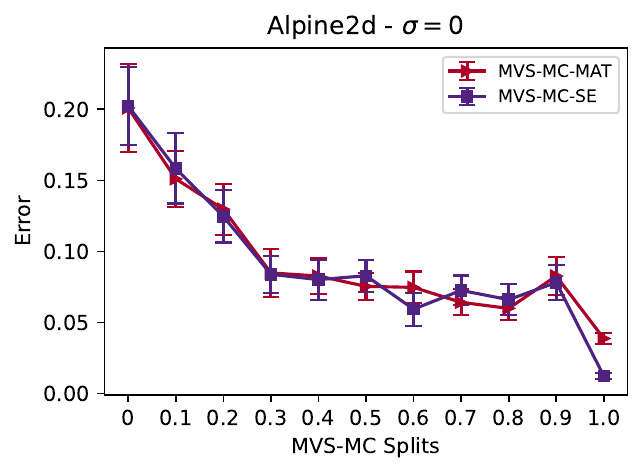}
        \includegraphics[width=\sizecustom]{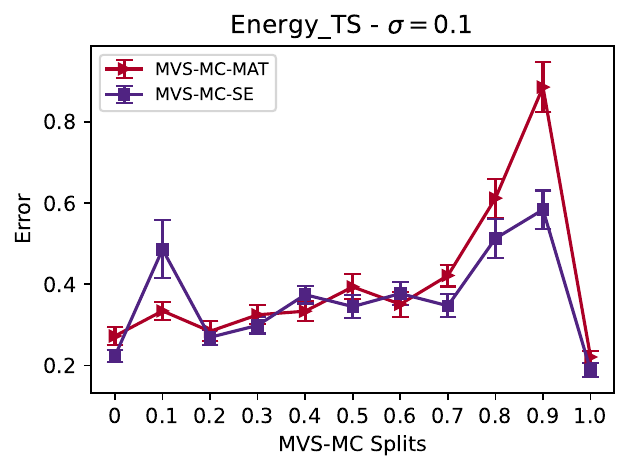}
    \end{subfigure}
    
    \caption{Results for benchmark and real data functions. \label{fig:benchmark}}
\end{figure}

We consider a variety of well-known benchmark functions, namely, Ackley, Alpine, Gramacy-Lee, Griewank and Keane; see \citep{SFU_Funcs} for the descriptions.  We also consider an experiment for sensor measurement data, which is described in Appendix \ref{app:addexp}. 
A small selection of the results are shown in Figure \ref{fig:benchmark}, and the rest are shown in Appendix \ref{app:addexp}.

These results again indicate that MC is better suited to higher noise levels, but they provide a somewhat less clear picture for MVS and the ideal MVS-MC split.  In general, the error as a function of the split fraction can increase, decrease, or exhibit ``U-shaped'' behavior, though we again found the choice of 0.5 to usually be a good one (even if not always optimal). 

Notably, in both the synthetic and benchmark experiments, the error can drop suddenly at $1.0$, as this is where the MC component is no longer used and MVS-MC simply becomes MVS.  This indicates that the theoretical benefits of MVS-MC over MVS may not always be observed in practice (e.g., because the theory hides important constant factors, or only captures the behavior for very large $T$).  It may be of interest to seek refined theory addressing these findings in future work.


    
        
    



\section{Conclusion}

We have developed a framework for relating average-case quadrature error with $L^2$-function approximation error, allowing us to derive a number of both existing and new results.  In addition, we explored algorithm-independent lower bounds with greater generality and/or distinct proofs compared to existing ones.  In future work, it may be of interest to address some of the remaining gaps, such as those between the upper and lower bounds for the SE kernel (see Table \ref{tab:compare}).




\bibliographystyle{myIEEEtran}
\bibliography{bqref}

\newpage
\onecolumn 

{\huge \bf \centering Supplementary Material \par}
\bigskip
{\large \bf \centering \bf Order-Optimal Error Bounds For Noisy Kernel-Based Bayesian Quadrature  \par}

\medskip
{\large \centering Xu Cai, Chi Thanh Lam, and Jonathan Scarlett \par}
\medskip

\appendix

\section{Connections Between Mat\'ern RKHS and Sobolev Class} 
\label{sec:sobolev}

We briefly summarize some definitions and properties of the Sobolev function class, as stated and used in the work \citep{Wyn21} that we will later build on.

For integer-valued $s$, the Sobolev function class of order $s$ on a domain $D$ is defined as
\begin{equation}
    \sob(D) = \Big\{f\in L^2(D): \sum_{\balpha \,:\, |\balpha| \le s} \|\partial^{\balpha}f\|_{L^2} < \infty\Big\}, \label{eq:sobolev_int}
\end{equation}
and the associated norm is given by
\begin{equation}\label{eq:sobnorm_int}
    \|f\|_{\sob} = \Big( \sum_{|\balpha| \le s} \|\partial^{\balpha}f\|_{L^2}^2 \Big)^{1/2},
\end{equation}
where each $\balpha$ is a $d$-dimensional multi-index with $|\balpha| = \sum_{i=1}^{d}\alpha_i$, and  $\partial^{\balpha} f(\xv)$ is the weak derivative of order $\balpha$.  For a fractional (i.e., non-integer) order $s\in\RR$ with $s\ge\frac{d}{2}$ (i.e., $\nu>0$), the Sobolev function class of order $s$ is defined as
\begin{equation}
    \sob(D) = \Big\{f\in L^2(D): \int_{\xv\in D} (1+\|\xv\|_2^2)^s|\fhat(\bxi)|^2 d\xv < \infty\Big\}, \label{eq:sobolev_frac}
\end{equation}
where $\fhat(\bxi)$ is the Fourier transform of $f(\xv)$.  The associated norm is given by
\begin{equation}\label{eq:sobnorm_frac}
    \|f\|_{\sob} = \Big( \int_{\xv\in D} (1+\|\xv\|_2^2)^s|\fhat(\bxi)|^2 d\xv \Big)^{1/2}.
\end{equation}

In the following, we will use $\sob$ as a shorthand for $\sob(D)$.  Here we state a known result that characterizes the equivalence between Sobolev space and RKHS of Mat\'ern kernels.

\begin{lem} \label{lem:Sobolev-Matern-RKHS-equivalence}
   {\em (Sobolev Space \& RKHS of Mat\'ern kernels \citep[Prop.~3.3]{Tec20})}
   Let $\mat$ be the RKHS of the Mat\'ern-$\nu$ kernel on $D=[0,1]^d$ (with any fixed positive length scale), and let $s=\nu+d/2$.  Then, the Mat\'ern RKHS is norm-equivalent to the Sobolev space $W^s_2$. That is, $\mat=W^s_2$, and there exist constants $c_1,c_2>0$ such that for any $f\in\mat$ (or equivalently, $f\in W^s_2$) we have
   \begin{equation}
       c_1\| f \|_{\sob} \leq \| f \|_{\mat} \leq c_2\| f \|_{\sob}.
   \end{equation}
\end{lem}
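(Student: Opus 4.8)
The plan is to reduce the statement to the classical Fourier-analytic description of the Matérn RKHS on all of $\RR^d$ and then transfer it to the cube $D=[0,1]^d$ via a Sobolev extension operator. First I would recall that the Matérn-$\nu$ kernel is stationary with spectral density (Fourier transform) satisfying $\hat k(\bomega) \asymp (1+\|\bomega\|^2)^{-(\nu+d/2)} = (1+\|\bomega\|^2)^{-s}$, with implied constants depending only on $\nu$, $d$, and the length scale. By the standard Fourier characterization of the RKHS of a stationary kernel, a function $g\in L^2(\RR^d)$ lies in the RKHS $\Hc(\RR^d)$ if and only if $\int |\hat g(\bomega)|^2/\hat k(\bomega)\,d\bomega < \infty$, and this integral equals $\|g\|_{\Hc(\RR^d)}^2$. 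Combining the two displays gives $\|g\|_{\Hc(\RR^d)}^2 \asymp \int |\hat g(\bomega)|^2 (1+\|\bomega\|^2)^s\,d\bomega$, which is a constant multiple of the squared Bessel-potential Sobolev norm $\|g\|_{H^s(\RR^d)}^2$; for integer $s$ this is in turn equivalent to the norm in \eqref{eq:sobnorm}, since by Plancherel $\|\partial^\alpha g\|_{L^2}^2 = \int |\bomega^\alpha|^2 |\hat g(\bomega)|^2\,d\bomega$ and $\sum_{|\alpha|\le s} |\bomega^\alpha|^2 \asymp (1+\|\bomega\|^2)^s$.

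Next I would pass from $\RR^d$ to the bounded domain $D$ using two ingredients: (i) Aronszajn's restriction theorem \cite{Aro50}, which says the RKHS of the kernel restricted to $D$ consists exactly of the restrictions $f=g|_D$ of $g\in\Hc(\RR^d)$, with $\|f\|_{\Hc(D)} = \inf\{\|g\|_{\Hc(\RR^d)} : g|_D = f\}$; and (ii) the existence of a bounded linear extension operator $E\colon \sob(D)\to \sob(\RR^d)$ for the cube, so that $\|f\|_{\sob(D)} \le \|Ef\|_{\sob(\RR^d)} \le C\|f\|_{\sob(D)}$. The upper bound $\|f\|_{\Hc(D)} \le c_2\|f\|_{\sob(D)}$ then follows by taking $g=Ef$ in the infimum and invoking the $\RR^d$ equivalence, while the lower bound $c_1\|f\|_{\sob(D)} \le \|f\|_{\Hc(D)}$ follows because every extension $g$ of $f$ satisfies $\|f\|_{\sob(D)} \le \|g\|_{\sob(\RR^d)} \asymp \|g\|_{\Hc(\RR^d)}$, so taking the infimum over $g$ on the right gives the claim.

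I expect the only genuinely delicate point to be this bounded-domain step: namely, verifying that Aronszajn's infimum characterization and a single fixed extension operator combine to yield two-sided constants, and checking that $[0,1]^d$ is regular enough (it is a Lipschitz, hence a Sobolev extension, domain) for the Stein/Calderón extension theorem to apply. Since all of these pieces are classical, rather than reproving them I would cite the Fourier characterization of stationary-kernel RKHSs, Aronszajn's restriction theorem \cite{Aro50}, and a standard reference for the Sobolev extension theorem, and then assemble the constants $c_1,c_2$; this is exactly the route underlying \citep[Example 2.6]{Kan18}, which we cite for the precise statement.
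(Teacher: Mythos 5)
Your proposal is correct: the spectral-density characterization of the Mat\'ern RKHS on $\RR^d$, the equivalence of the resulting Bessel-potential norm with the integer-order Sobolev norm in \eqref{eq:sobnorm}, and the transfer to $D=[0,1]^d$ via Aronszajn's restriction theorem combined with a Stein/Calder\'on extension operator is exactly the standard argument behind this equivalence. The paper itself offers no proof of this lemma---it is stated purely as a known result with the citation \citep[Example 2.6]{Kan18}---so your sketch simply fills in the classical reasoning underlying that citation and is consistent with it.
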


\section{Existing Results for the Noiseless Setting} \label{app:existing}

The following lower bound for the noiseless setting (i.e, $\sigma^2 = 0$) is a slight extension of \citep[Thm.~1 \& 3]{Nov88}, where we have generalized their results to fractional Sobolev spaces where $s = \nu + \frac{d}{2}$ may be non-integer.  For completeness, we translate it into our notation and present the proof in Appendix \ref{app:noiseless_lower}.  Here and subsequently, the kernel parameters $\nu,l$, dimension $d$, and RKHS norm $B$ are all treated as constants, and we consider the limit $T \to \infty$.

\begin{thm}\label{thm:noiseless_lower_exist}
    {\em (Noiseless Lower Bound) }
    Consider the noiseless problem setup with constant parameters $(B,\nu,d,l)$ satisfying $\nu + \frac{d}{2} \ge 1$, and time horizon $T$.  Then, we have the following lower bounds for the Mat\'ern kernel in the noiseless setting:
    \begin{enumerate}
        \item For any algorithm (possibly adaptive), the worst-case error satisfies
        \begin{equation*}
            \rwstmat = \Omega(T^{-\frac{\nu}{d}-\frac{1}{2}}).
        \end{equation*}
        \item For any algorithm (possibly adaptive), the average-case error satisfies
        \begin{equation*}
            \ravgmat = \Omega(T^{-\frac{\nu}{d}-1}).
       \end{equation*}
    \end{enumerate}
    Moreover, these lower bounds hold even under the fixed weight function $p(\xv) = 1$. 
\end{thm}

These scaling laws are known to be order-optimal, since matching upper bounds (using non-adaptive algorithms) have been established, e.g., see \citep{Nov88} for a detailed summary. For the average-case criterion, see also Corollary \ref{cor:avg_mat_sig} with $\sigma = 0$.


\section{Proof of Theorem \ref{thm:noiseless_lower_exist} (Noiseless Lower Bounds)}\label{app:noiseless_lower}

While Theorem \ref{thm:noiseless_lower_exist} is already known for Sobolev spaces with integer-valued $s$ \citep{Nov16}, it is useful to present a self-contained proof for the purpose of (i) completeness in handling the non-integer case, and (ii) introducing tools that will be re-used in the noisy setting (Appendix \ref{app:lowerboundproof}).  We generally follow the analysis of \citep{Nov88}, while adapting the notation to match ours, and making some minor adjustments in the analysis.

\subsection{Function Class Construction for the Mat\'ern Kernel} \label{app:matern_class}
We first describe a bounded-support class of functions consisting of multiple bumps.  The following lemma introduces the associated function and some useful properties.

\begin{lem} \label{lem:bounded_bump}
    {\em (Bounded-Support Function Construction \citep[Lem.~5]{Bul11}, \citep[Lem.~4]{Cai20})}
    Let $h(\xv) = \exp\big( \frac{-1}{1-\|\xv\|^2} \big) \openone\{ \|\xv\|_2 < 1 \}$ be the $d$-dimensional bump function, and let $g_0(\xv) = \frac{\epsilon}{h(\bzero)} h\big(\frac{2\xv}{w}\big)$, i.e., a rescaled version of $h$ for some $w > 0$ and $\epsilon > 0$. Then, $g_0$ satisfies the following properties:
    \begin{itemize}[leftmargin=8ex,itemsep=0ex,topsep=0.2ex]
        \item $g_0(\xv) = 0$ for all $\xv$ outside the $\ell_2$-ball of radius $w$ centered at the origin;
        \item $g_0(\xv) \in [0,\epsilon]$ for all $\xv$, and $g_0(\bzero) = \epsilon$.
        \item $\| g_0 \|_{\mat} \le c_1\frac{\epsilon}{h(\bzero)} \big(\frac{1}{w}\big)^{\nu} \|h\|_{\mat}$ when $k$ is the Mat\'ern-$\nu$ kernel on $\RR^d$, where $c_1$ is constant.
    \end{itemize}
\end{lem}

In accordance with this lemma, let $g_0(\xv) = \frac{\epsilon}{h(\bzero)}h(\frac{2\xv}{w})$ be the rescaled bump function with values in $[0, \epsilon]$ and radius $\frac{w}{2}$ (i.e., diameter $w$).  For suitably-chosen $M$, we consider $M$ such bumps with disjoint supports by shifting, and accordingly consider $2^M$ functions, one for each possible sign pattern of the bumps.  Mathematically, all functions of the form  $f(\xv):=\sum_{i=1}^{M}\delta_i g_i(\xv)$ are contained in $\mat$, where $\delta_i\in\{+1, -1\}$ and $g_i(\xv)$ is the shifted version of $g_0(\xv)$.  Since the bumps form a $d$-dimensional grid of step size $w$ in each dimension and the domain is $D = [0,1]^d$, we have
\begin{equation}
    M=\Big\lfloor \frac{1}{w}\Big\rfloor^d. \label{eq:choice_M}
\end{equation}
In fact, since the bumps have spherical support instead of rectangular, we could fit more of them into $[0,1]^d$, but doing so would only impact the constant factors, which we do not seek to optimize.

A 1D example of $f(x)$ is illustrated in Figure \ref{fig:f_matern_1d}.
\begin{figure}[ht]
 \begin{centering}
 \includegraphics[width=0.5\textwidth]{./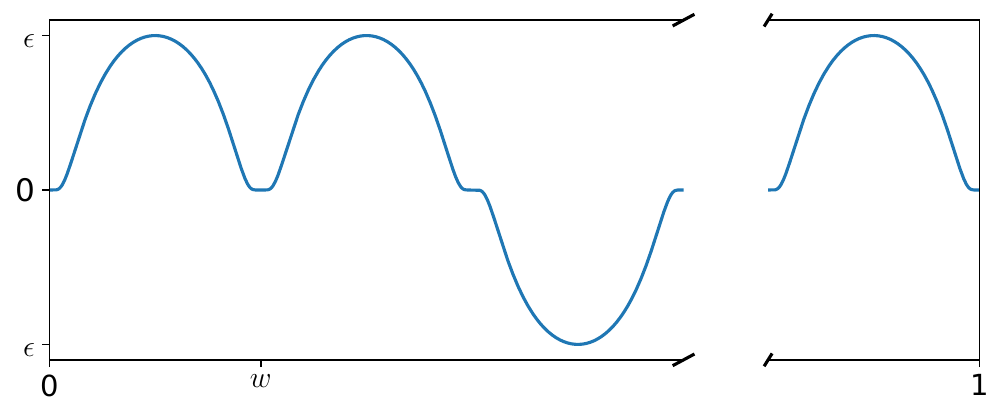}
 \par
 \end{centering}
 \caption{1D illustration of a function in the restricted Mat\'ern function class. \label{fig:f_matern_1d}}
\end{figure}

Letting $I_0:=\int_{\|\xv\|\le \frac{w}{2}} g_0(\xv)d\xv $ denote the integral of $g_0$, we have 
\begin{align}
   I_0 \ge \Omega(w^d \epsilon), \label{eq:int_lower}
\end{align}
which follows from the fact that the integral of $h(\xv)$ is constant, the vertical scaling shrinks the integral by $\epsilon$, and the horizontal scaling shrinks the integral by $\Theta(w^d)$.

To evaluate the RKHS norm of each $f$, we use the equivalence of the RKHS norm of $f$ and the Sobolev norm, as stated in Lemma \ref{lem:Sobolev-Matern-RKHS-equivalence}.   Since each $f$ is a sum of $M$ disjoint-support functions, we can define these disjoint regions as $D_1,\dotsc,D_M$.  First considering the case of integer-valued $s$ (see \eqref{eq:sobnorm_int} for the relevant definition), we have:
\begin{align}
    \|f\|_{\sob} &=  \Big( \sum_{|\balpha| \le s} \|\partial^{\balpha}f(\xv)\|_{L^2}^2 \Big)^{1/2} \nonumber\\
    & = \Big( \sum_{|\balpha| \le s} \Big\|\partial^{\balpha} \sum_{i=1}^{M}\delta_i g_i(\xv)\Big\|_{L^2}^2 \Big)^{1/2} \nonumber\\
    & = \Big( \sum_{|\balpha| \le s} \Big\|\sum_{i=1}^{M}\delta_i \partial^{\balpha} g_i(\xv)\Big\|_{L^2}^2 \Big)^{1/2} \nonumber\\
    & = \Big( \sum_{|\balpha| \le s} \int_{D} \Big|\sum_{i=1}^{M} \delta_i\partial^{\balpha} g_i(\xv)\Big|^2 d\xv \Big)^{1/2} \nonumber\\
    & = \Big( \sum_{|\balpha| \le s} \sum_{i=1}^{M} \int_{D_i} \big| \partial^{\balpha} g_i(\xv)\big |^2 d\xv \Big)^{1/2} \label{eq:disjoint_sum}\\
    & = \Big( \sum_{i=1}^{M} \|g_i(\xv)\|_{W_2^s}^2 \Big)^{1/2} \nonumber\\
    & = \sqrt{M}\|g_0\|_{W_2^s}, \label{eq:rkhs_bound}
\end{align}
where \eqref{eq:disjoint_sum} holds since the regions $D_1,\ldots,D_M$ are disjoint.  

For non-integer values of $s = \nu+\frac{d}{2}$, there always exists $n\in \NN^{+}$ such that $n < s < n+1$, and we can proceed analogously to the analysis in \citep{Hes16}.  Note also that our assumption $\nu + \frac{d}{2} \ge 1$ ensures that $n \ge 1$. 
From the definition of fractional Sobolev norm in \ref{eq:sobnorm_frac}, and using H\"older's inequality, we can ``interpolate'' the fractional Sobolev norm between two integer Sobolev norms (we specify the constants $\lambda$, $\eta$, $p$, and $q$ below):
\begin{align}
    \|f\|^2_{\sob} &= \int_{\xv\in D} (1+\|\xv\|_2^2)^s |\fhat(\bxi)|^2 d\xv \nonumber\\
    &= \int_{\xv\in D} (1+\|\xv\|_2^2)^{s-\lambda}|\fhat(\bxi)|^{2-\eta}\cdot (1+\|\xv\|_2^2)^{\lambda}|\fhat(\bxi)|^{\eta} d\xv \nonumber\\
    &\le \Big(\int_{\xv\in D} (1+\|\xv\|_2^2)^{(s-\lambda)p}|\fhat(\bxi)|^{(2-\eta)p} d\xv \Big)^{\frac{1}{p}}  \cdot \Big(\int_{\xv\in D} (1+\|\xv\|_2^2)^{\lambda q}|\fhat(\bxi)|^{\eta q} d\xv\Big)^{\frac{1}{q}} \label{eq:f_sob_holder_1}\\
    &= \|f\|_{W_2^{n}}^{2n+2-2s} \cdot \|f\|_{W_2^{n+1}}^{2s-2n} \label{eq:f_sob_holder_2}\\
    &= M \cdot \|g_0\|_{W_2^{n}}^{2n+2-2s} \cdot \|g_0\|_{W_2^{n+1}}^{2s-2n} \nonumber \\
    &= O \Big(M \cdot \|g_0\|_{\Hc^{n-\frac{d}{2}}}^{2n+2-2s} \cdot \|g_0\|_{\Hc^{n+1-\frac{d}{2}}}^{2s-2n} \Big) \label{eq:f_sob_normequal} \\
    &= O\Big(M \frac{\epsilon^{2n+2-2s}}{(w^{n-\frac{d}{2}})^{2n+2-2s}} \cdot  \frac{\epsilon^{2s-2n}}{(w^{n+1-\frac{d}{2}})^{2s-2n}} \Big) \label{eq:f_rkhsnorm_sub} \\
    &= O\Big(\frac{M\epsilon^2}{w^{2s-d}} \Big) \label{eq:f_rkhsnorm_sub2}  \\
    &= O\Big(\frac{M\epsilon^2}{w^{2\nu}} \Big),\label{eq:f_sobnorm_upper_square}
\end{align}
where:
\begin{itemize}
    \item In \eqref{eq:f_sob_holder_1}--\eqref{eq:f_sob_holder_2}, we have chosen $\lambda=(n+1)(s-n)$, $\eta=2(s-n) \in (0,2)$, $p=\frac{1}{n+1-s} > 1$ and $q=\frac{1}{s-n} > 0$, so that the pre-condition of H\"older's inequality holds ($\frac{1}{p}+\frac{1}{q}=1$).
    \item In addition, \eqref{eq:f_sob_holder_2} uses the identities $(s-\lambda)p = n$, $(2-\eta)p=2$, and $\lambda q = n+1$, which follow from direct substitutions and simplifications.
    \item \eqref{eq:f_sob_normequal} uses the norm equivalence in Lemma \ref{lem:Sobolev-Matern-RKHS-equivalence}.
    \item \eqref{eq:f_rkhsnorm_sub} substitutes the RKHS norm bound for $g_0$, i.e., $\|g_0\|_{\mat} = O(\frac{\epsilon}{w^{\nu}})$, as demonstrated in Lemma \ref{lem:bounded_bump}.
    \item \eqref{eq:f_rkhsnorm_sub2} cancels $(w^{n-\frac{d}{2}})^{2n-2s}$ with $(w^{n-\frac{d}{2}})^{2s-2n}$, so that the exponent to $w$ becomes $(2n-d) + (2s-2n) = 2d-s$.
    \item \eqref{eq:f_sobnorm_upper_square} uses $s = \nu + \frac{d}{2}$.
\end{itemize}
Again using the norm equivalence between Sobolev function and the Mat\'ern RKHS, we can bound the RKHS norm of $f$ as follows in view of \eqref{eq:f_sobnorm_upper_square} and \eqref{eq:choice_M}:
\begin{equation} \label{eq:f_rkhsbound}
    \|f\|_{\mat} = O\Big(\frac{\epsilon}{M^{\frac{\nu}{d}+\frac{1}{2}}}\Big).
\end{equation}
Equating the right-hand side of $\eqref{eq:f_rkhsbound}$ with $B$ and rearranging, it follows that $\|f\|_{\mat} \le B$ with a choice of $\epsilon$ satisfying
\begin{equation}\label{eq:eps_M_mat}
    \epsilon = \Theta\Big(\frac{B}{M^{\frac{\nu}{d}+\frac{1}{2}}}\Big).
\end{equation}

\subsection{Completion of the Proof of Theorem \ref{thm:noiseless_lower_exist}} \label{app:mat_lower}
The finite function (sub-)class described in Section \ref{app:matern_class} is denoted by $\matsub \subset \mat$.  We first consider the worst-case criterion.  Suppose there is an algorithm estimating $I$ for a function $f \in \matsub$, and the algorithm has a budget of $T = \frac{M}{2}$.  The best this algorithm can do is determine the signs of $\frac{M}{2}$ out of the $M$ bumps.  For the unexplored regions, being in the worst-case setting, we can make an adversarial argument: The adversary either makes all of these bumps negative or all of them positive, whichever leads to a higher error.  This leads to two feasible values of $I$ that differ by $\Theta(M I_0)$, which in turn implies that adversarially choosing the worse of the two must give $|I-\Ihat| = \Omega(M I_0)$.  Hence, when $T = \frac{M}{2}$, we have
\begin{align}
    \rwstmat &\ge \Omega\Big( \frac{M}{2}\cdot I_0 \Big) \label{eq:wst_lower1} \\
    &= \Omega(M w^d \epsilon) \label{eq:wst_lower2}\\
    &= \Omega(\epsilon), \label{eq:wst_lower3}
\end{align}
where \eqref{eq:wst_lower2} uses \eqref{eq:int_lower}, and \eqref{eq:wst_lower3} uses \eqref{eq:choice_M}.  Substituting \eqref{eq:eps_M_mat} and $T = \frac{M}{2}$ into \eqref{eq:wst_lower3}, we obtain
\begin{equation*}
    \rwstmat = \Omega\Big(\frac{B}{(2T)^{\frac{\nu}{d}+\frac{1}{2}}}\Big) = \Omega\Big(\frac{1}{T^{\frac{\nu}{d}+\frac{1}{2}}}\Big),
\end{equation*}
which establishes the desired worst-case lower bound.

For the average-case criterion, we proceed slightly differently.  We first note that the supremum in $\ravgmat = \sup_{f\in \mat} \EE\big[ |I - \Ihat| \big]$ can be lower bounded by the average with respect to $f$ drawn uniformly from $\matsub$ (and also still averaged over any randomness in the algorithm).  Again considering a budget of $T=\frac{M}{2}$, there must be at least $\frac{M}{2}$ regions with no samples, and for each such region, the associated integral is $+I_0$ or $-I_0$ with equal probability.  Thus, conditioned on the observed samples, the posterior distribution of $I$ can be expressed as a sum of two independent random variables, one of which is of the form 
\begin{equation}\label{eq:wstavg_error1}
    \Sc_{\frac{M}{2}} = \sum_{i=1}^{M/2}\delta_i \cdot I_0,
\end{equation}
where for notational convenience, we assume (without loss of generality) that it is the first $\frac{M}{2}$ regions that have no samples.  That is, $\Sc_{\frac{M}{2}}$ is a random variable expressing the posterior uncertainty in the $\frac{M}{2}$ non-sampled regions.  There may also be additional uncertainty due {\em more than $\frac{M}{2}$ regions} being non-sampled, but for proving a lower bound, it suffices to consider the  case that there is no additional uncertainty beyond $\Sc_{\frac{M}{2}}$.

Since $\delta_i$ is equiprobable on $\{+1,-1\}$, we have $\EE[\Sc_{\frac{M}{2}}] = 0$.  Moreover, the variance of \eqref{eq:wstavg_error1} is given by 
\begin{equation}\label{eq:avg_variance}
\sigma_{\Sc_{\frac{M}{2}}}^2 = \sum_{i=1}^{M/2}{\rm Var}(\delta_i \cdot I_0) = \frac{M}{2}I_0^2,
\end{equation}
since $\delta_1 I_0,\cdots,\delta_{\frac{M}{2}} I_0$ are i.i.d.~random variables each having variance $I_0^2$.
In the limit of large $M$, if we consider the normalized sum 
$$
Z_{\Sc_{\frac{M}{2}}} = \frac{\Sc_{\frac{M}{2}}-\EE[\Sc_{\frac{M}{2}}]}{\sigma_{\Sc_{\frac{M}{2}}}},
$$
then by the {\em central limit theorem} (CLT) \citep[Ch.~VIII]{Fel57}, as $M\rightarrow \infty$, $Z_{\Sc_{\frac{M}{2}}}$ convergences in distribution to the standard normal distribution $\Nc(0,1)$.  In other words  $\Sc_{\frac{M}{2}}$ is asymptotically distributed as $\Nc(0,\sigma_{\Sc_{\frac{M}{2}}}^2)$.  When estimating a Gaussian (or an asymptotically Gaussian quantity), an average absolute error proportional to the standard deviation is unavoidable; for instance, if we know that $Z \sim \Nc(0,\sigma_Z^2)$, then our best guess of $Z$ is $\Zhat = 0$, but this still gives $\EE[|Z - \Zhat |] = \EE[|Z|] = \sigma \sqrt{\frac{2}{\pi}}$. 
Thus, the lower bound for the average case regret is $\ravgmat\ge \Omega(\sigma_{\Sc_{\frac{M}{2}}}) = \Omega(\sqrt{M}I_0)$, and by substituting \eqref{eq:choice_M}, \eqref{eq:int_lower}, \eqref{eq:eps_M_mat}, we obtain
\begin{equation*}
    \ravgmat = \Omega\Big(\frac{\epsilon}{\sqrt{M}}\Big) = \Omega\Big(\frac{1}{T^{\frac{\nu}{d}+1}}\Big).
\end{equation*}

\section{Unified Derivation of Both Terms in Theorem \ref{thm:noisy_lower} (Lower Bound)} \label{app:lowerboundproof}

In this appendix, we show that analyzing a suitably chosen hard subset of functions provides an alternative derivation of both terms in Theorem \ref{thm:noisy_lower} in a unified manner.  Specifically, we consider the same class $\matsub$ that was used in the noiseless case in Appendix \ref{app:noiseless_lower}, but the analysis requires substantial modifications and is much more technical.  We proceed in several steps.

{\bf Step 1: Reduction to a simpler problem.} Similarly to the noiseless setting above, we start by lower bounding $\ravgsigmat = \sup_{f\in \mat} \EE\big[ |I - \Ihat| \big]$ by $\EE\big[ |I - \Ihat| \big]$, where now the average is taken over three sources of randomness: the uniform distribution over $2^M$ functions in $\matsub$ (with parameters $M$ and $\epsilon$), the randomization in the algorithm, and the noise.  Moreover, once $f$ is randomized, letting the algorithm be deterministic is without loss of optimality, so we assume this is the case (this is simply an instantiation of Yao's minimax principle).

We claim that in order to attain a lower bound with the preceding prior, it suffices to attain a lower bound in the following simplified setup:
\begin{itemize}
    \item There exists an unknown collection of signs $S_i \in \{-1,+1\}$ for $i=1,\dotsc,M$, each taking either value independently with probability $\frac{1}{2}$.
    \item The goal of the algorithm is to estimate $I_{0} \sum_{i=1}^M S_i$, where $I_0$ is the integral of a single (positive) bump in the original BQ problem.
    \item At time $t$, the algorithm may select an index $i_t$ (possibly in an adaptive manner) and observe $y_t = \epsilon S_{i_t} + \epsilon_t$ with independent noise $\epsilon_t \sim N(0,\sigma^2)$.
\end{itemize}
We observe that this problem is exactly equivalent to our original problem in the case that the BQ algorithm is constrained to select midpoints of the bumps, where the bump takes its highest absolute value (i.e., $\epsilon$).  Intuitively, this is without loss of optimality because such points have the highest signal, and are thus the most informative.  

To make this more formal, we note that sampling a point with absolute value $|f(x)| = c \in (0,\epsilon)$ gives $y_t = c S_{i_t} + \epsilon_t$ with $S_{i_t}$ being the associated bump sign, and this is information-theoretically equivalent to observing $y_t \frac{\epsilon}{c} = \epsilon S_{i_t} + \frac{\epsilon}{c} \epsilon_t$.  Since $\frac{\epsilon}{c} > 1$, this simply amounts to still observing $\epsilon S_{i_t}$, but with more noise, and this extra noise could always be artificially added anyway.  Hence, sampling at the midpoint is without loss of generality or optimality.\footnote{The locations with $|f(x)| = 0$ carry no information, so we can assume without loss of generality that they are never sampled.}

We proceed by studying this simplified problem.

{\bf Step 2: Establish hardness of estimating most sign values.} As a stepping stone to characterizing the difficulty of estimating $I_{0} \sum_{i=1}^M S_i$, we provide an auxiliary result on the hardness of estimating $\Sv = (S_1,\dotsc,S_M)$ to within a certain Hamming distance.  Although estimating each individual sign is a harder problem than estimating their sum (which may appear concerning from the perspective of proving a lower bound), this will turn out to be a useful intermediate step.  We let $\Svhat = (\Shat_1,\dotsc,\Shat_M)$ denote an estimate of $\Sv$ based on the queries.

\begin{lem} \label{lem:Hamming}
    In the simplified setup with discrete queries $i_t \in \{1,\dotsc,M\}$ (rather than $x_t \in D$), consider any (possibly adaptive) deterministic algorithm that produces an estimate $\Svhat$ of $\Sv$.  Then, there exists a sufficiently small constant $c$ such that we require a time horizon of
    \begin{equation}
        T \ge c \cdot M \cdot \max\bigg\{ 1, \frac{\sigma^2}{\epsilon^2} \bigg\} \label{eq:Fano_claim}
    \end{equation}
    in order to obtain $\EE\big[ d_{\rm H}(\Sv,\Svhat) \big] \le \frac{M}{8}$.  Here $d_{\rm H}$ denotes the Hamming distance, and the expectation is with respect to $\Sv$ uniform on the $2^M$ possibilities, as well as the random noise.
\end{lem}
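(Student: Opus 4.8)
The plan is to prove Lemma~\ref{lem:Hamming} by a Fano-type argument, combined with a per-query mutual information bound that exploits the fact that each observation $y_t$ reveals information about $\Sv$ only through the single coordinate $S_{i_t}$, and only through a Gaussian channel. Two preliminary reductions come for free. First, I would assume without loss of generality that $\Shat_i \in \{-1,+1\}$ for every $i$, since rounding an arbitrary estimate coordinatewise to $\{-1,+1\}$ cannot increase the Hamming distance to $\Sv$. Second, since the algorithm is deterministic, $\Svhat$ is a (measurable) function of the observation vector $Y^T := (y_1,\dots,y_T)$; hence $\Sv \to Y^T \to \Svhat$ is a Markov chain and the data-processing inequality gives $I(\Sv;\Svhat) \le I(\Sv;Y^T)$. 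The proof then consists of an upper bound on $I(\Sv;Y^T)$ (the information the observations can carry) and a lower bound on $I(\Sv;\Svhat)$ (the information needed to achieve small expected Hamming error), which together pin down $T$.

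For the upper bound, I would use the chain rule $I(\Sv;Y^T) = \sum_{t=1}^T I(\Sv; y_t \mid Y^{t-1})$. Conditioned on any realization $Y^{t-1} = h$, the queried index $i_t = i_t(h)$ is fixed (determinism), and $y_t = \epsilon S_{i_t} + z_t$ depends on $\Sv$ only through $S_{i_t}$; thus $\Sv \to S_{i_t} \to y_t$ is a Markov chain given $Y^{t-1}=h$, so $I(\Sv; y_t \mid Y^{t-1}=h) \le I(S_{i_t}; y_t \mid Y^{t-1}=h)$. For the latter I would apply the ``golden formula'' bound $I(S;Y) \le \max_{s} D\big(P_{Y\mid S=s}\,\|\,Q\big)$, valid for an arbitrary reference measure $Q$ and \emph{any} prior on $S$, taking $Q = N(0,\sigma^2)$; since $D\big(N(\epsilon s,\sigma^2)\,\|\,N(0,\sigma^2)\big) = \epsilon^2/(2\sigma^2)$ for $s\in\{\pm1\}$, and since $S_{i_t}$ is binary, this yields $I(S_{i_t};y_t\mid Y^{t-1}=h) \le \min\{\log 2,\ \epsilon^2/(2\sigma^2)\}$. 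Averaging over $Y^{t-1}$ and summing,
\begin{equation*}
  I(\Sv;Y^T) \;\le\; T\,\min\Big\{\log 2,\ \tfrac{\epsilon^2}{2\sigma^2}\Big\} \;\le\; \frac{(\log 2)\,T}{\max\{1,\ \sigma^2/\epsilon^2\}}.
\end{equation*}

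For the lower bound, I would show that $\EE[\dH(\Sv,\Svhat)] \le M/8$ forces $I(\Sv;\Svhat) = \Omega(M)$. Writing $p_i := \PP(\Shat_i \neq S_i)$, the binary version of Fano's inequality gives $H(S_i \mid \Shat_i) \le H_{\mathrm b}(p_i)$, where $H_{\mathrm b}$ is the binary entropy. Since $\Sv$ is uniform on $\{-1,+1\}^M$ with independent coordinates,
\begin{equation*}
  I(\Sv;\Svhat) = M\log 2 - H(\Sv\mid\Svhat) \;\ge\; M\log 2 - \sum_{i=1}^M H(S_i\mid\Shat_i) \;\ge\; M\log 2 - \sum_{i=1}^M H_{\mathrm b}(p_i),
\end{equation*}
using that conditioning reduces entropy. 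By concavity of $H_{\mathrm b}$, $\tfrac1M\sum_i H_{\mathrm b}(p_i) \le H_{\mathrm b}\big(\tfrac1M\sum_i p_i\big) = H_{\mathrm b}\big(\tfrac1M\EE[\dH(\Sv,\Svhat)]\big)$, and since $\EE[\dH(\Sv,\Svhat)] \le M/8$ and $H_{\mathrm b}$ is increasing on $[0,\tfrac12]$, this is at most $H_{\mathrm b}(\tfrac18) < \log 2$. Hence $I(\Sv;\Svhat) \ge M\big(\log 2 - H_{\mathrm b}(\tfrac18)\big)$. Combining the three displays via $I(\Sv;\Svhat)\le I(\Sv;Y^T)$ gives $M\big(\log 2 - H_{\mathrm b}(\tfrac18)\big) \le (\log 2)\,T/\max\{1,\sigma^2/\epsilon^2\}$, which rearranges to \eqref{eq:Fano_claim} with the explicit constant $c = 1 - H_{\mathrm b}(\tfrac18)/\log 2 > 0$.

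The step I expect to be the main obstacle is the per-query information bound, because of adaptivity: the index $i_t$ depends on the past observations, and after conditioning on the history the coordinate $S_{i_t}$ is generally no longer uniform, so one cannot directly invoke a mutual-information formula for a uniform binary input to an AWGN channel. The ``golden formula'' bound with the fixed reference measure $Q = N(0,\sigma^2)$ is precisely what circumvents this, since it holds for \emph{every} prior on $S_{i_t}$ and every choice of $i_t$; the remaining ingredients (binary Fano, subadditivity of conditional entropy, concavity of $H_{\mathrm b}$, and the Gaussian KL computation) are routine.
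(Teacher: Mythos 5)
Your proof is correct, and while it follows the same information-theoretic template as the paper (data processing, a chain rule over time, and the per-query Gaussian KL bound $\epsilon^2/(2\sigma^2)$), the two key steps are executed by genuinely different means. For the converse step, the paper invokes Fano's inequality with approximate recovery, bounding the number of sign patterns within Hamming distance $M/4$ of a fixed vector via a binomial-coefficient estimate, and then converts ``$\dH(\Sv,\Svhat)\ge M/4$ with probability $>\frac{1}{2}$'' into $\EE[\dH(\Sv,\Svhat)]>M/8$; you instead work with the expectation directly through a coordinate-wise Fano bound, $I(\Sv;\Svhat)\ge M\log 2-\sum_i H_2(p_i)$, followed by Jensen's inequality applied to the concave binary entropy, which targets the stated expected-Hamming-distance criterion without the intermediate high-probability statement. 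For the low-noise regime $\sigma^2/\epsilon^2<1$, the paper abandons the information bound and uses a separate coverage argument (with fewer than $cM$ queries a constant fraction of indices is never sampled, so those signs can only be guessed); you absorb this case by capping the per-query information at $\log 2$, i.e.\ $\min\{\log 2,\epsilon^2/(2\sigma^2)\}$ per query, which handles both regimes in a single computation. Your treatment of adaptivity --- conditioning on the history $Y^{t-1}$ and bounding $I(S_{i_t};y_t\mid Y^{t-1}=h)\le\max_s D(P_{y\mid S=s}\,\|\,Q)$ with the fixed reference $Q=N(0,\sigma^2)$, valid even though the conditional prior on $S_{i_t}$ is no longer uniform --- is sound and plays the same role as the paper's conditioning on the query index $I_t$ (citing standard steps from the cited survey). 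The net effect is a somewhat cleaner, unified argument that also yields an explicit constant $c=1-H_2(1/8)/\log 2$, whereas the paper's constant is left implicit.
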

\begin{proof}
    By a standard variant of Fano's inequality with approximate recovery (e.g., see \citep[Thm.~2]{Sca19b}), we have
    \begin{equation}
        \PP\bigg[ d_{\rm H}(\Sv,\Svhat) \ge \frac{M}{4} \bigg] \ge 1 - \frac{I(\Sv;\Svhat) + 1}{\log(2^M) - \log(N_{\max})}, \label{eq:Fano1}
    \end{equation}
    where $I(\cdot;\cdot)$ denotes the mutual information \citep{Cov01}, and $N_{\max}$ is the number of vectors in $\{-1,1\}^M$ within Hamming distance $\frac{M}{4}$ of any fixed vector (e.g., the all-ones vector).  We have $N_{\max} = \sum_{i=0}^{M/4} {M \choose i} \le M {M \choose M/4}$, from which a standard bound on the binomial coefficient gives $N_{\max} \le e^{M H_2(1/4) (1+o(1))}$ with $H_2(q) = q\log\frac{1}{q} + (1-q)\log\frac{1}{1-q}$ being the binary entropy function.  Since $H_2(1/4)$ is strictly smaller than $\log 2$, substitution into \eqref{eq:Fano1} gives
    \begin{equation}
        \PP\bigg[ d_{\rm H}(\Sv,\Svhat) \ge \frac{M}{4} \bigg] \ge 1 - \frac{I(\Sv;\Svhat) + 1}{ \Theta(M) }. \label{eq:Fano2}
    \end{equation}
    Moreover, following standard steps, we can upper bound the numerator as follows \citep[Sec.~3]{Sca19b}:
    \begin{itemize}
        \item Use the data processing inequality to write $I(\Sv;\Svhat) \le I(\Sv;\Iv,\Yv)$, with $(\Iv,\Yv)$ being the length-$T$ collection of sampled inputs and observed outputs by the algorithm.
        \item Use the chain rule for mutual information to upper bound $I(\Sv;\Iv,\Yv)$ by a corresponding sum over time indices: $I(\Sv;\Iv,\Yv) \le \sum_{t=1}^T I(\Sv; Y_t | I_t)$.
    \end{itemize}
    To simplify the last expression, we note that given $I_t$, the only entry of $\Sv$ that impacts $Y_t$ is $S_{I_t}$, so we can further write $I(\Sv; Y_t | I_t) \le I(S_{I_t}; Y_t | I_t)$.  
    
    Recall that when $S_{I_t}$ equals some value $s \in \{-1,1\}$, the corresponding observation is $y_t  \sim N(sI_0,\sigma^2)$.  Hence, by the relation between mutual information and KL divergence \citep[Sec.~3.3]{Sca19b}, the preceding mutual information is further upper bounded by the KL divergence between $N(\pm \epsilon, \sigma^2)$ and $N(0,\sigma^2)$ (this is the same regardless of whether $\epsilon$ has a $+1$ or $-1$ coefficient), which is $\frac{\epsilon^2}{2\sigma^2}$.
    
    Substituting the preceding findings back into the preceding inequality $I(\Sv;\Iv,\Yv) \le \sum_{t=1}^T I(\Sv; Y_t | I_t)$, it follows that $I(\Sv;\Iv,\Yv) \le \frac{T \epsilon^2}{2\sigma^2}$.  Hence, if $T < c \cdot \frac{M \sigma^2}{\epsilon^2}$ with a small enough constant $c$, then the right-hand side of \eqref{eq:Fano2} exceeds $\frac{1}{2}$.  The fact that $d_{\rm H}(\Sv,\Svhat) \ge \frac{M}{4}$ with probability exceeding $\frac{1}{2}$ then implies that $\EE\big[ d_{\rm H}(\Sv,\Svhat) \big] > \frac{M}{8}$.
    
    The preceding argument proves the lemma when $\frac{\sigma^2}{\epsilon^2} \ge 1$.  On the other hand, if $\frac{\sigma^2}{\epsilon^2} < 1$, then the requirement in \eqref{eq:Fano_claim} simply becomes $T \ge cM$, which we claim to be trivially necessary for attaining $\EE\big[ d_{\rm H}(\Sv,\Svhat) \big] \le \frac{M}{8}$, as long as $c \le \frac{3}{4}$.  To see this, note that with any smaller number of samples, a quarter (or more) of the indices cannot even be sampled once.  When this is the case, the algorithm cannot do any better than guessing the corresponding $S_i$ values, getting each one correct with probability $\frac{1}{2}$.
\end{proof}

The contrapositive statement of Lemma \ref{lem:Hamming} is that if $T < c \cdot M \cdot \max\big\{ 1, \frac{\sigma^2}{\epsilon^2} \big\}$, then it must hold that $\EE\big[ d_{\rm H}(\Sv,\Svhat) \big] > \frac{M}{8}$.  Furthermore, by writing the Hamming distance as a sum of indicator function $\openone\{ S_i \ne \Shat_i \}$, the preceding inequality can be written as
\begin{equation}
    \frac{1}{M} \sum_{i=1}^M \PP[ S_i \ne \Shat_i ] > \frac{1}{8}. \label{eq:p_bound1}
\end{equation}

{\bf Step 3: Characterize the posterior uncertainty.} In the argument that follows, we are not directly interested in $\PP[ S_i \ne \Shat_i ]$, but instead $\PP[ S_i \ne \Shat_i \,|\, \Dc]$, where $\Dc = (\Iv,\Yv)$ contains the $T$ pairs of the form $(i_t,y_t)$ collected throughout the course of the algorithm.  This conditional probability can be viewed as representing the posterior uncertainty of $S_i$, with a value of $\frac{1}{2}$ meaning complete uncertainty, and a value of $0$ or $1$ meaning complete certainty.

The probability in \eqref{eq:p_bound1} is taken with respect to the joint randomness of $\Sv$ and the noise (which enters via $\Dc$).  While the decomposition $\PP[\Sv]\PP[\Dc | \Sv]$ is most natural, it is useful to consider the opposite form $\PP[\Dc]\PP[\Sv|\Dc]$, so that we can analyze the {\em posterior distribution} $\PP[\Sv|\Dc]$.

With this in mind, we claim that the following holds with probability at least $\frac{1}{16}$ {\em with respect to $\Dc$} (i.e., the equation to follow depends on $\Dc$ but still contains randomness via $\PP[\Sv|\Dc]$):
\begin{equation}
    \frac{1}{M} \sum_{i=1}^M \PP[ S_i \ne \Shat_i \,|\, \Dc ] > \frac{1}{16}. \label{eq:p_bound2}
\end{equation}
To see this, assume by contradiction that this were only to hold with probability less than $\frac{1}{16}$.  Then, letting $\Ac$ denote the event that \eqref{eq:p_bound2} holds, we would have
\begin{align*}
    \frac{1}{M} \sum_{i=1}^M \PP[ S_i \ne \Shat_i] 
        &= \EE\bigg[ \frac{1}{M} \sum_{i=1}^M \PP[ S_i \ne \Shat_i | \Dc] \bigg] \\
        &= \EE\bigg[ \frac{1}{M} \sum_{i=1}^M \PP[ S_i \ne \Shat_i | \Dc] \, \openone \{\Dc \in \Ac\} \bigg] + \EE\bigg[ \frac{1}{M} \sum_{i=1}^M \PP[ S_i \ne \Shat_i | \Dc]\, \openone \{\Dc \notin \Ac\} \bigg].
\end{align*}
where the first line uses the tower property of expectation.  Then, the two terms are bounded as follows:
\begin{itemize}
    \item Using what we assumed by contradiction and upper bounding $\frac{1}{M} \sum_{i=1}^M \PP[ S_i \ne \Shat_i \,|\, \Dc ] \le 1$, the first term is at most $\frac{1}{16}$.
    \item Using the opposite inequality to \eqref{eq:p_bound2} for $\Dc \notin \Ac$, and upper bounding $\openone\{\cdot\} \le 1$, the second term is also at most $\frac{1}{16}$.
\end{itemize}
Thus, we obtain $\frac{1}{M} \sum_{i=1}^M \PP[ S_i \ne \Shat_i] \le \frac{1}{8}$, which contradicts \eqref{eq:p_bound1}, and we conclude that \eqref{eq:p_bound2} must hold with probability at least $\frac{1}{16}$ (with respect to $\Dc$).  

We now apply a similar argument to the preceding one, but considering the uniform distribution over $M$ implicit in \eqref{eq:p_bound2} (as opposed to the distribution of $\Dc$).  Omitting the details to avoid repetition, it follows that at least $\frac{M}{32}$ of the indices in $\{1,\dotsc,M\}$ have $\PP[ S_i \ne \Shat_i | \Dc ] > \frac{1}{32}$; any smaller number than $\frac{M}{32}$ would contradict \eqref{eq:p_bound2}.

The above findings are summarized in the following lemma.

\begin{lem} \label{eq:posterior}
    Under the setup of Lemma \ref{lem:Hamming}, if
    \begin{equation}
        T < c \cdot M \cdot \max\bigg\{ 1, \frac{\sigma^2}{\epsilon^2} \bigg\} \label{eq:posterior_claim}
    \end{equation}    
    for sufficiently small $c > 0$, then with probability at least $\frac{1}{16}$ (with respect to $\Dc$), there exist at least $\frac{M}{32}$ indices such that $\PP[ S_i \ne \Shat_i \,|\, \Dc ] > \frac{1}{32}$.
\end{lem}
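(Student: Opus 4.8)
The statement is essentially a distillation of Steps 1--3, so the plan is to combine the contrapositive of Lemma \ref{lem:Hamming} with two nested applications of a reverse-Markov (averaging) argument: first over the randomness of the collected data $\Dc$, and then over the uniform distribution on the index set $\{1,\dotsc,M\}$. All of the genuine work (Fano's inequality together with the per-query mutual-information bound $\frac{\epsilon^2}{2\sigma^2}$) is already packaged inside Lemma \ref{lem:Hamming}; what remains is purely combinatorial bookkeeping.

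\textbf{Step 1 (contrapositive of Lemma \ref{lem:Hamming}).} First I would record that if \eqref{eq:posterior_claim} holds with a sufficiently small $c>0$, then by Lemma \ref{lem:Hamming} no deterministic (possibly adaptive) algorithm can attain $\EE[\dH(\Sv,\Svhat)] \le \frac{M}{8}$, hence $\EE[\dH(\Sv,\Svhat)] > \frac{M}{8}$, the expectation being over the uniform prior on $\Sv\in\{-1,+1\}^M$ and the noise. Writing $\dH(\Sv,\Svhat)=\sum_{i=1}^M \openone\{S_i\neq\Shat_i\}$ and dividing by $M$ reproduces \eqref{eq:p_bound1}: $\frac{1}{M}\sum_{i=1}^M \PP[S_i\neq\Shat_i] > \frac{1}{8}$.

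\textbf{Step 2 (average out $\Dc$).} Since $\Svhat$ is a deterministic function of $\Dc=(\Iv,\Yv)$, the tower property gives $\PP[S_i\neq\Shat_i]=\EE_{\Dc}\big[\PP[S_i\neq\Shat_i\mid\Dc]\big]$, so the random variable $g(\Dc):=\frac{1}{M}\sum_{i=1}^M \PP[S_i\neq\Shat_i\mid\Dc]\in[0,1]$ has $\EE_{\Dc}[g(\Dc)]>\frac{1}{8}$. I would then argue by contradiction exactly as in the derivation of \eqref{eq:p_bound2}: if $\PP_{\Dc}[g(\Dc)>\tfrac{1}{16}]<\tfrac{1}{16}$, then splitting $\EE_{\Dc}[g]$ over the event $\{g>\tfrac1{16}\}$ and its complement and using $g\le 1$ gives $\EE_{\Dc}[g]<\tfrac1{16}+\tfrac1{16}=\tfrac18$, a contradiction. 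Hence $\PP_{\Dc}\big[g(\Dc)>\tfrac1{16}\big]\ge\tfrac1{16}$.

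\textbf{Step 3 (average out the indices).} Fix any realization of $\Dc$ with $g(\Dc)>\tfrac1{16}$, i.e. the numbers $p_i:=\PP[S_i\neq\Shat_i\mid\Dc]\in[0,1]$ average more than $\tfrac1{16}$ over $i\in\{1,\dotsc,M\}$. Running the same reverse-Markov step with the uniform measure on indices: if the set $A=\{i: p_i>\tfrac1{32}\}$ had $|A|<\tfrac{M}{32}$, then $\frac1M\sum_i p_i \le \frac{|A|}{M}\cdot 1 + \frac{M-|A|}{M}\cdot\frac{1}{32} < \tfrac1{32}+\tfrac1{32}=\tfrac1{16}$, contradicting $g(\Dc)>\tfrac1{16}$; hence $|A|\ge\tfrac{M}{32}$. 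Combining Steps 2 and 3: with probability at least $\tfrac1{16}$ over $\Dc$, at least $\tfrac{M}{32}$ indices satisfy $\PP[S_i\neq\Shat_i\mid\Dc]>\tfrac1{32}$, which is the claim.

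\textbf{Main obstacle.} There is no deep obstacle here; the only thing requiring care is the nested conditioning and keeping the chain of constants $\tfrac18\to\tfrac1{16}\to\tfrac1{32}$ consistent across the two averaging steps (and making sure the ``deterministic algorithm + noise'' reduction set up before Lemma \ref{lem:Hamming} is invoked, so that $\Shat_i$ is a genuine function of $\Dc$). All of the analytical difficulty — converting the adaptive sampling structure into a $\sum_t I(S_{I_t};Y_t\mid I_t)\le \frac{T\epsilon^2}{2\sigma^2}$ bound and feeding it into approximate-recovery Fano — has already been absorbed into Lemma \ref{lem:Hamming}.
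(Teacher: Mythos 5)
Your proposal is correct and follows essentially the same route as the paper: the contrapositive of Lemma \ref{lem:Hamming} giving $\frac{1}{M}\sum_i \PP[S_i \ne \Shat_i] > \frac18$, then a contradiction/averaging step over $\Dc$ to get the $\frac1{16}$-probability event, then the same averaging step over the uniform index distribution to extract the $\frac{M}{32}$ indices with $\PP[S_i \ne \Shat_i \mid \Dc] > \frac1{32}$. The constants and the two nested reverse-Markov arguments match the paper's Steps 2--3 exactly.
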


{\bf Step 4: Central limit theorem.} We now return to the problem formed in Step 1, where the goal is to estimate $I_0 \sum_{i=1}^M S_i$, and the algorithm does not necessarily form any entry-by-entry estimate $\Svhat$.  We continue to let $\Dc$ denote the samples collected, and we let $\Dc_i$ denote the subset of $\Dc$ corresponding to times when $i_t = i$.

\begin{lem} \label{lem:conditioning}
    Under the uniform prior on $\Sv = (S_1,\dotsc,S_M)$, conditioned on any collection of samples $\Dc$, we have that the signs $(S_1,\dotsc,S_M)$ remain conditionally independent.
\end{lem}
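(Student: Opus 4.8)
The plan is to write down the joint density of $\Sv$ and the full transcript $\Dc$, observe that it factorizes into a product of per-coordinate likelihood terms (the prior is already a product, and each observation depends on only one sign), and conclude that the posterior $\PP[\Sv \mid \Dc]$ is itself a product measure. Concretely, I would first fix notation: write $\Dc = ((i_1,y_1),\dotsc,(i_T,y_T))$, recall from Step~1 that the algorithm may be taken deterministic so each $i_t$ is a fixed function $\mathrm{alg}_t(i_1,y_1,\dotsc,i_{t-1},y_{t-1})$ of the earlier observations, let $\phi_\sigma$ denote the $N(0,\sigma^2)$ density, and let $\Dc_i$ be the subset of rounds with $i_t = i$ (as defined just before the lemma). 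Working with densities (counting measure on $\sv\in\{-1,+1\}^M$, Lebesgue on the $y_t$), the joint density is
\begin{equation*}
    \PP[\Sv = \sv,\, \Dc] \;=\; 2^{-M}\,\Big(\prod_{t=1}^{T}\openone\{i_t = \mathrm{alg}_t\}\Big)\,\prod_{t=1}^{T}\phi_\sigma\big(y_t - \epsilon\, s_{i_t}\big),
\end{equation*}
since the prior contributes the factor $2^{-M}$, the deterministic index choices contribute an indicator that does not depend on $\sv$, and conditioned on $i_t$ the observation is $y_t \sim N(\epsilon s_{i_t},\sigma^2)$, which involves $\sv$ only through $s_{i_t}$.

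Next I would regroup the last product by coordinate, writing $\prod_{t=1}^{T}\phi_\sigma(y_t - \epsilon s_{i_t}) = \prod_{i=1}^{M} L_i(s_i)$ where $L_i(s) := \prod_{t\,:\,i_t=i}\phi_\sigma(y_t - \epsilon s)$ depends on the (fixed) data $\Dc$ and on $s\in\{-1,+1\}$ only, with the convention $L_i \equiv 1$ for coordinates that are never queried. Since the prefactor $2^{-M}\prod_t \openone\{i_t=\mathrm{alg}_t\}$ is constant in $\sv$, Bayes' rule and factorization of the normalizing sum give
\begin{equation*}
    \PP[\Sv = \sv \mid \Dc] \;=\; \frac{\prod_{i=1}^{M} L_i(s_i)}{\sum_{\sv' \in \{-1,+1\}^M}\prod_{i=1}^{M} L_i(s'_i)} \;=\; \prod_{i=1}^{M}\frac{L_i(s_i)}{L_i(-1)+L_i(+1)},
\end{equation*}
which displays the posterior as a product of the marginals $\PP[S_i = s \mid \Dc] = L_i(s)/(L_i(-1)+L_i(+1))$ — precisely conditional independence.

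The only genuine subtlety, and the step I would be most careful to get right, is the adaptivity of the algorithm: a priori $i_t$ is correlated with all of $S_1,\dotsc,S_M$ through the earlier $y$'s, so the factorization is not obvious before conditioning. This is resolved by conditioning on the \emph{entire} transcript $\Dc$ (so every $i_t$ is pinned down) together with the reduction to deterministic algorithms from Step~1, which makes the index-selection indicator a $\sv$-independent constant that cancels in the posterior; if one prefers to keep the algorithm randomized, the same computation goes through after additionally conditioning on the algorithm's internal randomness (independent of $\Sv$). A minor bookkeeping remark is that coordinates with $\Dc_i = \emptyset$ contribute $L_i \equiv 1$, hence posterior marginal $\tfrac12$, i.e.\ unchanged from the prior — consistent with the claim. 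The measure-theoretic conditioning on the continuous component of $\Dc$ is routine once everything is written in terms of densities as above.
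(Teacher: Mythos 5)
Your proposal is correct and is essentially a rigorous spelling-out of the paper's own (one-line) argument: the paper simply notes that the prior is a product and each observation $y_t$ bears information only about $S_{i_t}$, which is exactly the per-coordinate likelihood factorization $\prod_i L_i(s_i)$ you derive. Your explicit treatment of adaptivity---conditioning on the full transcript so the index-selection indicator is constant in $\sv$ and cancels in Bayes' rule---is a welcome formalization of a point the paper leaves implicit, but it is the same underlying approach.
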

\begin{proof}
    This is immediate from the fact that we consider an independent prior (namely, the uniform prior over all $2^M$ sign patterns) and assume that the noise terms between times are independent.  Thus, whenever some index $i_t$ is selected, the resulting observation $y_t$ bears information about $S_{i_t}$, but bears no information about any of the other $S_j$.
\end{proof}

By Lemma \ref{lem:conditioning}, conditioned on $\Dc$, the posterior distribution of $\sum_{i=1}^M S_i$ is a sum of independent $\pm 1$-valued random variables.  Moreover, by Lemma \ref{eq:posterior}, when $T$ satisfies \eqref{eq:posterior_claim} it holds with probability at least $\frac{1}{16}$ that at least $\frac{1}{32}$ fraction of these indices have strictly positive posterior variance.  This, in turn, implies that $I_0 \sum_{i=1}^M S_i$ has a posterior variance of $\Omega(M I_0^2)$.

Having a constant fraction of strictly positive-variance terms is sufficient for applying the central limit theorem for independent but non-identical random variables \citep[Ch.~VIII]{Fel57}.  By doing so, we find that $I_0 \sum_{i=1}^M S_i$ is asymptotically Gaussian; the mean is inconsequential for our purposes, and the variance scales as $\Omega(M I_0^2)$, i.e., the standard deviation is $\Omega(\sqrt{M} \cdot I_0)$.   
As highlighted in Appendix \ref{app:noiseless_lower}, when we have a posterior standard deviation of $\Omega(\sqrt{M} \cdot I_0)$, we incur $\Omega(\sqrt{M} \cdot I_0)$ error.  Since we have shown that this is the case with constant probability, it follows that the average error is $\Omega(\sqrt{M} \cdot I_0)$.

{\bf Step 5: Simplification.} Recall from Appendix \ref{app:noiseless_lower} that in our function class, we have $M = \big\lfloor \frac{1}{w} \big\rfloor^d$, $I_0 = \Theta(w^d \epsilon) = \Theta\big( \frac{\epsilon}{M} \big)$, and $\epsilon = \Theta\big( \frac{1}{M^{\nu/d + 1/2}} \big)$.
Hence, the scaling $\Omega(\sqrt{M} \cdot I_0)$ can be expressed as $\Omega\big( \frac{\epsilon}{\sqrt M} \big)$.  We now complete the proof by considering two cases:
\begin{itemize}
    \item If the maximum in \eqref{eq:posterior_claim} is achieved by the first term, then we have $M = \Theta(T)$ (supposing that $T$ is as high as possible subject to \eqref{eq:posterior_claim}).  Moreover, the above-established fact gives an $\Omega\big( \frac{1}{T^{\frac{\nu}{d}+1}} \big)$ lower bound.
    \item If the maximum in \eqref{eq:posterior_claim} is achieved by the second term, then we get $M = \Theta\big( \frac{T\epsilon^2}{\sigma^2} \big)$, or equivalently $\frac{\epsilon}{\sqrt{M}} = \Theta\big( \frac{\sigma}{\sqrt T} \big)$.  Hence, the lower bound is $\Omega\big( \frac{\sigma}{\sqrt{T}} \big)$ for both kernels.
\end{itemize}
Combining these two cases, we obtain a final lower bound of $\ravgsigmat = \Omega\big( \max\big\{ \frac{\sigma}{\sqrt{T}}, \frac{1}{T^{\frac{\nu}{d}+1}}  \big\} \big)$.
This is equivalent to $\Omega(T^{\frac{\nu}{d}-1} + \sigma T^{-\frac{1}{2}})$,
and the proof of Theorem \ref{thm:noisy_lower} is complete.

\section{Proofs of Average-Case Upper Bounds}
\label{app:proof-avg-upper}

\subsection{Proof of Theorem~\ref{thm:avg_int_l2} (General Guarantee for Algorithm \ref{alg:two-phase-bq})} \label{sec:proof_avg_thm}

Our first result establishes that $\Rhat$ is an unbiased estimator of $R$.

\begin{lem} \label{lem:R-mean}
    Condition on arbitrary fixed values of $y_1,\dotsc,y_{T/2}$ (and hence, fixed $\Ihat_1$ and $R$), and consider the resulting distribution of $\Rhat$ due to the randomness in $\xv_{T/2+1},\dotsc,\xv_T$ and $\epsilon_{T/2+1},\dotsc,\epsilon_T$.  We have
    \begin{equation}
        \EE[\Rhat] = R.
    \end{equation}
\end{lem}
\begin{proof}
Recalling that $\hat{f}$ is the initial estimate of $f$ based on the first $\frac{T}{2}$ samples, we have
\begin{align*}
	\EE[\Rhat]
	&=\EE\Big[\frac{2}{T}\sum_{t=T/2+1}^{T} \big(y_t-\fhat(\xv_t)\big)\Big] \\
	&=\frac{2}{T}\sum_{t=T/2+1}^{T} \EE[y_t-\fhat(\xv_t)] \\
	&=\frac{2}{T}\sum_{t=T/2+1}^{T} \EE[f(\xv_t)-\fhat(\xv_t)+\epsilon_t] \\
	&=\frac{2}{T}\sum_{t=T/2+1}^{T} \EE[f(\xv_t)-\fhat(\xv_t)] + \frac{2}{T}\sum_{t=T/2+1}^{T} \EE[\epsilon_t] \\
	&=\frac{2}{T}\sum_{t=T/2+1}^{T} \int_D p(\xv) [f(\xv)-\fhat(\xv)]d\xv + 0 \\
	&=\int_D p(\xv) [f(\xv)-\fhat(\xv)]d\xv
	= R,
\end{align*}
where the second and fourth equalities are due to the linearity of expectation, and the fifth equality holds since $\xv_t \sim p(\xv)$ and $\EE[\epsilon_t]=0$.
\end{proof}

In addition, the variance of the residual estimator $\Rhat$ is bounded according to the following.

\begin{lem} \label{lem:R-var}
    Under the setup of Lemma \ref{lem:R-mean}, we have
    \begin{equation}
        \var[\Rhat] \leq \frac{4p_{\max}}{T} \| f - \fhat \|_{L^2}^2 + \frac{4\sigma^2}{T}, \label{eq:Rvar}
    \end{equation}
    where the variance is with respect to the randomness in $\xv_{T/2+1},\dotsc,\xv_T$ and $\epsilon_{T/2+1},\dotsc,\epsilon_T$.
\end{lem}
\begin{proof}
We have
\begin{align}
	\var[\Rhat] &=\var\Big[\frac{2}{T}\sum_{t=T/2+1}^{T} \big(y_t-\fhat(\xv_t)\big)\Big] \nonumber \\
	&=\frac{4}{T^2}\var\Big[\sum_{t=T/2+1}^{T} \big(f(\xv_t)-\fhat(\xv_t)+\epsilon_t\big)\Big] \nonumber \\
	&\le \frac{8}{T^2}\var\Big[\sum_{t=T/2+1}^{T} \big(f(\xv_t)-\fhat(\xv_t)\big)\Big] + \frac{8}{T^2}\var\Big[\sum_{t=T/2+1}^{T} \epsilon_t\Big]\nonumber \\
        &=\frac{8}{T^2}\sum_{t=T/2+1}^{T}\var[f(\xv_t)-\fhat(\xv_t)] + \frac{8}{T^2}\cdot\sum_{t=T/2+1}^{T}\var[\epsilon_t] \nonumber \\
        &\le \frac{8}{T^2}\sum_{t=T/2+1}^{T}\var[f(\xv_t)-\fhat(\xv_t)] + \frac{4\sigma^2}{T},
        \label{eq:var-1}
\end{align}
where on the third line we use the fact that $\var[X+Y]\le 2\var[X]+2\var[Y]$, on the fourth line we use the independence of $\xv_t$ and $\epsilon_t$ across $t$, and the last line is due to $\var[\epsilon_t] = \sigma^2$. Moreover, by the definition of variance, we have
\begin{align}
	\var[f(\xv_t)-\fhat(\xv_t)]
	&= \EE[(f(\xv_t)-\fhat(\xv_t))^2] - \EE[f(\xv_t)-\fhat(\xv_t)]^2 \nonumber \\
	&\leq \EE[(f(\xv_t)-\fhat(\xv_t))^2] \nonumber \\
	&= \int_D p(\xv) (f(\xv)-\fhat(\xv))^2 d\xv \nonumber \\
	&\leq p_{\max} \| f - \fhat \|_{L^2}^2, \label{eq:var-2}
\end{align}
where the last inequality is due to our assumption that $p(\xv) \in [0,p_{\max}]$ for all $\xv$. Substituting \eqref{eq:var-2} into \eqref{eq:var-1}, we obtain \eqref{eq:Rvar} as desired.
\end{proof}

We can now analyze the error of our algorithm averaged over {\em all} samples, including the first $T/2$.  Let $\EE_1[\cdot]$ (respectively, $\EE_2[\cdot]$) denote averaging with respect to the randomness from the first (respectively, second) batch.  We first note that conditioned on the first $T/2$ samples, we have
\begin{align}
    \EE_2\Big[\big|I-\Ihat_1-\Rhat\big|\Big] & \le \sqrt{\EE_2\Big[\big(I-\Ihat_1-\Rhat\big)^2\Big]} \nonumber  \\
    &= \sqrt{\EE_2\Big[\big(R-\Rhat\big)^2\Big]} \nonumber \\
    &= \sqrt{\EE_2\Big[\Big(\EE_2[\Rhat] - \Rhat\Big)^2\Big]} \nonumber \\
    &= \sqrt{\var[\Rhat]} \\
    &\le 2\sqrt{p_{\max}}T^{-\frac{1}{2}} \| f - \fhat \|_{L^2} + 2\sigma T^{-\frac{1}{2}}, \label{eq:mse0}
\end{align}
where the first inequality follows from Jensen's inequality, the third line holds due to Lemma \ref{lem:R-mean}, and the last two steps are due to Lemma \ref{lem:R-var} and the elementary inequality $\sqrt{x+y}\le\sqrt{x}+\sqrt{y}$.

Then, incorporating the randomness from the first $T/2$ samples, we obtain
\begin{align}
    \EE\Big[\big|I-\Ihat_1-\Rhat\big|\Big] 
    &= \EE_1\bigg[\EE_2\Big[\big|I-\Ihat_1-\Rhat\big|\Big]\bigg] \nonumber \\
    &\le 2\sqrt{p_{\max}}T^{-\frac{1}{2}} \EE\big[ \| f - \fhat \|_{L^2} \big] + 2\sigma T^{-\frac{1}{2}}, \label{eq:mse}
\end{align}
where we applied the tower property of expectation, followed by \eqref{eq:mse0} (note that $\EE_1\big[ \| f - \fhat \|_{L^2}^2 \big] = \EE\big[ \| f - \fhat \|_{L^2} \big]$ since no quantities from the second batch are present).   This concludes the proof of Theorem~\ref{thm:avg_int_l2}.

\subsection{Proofs of Corollaries \ref{cor:avg_mat_sig} and \ref{cor:avg_mat_sig_mis} (Noisy Mat\'ern Upper Bounds)} \label{app:proof_mat_cor}

It remains to upper bound the average-case $L^2$-error $\EE\big[\| f-\fhat \|_{L^2}\big]$ between the true function $f$ and the estimate $\fhat$.  To do so, we will use results from \citep{Wyn21} on the $L^2$ estimation error for functions in Sobolev spaces, and adapt them to our problem for functions in the Mat\'ern RKHS.  
These results depend on various technical assumptions made in \citep{Wyn21}, some of which are trivially satisfied in our setting: our domain $[0,1]^d$ implies their Assumption 1, our Mat\'ern-$\nu$ functions with finite RKHS norm implies their Assumptions 2, 4 and 5.  Moreover,their Assumption 3 only pertains to the misspecified setting, imposing the requirement that $\{\nuhat_t\}_{t\ge T}$ has finitely many values (as we also assume for Corollary \ref{cor:avg_mat_sig_mis}).

We consider the commonly-used notions of {\em fill distance} $h_X$ and {\em separation radius} $q_X$, which are widely used (e.g., see \citep{Wyn21,Rie10}) and the associated convergence rates for kernel-based interpolation methods.  For a point set $X\subset D$, the two distances are defined as
\begin{equation}
    h_X = \sup_{\xv\in X}\inf_{\yv\in D} \|\xv-\yv\|, \quad q_X:= \frac{1}{2}\min_{\xv,\yv\in X,\xv\neq\yv} \|\xv-\yv\|. \label{eq:fill_dist}
\end{equation}
Intuitively, sufficiently small fill distance implies that $X$ covers the whole domain $D$.  It is known that {\em quasi-uniform} point sets achieve the optimal order of $h_X$ and $q_X$, at $\Theta(T^{-\frac{1}{d}})$ (see, e.g. \citep[Thm.~4.17]{Nov08}).  Moreover, \citep[Thm.~11]{Wen21} shows that greedily minimizing the GP posterior variance leads to asymptotically uniform points, which in turn achieves a small fill distance.  Considering maximum-variance sampling (Algorithm \ref{alg:max-var}) in the first batch in our meta-algorithm (Algorithm \ref{alg:two-phase-bq}), we are able to directly make use of the following results from \citep{Wyn21}.  

\begin{lem} \label{lem:mat_l2}
    {\em ($L^2$-Error Result from \citep[Thm.~4]{Wyn21})}
    Let $f$ be any function in $\mat$, and $X=\{\xv_t\}_{t=1}^T$ be the sequence of points selected by Algorithm \ref{alg:max-var}, which outputs $\mu_T$ with parameter $\lambda$. Then, there exists a constant $h_0>0$ such that $\forall X\subseteq D$ with $h_X\le h_0$, and when $\nu$ is known, the average noisy $L^2$-error is 
    \begin{equation}
        \EE[\|f-\mu_T\|_{L^2}] = O\Big( h_X^{\frac{d}{2}} (h_X^{\nu} + \lambda) \|f\|_{\mat} + h_X^{\frac{d}{2}}\big(h_X^{\nu} \lambda^{-1}+1 \big) \EE[\|\beps\|] \Big). \label{eq:mat_l2_well}
    \end{equation}
    where $\beps=(\epsilon_1,\dots,\epsilon_T)$ is the vector of noise terms
    
    Moreover, under the modified version of Algorithm \ref{alg:max-var} for the misspecified setting (described just above Corollary \ref{cor:avg_mat_sig_mis}, where $\num$ and $\nup$ are also defined), we have
    \begin{equation}
        \EE[\|f-\mu_T\|_{L^2}] = O\Big( h_X^{\frac{d}{2}} \big(h_X^{\min(\nu,\num)+\nup-\nu}q_X^{\nu-\nup} + \lambda q_X^{\nu-\nup}\big) \|f\|_{\mat} + h_X^{\frac{d}{2}} \big(h_X^{\num}\lambda^{-1}+1 \big) \EE[\|\beps\|] \Big). \label{eq:mat_l2_mis}
    \end{equation}
\end{lem}

To obtain this lemma from \citep[Thm.~4]{Wyn21}, we substitute $s\to 0$, $\tau_f\to \nu+\frac{d}{2}$, $\tau_k^{+}\to \nup+\frac{d}{2}$,  $\tau_k^{-}\to \num+\frac{d}{2}$, $q\to 2$ and $m(\cdot)\to 0$ in the notation therein.  Setting $\lambda = T^{-\frac{\nu}{d}}$ and $h_X=\Theta(T^{-\frac{1}{d}})$, \eqref{eq:mat_l2_well} can be simplifed as 
\begin{equation}\label{eq:mat_l2_well_1}
    \EE[\|f-\mu_T\|_{L^2}] = O\Big(T^{-\frac{\nu}{d}-\frac{1}{2}}B + T^{-\frac{1}{2}} \EE[\|\beps\|]\Big).
\end{equation}
Similarly, by setting $\lambda = T^{-\frac{\num}{d}}$, $h_X=\Theta(T^{-\frac{1}{d}})$, and $q_X=\Theta(T^{-\frac{1}{d}})$, \eqref{eq:mat_l2_mis} reduces to
\begin{equation}\label{eq:mat_l2_mis_1}
    \EE[\|f-\mu_T\|_{L^2}] = O\Big(T^{-\frac{\min(\num,\nu)+\nup-\nu-\nup+\nu}{d} -\frac{1}{2}}B + T^{-\frac{\nu+\num-\nup}{d}-\frac{1}{2}}B + T^{-\frac{1}{2}} \EE[\|\beps\|]\Big).
\end{equation}

Note that since the noises are i.i.d Gaussian with zero mean, we have
\begin{equation} \label{eq:noise-upper}
    \EE[\|\beps\|] \le \sqrt{\EE[\|\beps\|^2]} = \sqrt{\EE[\epsilon_1^2] + \dots + \EE[\epsilon_{T/2}^2]} = \sqrt{\var[\epsilon_1] + \dots + \var[\epsilon_{T/2}]} = \sqrt{\frac{T}{2}}\sigma.
\end{equation}
For $f\in\mat$ and known $\nu$, substituting \eqref{eq:noise-upper} and \eqref{eq:mat_l2_well_1} into \eqref{eq:mse}, we obtain
\begin{align*}
    \EE\Big[|I-\Ihat_1-\Rhat|\Big]
    &=O\Big( \sqrt{p_{\max}}T^{-\frac{1}{2}} \big (T^{-\frac{\nu}{d}-\frac{1}{2}}B + T^{-\frac{1}{2}} \EE[\|\beps\|]\big)  + \sigma T^{-\frac{1}{2}}\Big) \\
    &=O\Big(T^{-\frac{\nu}{d}-1} + \sigma T^{-\frac{1}{2}} \Big),
\end{align*}
which yields Corollary \ref{cor:avg_mat_sig}.

For the misspecified setting, we substitute \eqref{eq:mat_l2_mis_1}--\eqref{eq:noise-upper} into \eqref{eq:mse}, and obtain
\begin{align*}
    \EE\Big[|I-\Ihat_1-\Rhat|\Big]
    &= O\Big(\sqrt{p_{\max}}T^{-\frac{1}{2}} \big(T^{-\frac{\min(\num,\nu)+\nup-\nu-\nup+\nu}{d} -\frac{1}{2}}B + T^{-\frac{\nu+\num-\nup}{d}-\frac{1}{2}}B + T^{-\frac{1}{2}} \EE[\|\beps\|] \big)+ \sigma T^{-\frac{1}{2}}\Big) \\
    &= O\Big( T^{-\frac{\min(\num,\nu)}{d} -1} + T^{-\frac{\nu+\num-\nup}{d}-1} + \sigma T^{-\frac{1}{2}}\Big),
\end{align*}
which yields Corollary \ref{cor:avg_mat_sig_mis}.

\subsection{Proof of Corollary \ref{cor:avg_se_sig} (Noisy SE Upper Bound)}
\label{app:avg_se_sig}

The bulk of this subsection is devoted to introducing definitions and results from \citep{Bac17}.  Although we are working towards the noisy setting, all results stated are for the noiseless setting until stated otherwise.

It is often useful to study an RKHS through an integral operator $\Sigma$, which leads to an isometry with $L^2(d\rho)$ space with measure $d\rho$:
\begin{equation*}
    (\Sigma f) (\cdot) = \int_{\Xc} f(\xv)k(\xv,\cdot)d\rho(\xv).
\end{equation*}
For the moment, we consider any kernel (including SE) that can be written in the form
\begin{equation*}
    k(\xv,\xv') = \int \phi(\xv, \bomega)\phi(\xv', \bomega) d\rho(\bomega),
\end{equation*}
for some $\phi(\xv,\cdot):L^2(d\rho)\to L^2(d\rho)$.  Let $\{\xv_i\}_{i=1}^T$ be i.i.d samples drawn from density $q$ with respect to measure $d\rho$, and pick weights $\bbeta$ such that the approximation of $f$ is 
\begin{equation*}\label{eq:se_fhat_1}
    \fhat(\cdot) = \sum_{i=1}^{T} \frac{\beta_i}{\sqrt{q(\xv_i)}} \phi(\xv_i,\cdot),
\end{equation*}
which belongs to the Hilbert space $\hat{\Hc}^k$ formed by the approximated kernel $\khat$ through $T$ random features $\{\bomega_i\}_{i=1}^{T}$ with the same density $q$:
\begin{equation*}
    \khat (\xv,\yv) = \frac{1}{T} \sum_{i=1}^{T} \frac{1}{q(\bomega_i)}\phi(\xv,\bomega_i)\phi(\yv,\bomega_i).
\end{equation*}
The weights $\bbeta$ are chosen to solve the following minimization problem with Lagrange multiplier $\lambda >0$:
\begin{equation*}
    \min_{\beta} \|f- \fhat \|_{L^2(d\rho)} + T\lambda\|\bbeta\|^2,
\end{equation*}
with the solution
\begin{equation}
    \bbeta = \frac{1}{T}\Phi^T\Big(\frac{1}{T}\Phi\Phi^T + \lambda \Iv\Big)^{-1} f, \quad \|\bbeta\|^2\le \frac{4}{T} \label{eq:beta_bound}
\end{equation}
where $\Phi:\RR^T\to L^2(d\rho)$ is an operator:
\begin{equation*}
    \Phi\bbeta = \sum_{i=1}^{T} \frac{\beta_i}{\sqrt{q(\bomega_i)}} \phi(\bomega_i,\cdot).
\end{equation*}
The quantity $\Phi\Phi^T$ in turn defines th following empirical integral operator $\Sigmahat:L^2(d\rho) \to L^2(d\rho)$:
\begin{equation*}
    (\Sigmahat f)(\cdot) 
    = \frac{1}{T} \sum_{i=1}^{T} \frac{1}{q(\bomega_i)} \langle f, \phi(\bomega_i,\cdot) \rangle_{L^2(d\rho)} \phi(\bomega_i,\cdot).
\end{equation*}
This allows us to write $\fhat$ as
\begin{equation}\label{eq:se_fhat}
    \fhat = \Sigma^{1/2}\Sigmahat(\Sigmahat+\lambda I)^{-1}\Sigma^{-1/2}f,
\end{equation}
where $ \Sigma^{1/2}$ is the unique positive self-adjoint square root of $\Sigma$, and forms a bijection from $L^2(d\rho)$ to the RKHS $\Hk$ \citep{Bac17}.  

With the above definitions, it was shown in \citep{Bac17} that the noiseless $L^2$-error between $f$ and $\fhat$ corresponds to the eigenvalue decay of the integral operator $\Sigma$ if $q$ is the optimized distribution:
\begin{equation}\label{eq:optimal_q}
    q(\xv) \propto \sum_{i\ge 1} \frac{\mu_i}{\mu_i+\lambda} e_i(\xv)^2,
\end{equation}
where $\mu_i$ is the $i$-th largest eigenvalue of $\Sigma$, and $e_i(\xv)$ is the corresponding eigenfunction.  Specifically, $\|f- \fhat \|_{L^2(d\rho)}$ has a geometric error (e.g., $\exp(-i^{\frac{1}{d}})$) if $\mu_i$ decays geometrically/exponentially.  The guarantee that we make use of is formally stated as follows.

\begin{lem}\label{lem:rff_l2error}
    {\em \citep[Prop.~2]{Bac17}} For the optimized distribution defined in \eqref{eq:optimal_q}, and the estimate $\fhat$ defined in \eqref{eq:se_fhat}, let $\delta>0$ and $d_{\lambda}=\Tr \Sigma(\Sigma +\lambda \Iv)^{-1}$, and assume that $T\ge 5d_{\lambda}\log(\frac{16d_{\lambda}}{\delta})$.
    Then, it holds with probability at least $1-\delta$ that
    \begin{equation*}
        \inf_{\|f\|_{\Hk}\le 1} \sup_{\|\fhat\|_{\hat{\Hc}^k}\le 2} \|f-\fhat\|_{L^2(d\rho)} \le 2\sqrt{\lambda}.
    \end{equation*}
\end{lem}

Note that $d_{\lambda}$ represents a notion of \emph{effective dimension} or \emph{effective degrees of freedom}. 
For the SE kernel integral operator, the eigenvalue decay satisfies the following (e.g., see \citep[Thm.~15]{San16a}):
\begin{equation}\label{eq:gaus_eigen}
    \mu_i = O \big(\exp(- C_e i^{\frac{1}{d}}) \big),
\end{equation}
for some constant $C_e>0$.  Moreover, \citep[Lem.~6]{Sun18} shows that, for $\mu_i$ decaying according to \eqref{eq:gaus_eigen}, it holds that $d_{\lambda} = O((\log\frac{1}{\lambda})^d)$.  Rearranging  $T = \Theta(d_{\lambda}\log(d_{\lambda})) $ in Lemma \ref{lem:rff_l2error} gives us that 
\begin{equation*}
    d_{\lambda} = \Theta\Big(\frac{T}{\log T}\Big).
\end{equation*}
Then, by equating $\Theta\big( \frac{T}{\log T} \big)$ with $(\log\frac{1}{\lambda})^d$ and rearranging, we obtain
\begin{equation}\label{eq:se_lambda_upper}
    \sqrt{\lambda} = O\bigg(\exp \Big(- C_r \Big(\frac{T}{\log T}\Big)^{\frac{1}{d}} \Big) \bigg)
\end{equation}
for some constant $C_r > 0$.  This determines the $L^2$-error between $f$ and $\fhat$ in Lemma \ref{lem:rff_l2error} in the absence of noise.  

To account for the effect of noise, we use the following lemma to extend Lemma \ref{lem:rff_l2error}.

\begin{lem}\label{lem:rff_l2error_sig}
    {\em \citep[Sec.~5]{Bac17}} Under the preceding setup, if each function query is corrupted by independent noise with variance not exceeding $q(\xv_i)\sigma^2$ in the $i$-th entry, then we have
    \begin{equation*}
        \inf_{\|f\|_{\Hk}\le 1} \sup_{\|\fhat\|_{\hat{\Hc}^k}\le 2} \EE\big[\|f-\fhat\|_{L^2(d\rho)}\big] \le 2\sqrt{\lambda} + \sigma\|\bbeta\|.
    \end{equation*}
\end{lem}
While the presence of $q(\xv_i)$ in the preceding statement seems complicated, it is fortunately considerably simplified in our case due to the following.
\begin{lem}
    {\em \citep[Sec.~4.4]{Bac17}} For shift-invariant kernels in $[0,1]^d$ (including SE), the optimized distribution $q$ is the uniform distribution when the uniform measure $d\rho$ is used.  
\end{lem}
Therefore, we have $q(\cdot) = 1$ (i.e., the uniform distribution over $[0,1]^d$), and the additional noisy term in Lemma \ref{lem:rff_l2error_sig} is at most $\sigma \|\bbeta\| \le 2\sigma T^{-\frac{1}{2}}$ due to the upper bound on $\|\bbeta\|$ given in \eqref{eq:beta_bound}.  Combining with \eqref{eq:se_lambda_upper}, the noisy $L^2$ guarantee becomes 
\begin{equation*}
    \inf_{\|f\|_{\Hk}\le 1} \sup_{\|\fhat\|_{\hat{\Hc}^k}\le 2} \EE\big[\|f-\fhat\|_{L^2(d\rho)}\big] =  O\bigg(\exp \Big(- C_r \Big(\frac{T}{\log T}\Big)^{\frac{1}{d}} \Big) + \sigma T^{-\frac{1}{2}} \bigg),
\end{equation*}
and substituting into our general BQ guarantee (Theorem \ref{thm:avg_int_l2}), we obtain Corollary \ref{cor:avg_se_sig}.

\section{Alternative Analysis Based on Confidence Bounds} \label{app:alt_bo}

In this section, we present an analysis based on constructing confidence intervals of the true function on each time step, which is a popular strategy in the analysis Bayesian optimization (BO) algorithms.  Our analysis is most closely related to that of the BO simple regret in \citep{Vak21}.

\subsection{Noisy Setting}


Different from our earlier results, this result is stated with high probability, rather than in expectation.  However, we can easily convert to the latter by choosing $\delta$ small enough, e.g., $\delta = \frac{1}{{\rm poly}(T)}$, and noting that low-probability failure event contributes an asymptotically negligible amount to the average.  

We first provide several existing results that we use to prove results in \eqref{eq:mat_boup_sig} and \eqref{eq:se_boup_sig}.

\begin{lem} \label{lem:Lipschitzness}
    {\em \citep[Prop.~1 \& Remark 5]{She20}} Let $\Hk$ be the set of functions whose RKHS norm is upper bounded by a constant $B>0$. Then $f$ is $L$-Lipschitz continuous with some constant $L$ depending only on the kernel parameters.
\end{lem}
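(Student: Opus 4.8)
The plan is to exploit the reproducing property of the RKHS together with the regularity of the Mat\'ern kernel at the origin, reducing the whole claim to a quadratic bound on the kernel's radial profile. Fix any $f$ with $\|f\|_k \le B$. By the reproducing property and Cauchy--Schwarz, for any $\xv,\xv' \in D$,
\begin{equation*}
    |f(\xv) - f(\xv')| = \big| \langle f, k(\cdot,\xv) - k(\cdot,\xv') \rangle_k \big| \le \|f\|_k \, \big\| k(\cdot,\xv) - k(\cdot,\xv') \big\|_k \le B \, \big\| k(\cdot,\xv) - k(\cdot,\xv') \big\|_k .
\end{equation*}
Expanding the squared norm via the reproducing property and using stationarity (write $\phi$ for the radial profile, so that $k(\xv,\xv') = \phi(\|\xv-\xv'\|)$ and $\phi(0) = k(\xv,\xv)$ is a constant), I obtain
\begin{equation*}
    \big\| k(\cdot,\xv) - k(\cdot,\xv') \big\|_k^2 = k(\xv,\xv) - 2k(\xv,\xv') + k(\xv',\xv') = 2\big(\phi(0) - \phi(r)\big), \qquad r = \|\xv-\xv'\| .
\end{equation*}

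Everything then reduces to bounding $\phi(0) - \phi(r)$ quadratically in $r$. First I would establish that there is a constant $C$ depending only on $(\nu,l)$ with $\phi(0)-\phi(r) \le C r^2$ for all $r \in [0,\sqrt d]$ (the diameter of $D=[0,1]^d$). The key analytic input is that for $\nu > 1$ the Mat\'ern profile is twice continuously differentiable at the origin with $\phi'(0)=0$, which follows from the small-$r$ expansion of the Bessel factor in \eqref{eq:kMat}; the vanishing first derivative is forced by $\phi$ being an even (radial) function with a maximum at $r=0$. Since $\phi$ is also $C^\infty$ for $r>0$, Taylor's theorem gives $\phi(0)-\phi(r) = -\tfrac12\phi''(\xi)\,r^2$ for some $\xi\in(0,r)$, so the function $r \mapsto (\phi(0)-\phi(r))/r^2$ extends continuously to $[0,\sqrt d]$ with value $-\tfrac12\phi''(0)$ at the origin; being continuous on a compact set, it is bounded by some finite $C=C(\nu,l)$. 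Combining the two displays yields $|f(\xv)-f(\xv')| \le B\sqrt{2C}\,\|\xv-\xv'\|$, establishing the claim with $L = B\sqrt{2C}$, a constant determined by $B$ and the kernel parameters.

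The main obstacle is the quadratic bound $\phi(0)-\phi(r)=O(r^2)$: verifying $\phi'(0)=0$ and the finiteness of $\phi''(0)$ from the Bessel structure is exactly where the hypothesis $\nu>1$ is used, and it is genuinely necessary. A merely first-order bound $\phi(0)-\phi(r)=O(r)$ would give $|f(\xv)-f(\xv')|=O(\sqrt r)$, i.e.\ only H\"older-$\tfrac12$ continuity; indeed for $\nu=\tfrac12$ one has $\phi(0)-\phi(r)=\Theta(r)$ and RKHS functions are not Lipschitz, which pins down $\nu>1$ as the threshold.

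Finally, under the paper's standing assumption that $s=\nu+d/2$ is an integer, I would note an alternative route: invoke the norm equivalence of Lemma~\ref{lem:Sobolev-Matern-RKHS-equivalence} together with the Sobolev embedding $W^s_2(D)\hookrightarrow C^1(D)$, which holds precisely when $s>d/2+1$, i.e.\ $\nu>1$. This yields $\|\nabla f\|_{L^\infty} \le C'\|f\|_{W^s_2} \le (C'/c_1)\,\|f\|_k \le (C'/c_1)B$, and since $D$ is convex the gradient bound gives the same Lipschitz conclusion. I expect the direct RKHS argument to be preferable, since it requires only $\nu>1$ rather than the integrality of $s$, and isolates the single analytic fact (twice-differentiability of $\phi$ at $0$) on which the result rests.
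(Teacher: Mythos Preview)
The paper does not actually prove this lemma; it is quoted verbatim from an external reference \citep[Proposition 1 \& Remark 5]{She20} and used as a black box in Appendix~\ref{app:max_var}. So there is no in-paper argument to compare against, and your proposal stands on its own.

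Your argument is correct. The reduction via the reproducing property and Cauchy--Schwarz to the bound $|f(\xv)-f(\xv')|\le B\sqrt{2(\phi(0)-\phi(r))}$ is standard and clean, and the crux is exactly where you place it: establishing $\phi(0)-\phi(r)=O(r^2)$ near the origin. Your justification via the small-$r$ expansion of $r^\nu K_\nu(r)$ is the right one; for non-integer $\nu$ the expansion contains terms in $r^2$ and $r^{2\nu}$, so $\nu>1$ is precisely the threshold at which the $r^2$ term dominates and $\phi''(0)$ is finite. One minor point of care: $\phi$ is a priori only defined on $[0,\infty)$, so the ``even function'' remark is really about the even extension (equivalently, the 1D kernel $t\mapsto k(0,t)$ on $\RR$); with that understood, your Taylor/continuity-on-a-compact argument for the uniform constant $C$ goes through. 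Your observation that $\nu\le 1$ yields only H\"older-$\nu$ continuity (e.g.\ $\nu=\tfrac12$ gives $\phi(0)-\phi(r)=\Theta(r)$) correctly identifies why the hypothesis is sharp. The alternative Sobolev-embedding route is also valid under the paper's standing integrality assumption on $s=\nu+d/2$, but as you note, the direct RKHS argument is more self-contained and needs only $\nu>1$.
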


In the following, we use the shorthand notation $\xv_{1:T} = (\xv_1,\dotsc,\xv_T)$, and similarly for other quantities indexed by $t$.  Recall also the posterior mean and variance defined in \eqref{eq:posterior_mean}--\eqref{eq:posterior_variance}, with parameter $\lambda > 0$.

\begin{lem} \label{lem:ci}
    {\em (Confidence Intervals \citep[Thm.~1]{Vak21})}
    Fix a function $f$ satisfying $\| f \|_{\Hk} \leq B$, and assume Gaussian noises with variance $\sigma^2$.  Assume further that $\xv_{1:T}$ are independent of $\epsilon_{1:T}$, i.e., the points are chosen non-adaptively. For a fixed $\xv\in D$, for any $t\in[T]$, define the upper and lower confidence bounds as
    \begin{align*}
        U_t^\delta(\xv)&=\mu_t(\xv)+(B+\beta(\delta))\sigma_t(\xv), \\
        L_t^\delta(\xv)&=\mu_t(\xv)-(B+\beta(\delta))\sigma_t(\xv),
    \end{align*}
    with $\beta(\delta)=\frac{\sigma}{\lambda}\sqrt{2\log{\frac{1}{\delta}}}$, and $\delta\in(0,1)$.  Then, we have for any $\xv\in D$ that 
    \begin{align}
        f(\xv)&\leq U_t^\delta(\xv) \quad \textrm{w.p. at least } 1-\delta \\
        f(\xv)&\geq L_t^\delta(\xv) \quad \textrm{w.p. at least } 1-\delta.
    \end{align}
\end{lem}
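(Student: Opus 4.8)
The plan is to prove the upper confidence bound $f(\xv)\le U_t^\delta(\xv)$; the lower bound follows by applying the identical argument to $-f$. The starting point is to split the estimation error $f(\xv)-\mu_t(\xv)$ into a deterministic (function-dependent) term and a stochastic (noise-dependent) term. Writing $\yv_t=\mathbf{f}_t+\varepsilon$ with $\mathbf{f}_t=(f(\xv_1),\dots,f(\xv_t))^T$ and $\varepsilon=(\epsilon_1,\dots,\epsilon_t)^T$, and setting $\wv=(\Kv_t+\lambda\mathbf{I}_t)^{-1}\kv_t(\xv)$, the posterior mean in \eqref{eq:posterior_mean} decomposes as $\mu_t(\xv)=\mu_t^{(f)}(\xv)+\wv^T\varepsilon$, where $\mu_t^{(f)}(\xv)=\kv_t(\xv)^T(\Kv_t+\lambda\mathbf{I}_t)^{-1}\mathbf{f}_t$ is the noise-free posterior mean. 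Hence $f(\xv)-\mu_t(\xv)=\big(f(\xv)-\mu_t^{(f)}(\xv)\big)-\wv^T\varepsilon$, and it suffices to control the two terms separately.

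First I would bound the deterministic term by $\big|f(\xv)-\mu_t^{(f)}(\xv)\big|\le B\,\sigma_t(\xv)$. By the reproducing property, $\mu_t^{(f)}(\xv)=\langle f,\sum_{i=1}^t w_i k(\xv_i,\cdot)\rangle_k$ (with $w_i$ the $i$-th entry of $\wv$) and $f(\xv)=\langle f,k(\xv,\cdot)\rangle_k$, so Cauchy--Schwarz gives $\big|f(\xv)-\mu_t^{(f)}(\xv)\big|\le\|f\|_k\cdot\big\|k(\xv,\cdot)-\sum_i w_i k(\xv_i,\cdot)\big\|_k$. Expanding the squared RKHS norm of the residual feature and substituting $\wv=(\Kv_t+\lambda\mathbf{I}_t)^{-1}\kv_t(\xv)$, a short manipulation using $\Kv_t=(\Kv_t+\lambda\mathbf{I}_t)-\lambda\mathbf{I}_t$ shows it equals $\sigma_t^2(\xv)-\lambda\,\kv_t(\xv)^T(\Kv_t+\lambda\mathbf{I}_t)^{-2}\kv_t(\xv)\le\sigma_t^2(\xv)$. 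Combined with $\|f\|_k\le B$, this yields the claimed bias bound.

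Next I would control the noise term $\wv^T\varepsilon$, and this is where the non-adaptivity assumption is essential: since $\xv_{1:t}$ are independent of $\varepsilon$, the weight vector $\wv$ depends only on the locations, so conditioned on them $\wv^T\varepsilon$ is a fixed linear combination of independent $R$-sub-Gaussian variables and is therefore $R\|\wv\|_2$-sub-Gaussian, giving the one-sided tail $\PP[-\wv^T\varepsilon>s]\le\exp(-s^2/(2R^2\|\wv\|_2^2))$. To make this scale with $\sigma_t(\xv)$, I would establish the weight-norm inequality $\|\wv\|_2^2\le\sigma_t^2(\xv)/\lambda$; diagonalizing $\Kv_t$ and comparing term by term, this reduces to the elementary inequality $\lambda_j(\lambda_j+2\lambda)\le(\lambda_j+\lambda)^2$ together with the noiseless positivity $\kv_t(\xv)^T\Kv_t^{-1}\kv_t(\xv)\le k(\xv,\xv)$. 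Choosing $s=\beta(\delta)\sigma_t(\xv)$ with $\beta(\delta)$ as stated (noting $\tfrac{R}{\lambda}\ge\tfrac{R}{\sqrt\lambda}$ when $\lambda\le1$, so the stated constant is a valid, slightly conservative choice) then makes the tail probability at most $\delta$.

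Finally, on the event $\{-\wv^T\varepsilon\le\beta(\delta)\sigma_t(\xv)\}$ I would combine the two pieces to get $f(\xv)-\mu_t(\xv)\le B\sigma_t(\xv)+\beta(\delta)\sigma_t(\xv)=(B+\beta(\delta))\sigma_t(\xv)$, i.e.\ $f(\xv)\le U_t^\delta(\xv)$ with probability at least $1-\delta$, and the lower bound is symmetric. I expect the main obstacle to be the weight-norm inequality linking $\|\wv\|_2$ to $\sigma_t(\xv)$, since that step is where the regularizer $\lambda$ and the precise form of $\beta(\delta)$ are pinned down; the bias bound and the sub-Gaussian concentration are comparatively routine, with the only conceptual subtlety being that non-adaptivity is exactly what permits the clean sub-Gaussian treatment (an adaptive analogue would instead require a self-normalized martingale bound with a larger, information-gain-dependent confidence width).
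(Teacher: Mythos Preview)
The paper does not prove this lemma; it is quoted verbatim from \cite{Vak21} and used as a black box in Appendix~\ref{app:max_var}. Your sketch is the standard argument behind that result and is essentially correct: the bias/noise decomposition, the Cauchy--Schwarz bound on the noise-free error, and the sub-Gaussian tail on $\wv^T\varepsilon$ (with non-adaptivity making $\wv$ deterministic given the design) are exactly the ingredients of the cited proof.

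Two small remarks. First, your route to the weight-norm inequality $\|\wv\|_2^2\le\sigma_t^2(\xv)/\lambda$ via diagonalization and the ``noiseless positivity'' $\kv_t(\xv)^T\Kv_t^{-1}\kv_t(\xv)\le k(\xv,\xv)$ is more work than needed (and tacitly assumes $\Kv_t$ is invertible). In fact it drops out of your own bias computation: you showed that the squared RKHS norm of the residual feature equals $\sigma_t^2(\xv)-\lambda\,\kv_t(\xv)^T(\Kv_t+\lambda\mathbf{I}_t)^{-2}\kv_t(\xv)=\sigma_t^2(\xv)-\lambda\|\wv\|_2^2$, and since a squared norm is non-negative the inequality is immediate.

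Second, as you yourself flag, the sub-Gaussian step naturally delivers $\beta(\delta)=\tfrac{R}{\sqrt{\lambda}}\sqrt{2\log(1/\delta)}$, and the stated $\tfrac{R}{\lambda}$ is only a conservative replacement when $\lambda\le 1$. The lemma as written carries no such restriction, so strictly speaking your argument proves the statement only in that regime (which is the regime of interest in the paper, where $\lambda=\sigma^2$ with small $\sigma$); for $\lambda>1$ the stated constant would be too small and your final inequality does not close. This is a discrepancy with the quoted constant rather than a flaw in your method.
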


\begin{lem} \label{lem:sum-var}
    {\em (Adaption of \citep[Lem. 5.4]{Sri09})}
    Letting $\gamma_T=\sup_{x_{1:T}\subseteq D} I(y_{1:T};f_{1:T})$, where $f_{1:T}=(f(\xv_{1:T}))\in\RR^T$ denotes the function values at the points $\xv_1,\dots,\xv_T$, we have
	\begin{equation}
		\sum_{t=1}^T \sigma_{t-1}^2(\xv_t) \leq \frac{2\gamma_T}{\log{\left(1+\frac{1}{\lambda^2}\right)}}.
	\end{equation}
\end{lem}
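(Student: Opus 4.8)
The plan is to follow the classical two-step route of \cite{Sri09}: first express the information gain $I(y_{1:T};f_{1:T})$ \emph{exactly} as a sum whose summands are monotone functions of the posterior variances $\sigma_{t-1}^2(\xv_t)$, and then apply an elementary scalar inequality to pass from those log-summands back to the raw variances. Because $\gamma_T$ is defined as the supremum of $I(y_{1:T};f_{1:T})$ over all input sequences, the fixed sequence $\xv_1,\dots,\xv_T$ produced by the algorithm satisfies $I(y_{1:T};f_{1:T}) \le \gamma_T$, so it suffices to bound $\sum_t \sigma_{t-1}^2(\xv_t)$ by a constant multiple of that sequence's own information gain.

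For the first step, I would use that under the working GP model $f\sim\mathrm{GP}(0,k)$ with Gaussian noise of variance $\lambda$, the vector $(f_{1:T},y_{1:T})$ is jointly Gaussian, whence
$$I(y_{1:T};f_{1:T}) = \tfrac12\log\det\!\big(\mathbf{I} + \lambda^{-1}\Kv_T\big),$$
with $\Kv_T$ the kernel matrix at $\xv_1,\dots,\xv_T$. The essential identity is that this determinant telescopes: applying the chain rule for mutual information together with the Gaussian conditional-entropy formula, the variance of $f(\xv_t)$ conditioned on $y_{1:t-1}$ is precisely the posterior variance $\sigma_{t-1}^2(\xv_t)$ of \eqref{eq:posterior_variance}, so that
$$I(y_{1:T};f_{1:T}) = \tfrac12\sum_{t=1}^{T}\log\!\big(1 + \lambda^{-1}\sigma_{t-1}^2(\xv_t)\big).$$
I expect this identity to be the main obstacle to establish cleanly; I would justify it either by the mutual-information chain rule applied one step at a time, or equivalently by factoring the determinant through the matrix determinant lemma, each new point contributing a factor $1+\lambda^{-1}\sigma_{t-1}^2(\xv_t)$.

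For the second step, I would invoke the fact that $s\mapsto \log(1+s)/s$ is decreasing, so that $s \le \frac{C}{\log(1+C)}\log(1+s)$ for all $s\in[0,C]$. Setting $s = \lambda^{-1}\sigma_{t-1}^2(\xv_t)$, which is bounded by a finite constant $C$ since the posterior variance never exceeds the prior variance $k(\xv,\xv)$, summing over $t$, and multiplying through by $\lambda$ gives
$$\sum_{t=1}^{T}\sigma_{t-1}^2(\xv_t) \;\le\; \frac{2}{\log(1+\lambda^{-1})}\,I(y_{1:T};f_{1:T}) \;\le\; \frac{2\gamma_T}{\log(1+\lambda^{-1})},$$
under the normalization $k(\xv,\xv)\le 1$. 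The stated denominator $\log(1+\lambda^{-2})$ then follows from the appendix's convention in which $\lambda$ plays the role of the noise standard deviation (so that the effective model variance is $\lambda^{2}$), which simply replaces $\lambda^{-1}$ by $\lambda^{-2}$ throughout; the only care needed is in tracking this constant and in verifying the boundedness of $\lambda^{-1}\sigma_{t-1}^2(\xv_t)$ implied by the kernel normalization.
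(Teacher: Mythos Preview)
The paper does not give its own proof of this lemma; it is stated with a citation to \cite{Sri09} and then used as a black box in the proof of Theorem~\ref{thm:max-var-upper}. Your two-step argument---the telescoping identity $I(y_{1:T};f_{1:T})=\tfrac12\sum_t\log(1+\lambda^{-1}\sigma_{t-1}^2(\xv_t))$ followed by the scalar inequality $s\le \tfrac{C}{\log(1+C)}\log(1+s)$ on $[0,C]$---is exactly the standard proof from that reference, and is correct.

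One remark on the $\lambda^{-1}$ versus $\lambda^{-2}$ point you raise at the end: in this paper $\lambda$ is explicitly the noise \emph{variance} (the text sets $\lambda=\sigma^2$ and uses $\Kv_t+\lambda\mathbf{I}_t$ in \eqref{eq:posterior_mean}--\eqref{eq:posterior_variance}), so your derivation with $\log(1+\lambda^{-1})$ is the internally consistent one. The $\lambda^{-2}$ in the lemma statement looks like a transcription slip from the original \cite{Sri09} notation (where the symbol denoted the standard deviation), not a different convention within this appendix. Either way, only the constant is affected, and the use made of the lemma downstream is unchanged.
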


Let $\Dtilde$ be a finite subdomain of $D=[0,1]^d$ with $T^{d/2}$ points, with equal spacing of width $\frac{1}{\sqrt{T}}$ in each dimension. For any $\xv\in D$, let $[\xv]_{\Dtilde}=\argmin_{\xv^\prime\in\Dtilde}\| \xv-\xv^\prime \|_2$. By construction, we have, for any $\xv\in D$ that
\begin{equation}
    \big\|\xv-[\xv]_{\Dtilde}\big\|_2 \leq \frac{\sqrt{d}}{\sqrt{T}} = O\Big(\frac{1}{\sqrt{T}}\Big).
\end{equation}
By Lemma \ref{lem:Lipschitzness}, the function $f$ is $L$-Lipschitz. Thus we have for any $\xv\in D$ that
\begin{equation}
    |f(\xv)-f([\xv]_{\Dtilde})| \leq L\|\xv-[\xv]_{\Dtilde}\|_2 = O\Big(\frac{L}{\sqrt{T}}\Big). \label{eq:lipschitz-discretization}
\end{equation}
For any fixed $\xv\in\Dtilde$, applying Lemma \ref{lem:ci} gives the following with probability at least $1-\frac{\delta}{2|\Dtilde|}$:
\begin{equation}
	f(\xv) \geq \mu_T(\xv) - \bigg(B + \beta\Big(\frac{\delta}{|\Dtilde|}\Big)\bigg)\sigma_T(\xv).
\end{equation}
By a union bound over all $\xv\in\Dtilde$, we have, for all $\xv\in\Dtilde$ simultaneously that
\begin{equation}
	f(\xv) \geq \mu_T(\xv) - \bigg(B + \beta\Big(\frac{\delta}{|\Dtilde|}\Big)\bigg)\sigma_T(\xv), \label{eq:up-ci}
\end{equation}
with probability at least $1-\frac{\delta}{2}$. Similarly, we have, for all $\xv\in\Dtilde$ that
\begin{equation}
	f(\xv) \leq \mu_T(\xv) + \bigg(B + \beta\Big(\frac{\delta}{|\Dtilde|}\Big)\bigg)\sigma_T(\xv), \label{eq:lo-ci}
\end{equation}
with probability at least $1-\frac{\delta}{2}$. Combining \eqref{eq:up-ci} and \eqref{eq:lo-ci}, and again applying the union bound, we have, for all $\xv\in\Dtilde$ that
\begin{equation}
	| f(\xv)-\mu_T(\xv)| \leq \bigg(B + \beta\Big(\frac{\delta}{|\Dtilde|}\Big)\bigg)\sigma_T(\xv), \label{eq:ci}
\end{equation}
with probability at least $1-\delta$. We now extend the above upper bound to any point $\xv$ in the domain $D$:
\begin{align}
	| f(\xv)-\mu_T(\xv)|
	&\leq \left| f(\xv)-f([\xv]_{\Dtilde})+f([\xv]_{\Dtilde})-\mu_T([\xv]_{\Dtilde})+\mu_T([\xv]_{\Dtilde})-\mu_T(\xv)\right| \nonumber \\
	&\leq \left| f(\xv)-f([\xv]_{\Dtilde}) \right|+\left| f([\xv]_{\Dtilde})-\mu_T([\xv]_{\Dtilde})\right|+\left|\mu_T([\xv]_{\Dtilde})-\mu_T(\xv)\right| \nonumber \\
	&\leq O\Big(\frac{L}{\sqrt{T}}\Big) + \bigg(B + \beta\Big(\frac{\delta}{|\Dtilde|}\Big)\bigg)\sigma_T([\xv]_{\Dtilde}) + O\Big(\frac{L}{\sqrt{T}}\Big), \label{eq:up-bound}
\end{align}
where the second inequality is by applying the triangle inequality, and the third inequality is due to \eqref{eq:ci} and \eqref{eq:lipschitz-discretization}. 

We will now show an upper bound on the posterior variance $\sigma_T(\xv)$. Due to the decreasing property of posterior variance, we know that $\sigma_{t+1}(\xv)\leq\sigma_t(\xv)$ for all $\xv$ and $t$. Furthermore, due to the maximum variance sampling strategy, we have $\sigma_{t-1}(\xv_t)\geq\sigma_{t-1}(\xv)$. Thus, we have
\begin{align}
	\sigma_T([\xv]_{\Dtilde}) \leq \sigma_{t-1}([\xv]_{\Dtilde}) \leq \sigma_{t-1}(\xv_t)
\end{align}
for all $\xv$ and $t \le T$.  Squaring and averaging over $t\in[T]$ gives
\begin{equation}
	\sigma_T^2([\xv]_{\Dtilde}) \leq \frac{1}{T}\sum_{t=1}^T \sigma_{t-1}^2(\xv_t),
\end{equation}
and applying Lemma \ref{lem:sum-var}, we obtain
\begin{equation}
	\sigma_T([\xv]_{\Dtilde})\leq\sqrt{\frac{2\gamma_T}{T\log{\Big(1+\frac{1}{\lambda^2}\Big)}}}. \label{eq:sigma_bound}
\end{equation}
Substituting \eqref{eq:sigma_bound} and $|\Dtilde|=T^{d/2}$ into \eqref{eq:up-bound}, and recalling the definition of $\beta(\cdot)$ in Lemma \ref{lem:ci}, we obtain
\begin{align}
	| f(\xv)-\mu_T(\xv)|
	&\leq O\Big(\frac{L}{\sqrt{T}}\Big) + \Big(B+\frac{\sigma}{\lambda}\sqrt{d\log{T}+2\log{\frac{1}{\delta}}}\Big)\sqrt{\frac{2\gamma_T}{T\log{\Big(1+\frac{1}{\lambda^2}\Big)}}} \label{eq:similar1} \\
	&=O\bigg(\sqrt{\frac{\gamma_T}{T} \Big(d\log{T}+\log{\frac{1}{\delta}}\Big)}\bigg).
\end{align}
Finally, the absolute error of the above algorithm can be upper bounded by 
\begin{align}
	\Big|\int_D p(\xv) (f(\xv)-\mu_T(\xv))d\xv \Big| 
	\leq \max_{\xv\in D} \big|f(\xv)-\mu_T(\xv)\big| \int_D p(\xv)d\xv
	=O\bigg(\sqrt{\frac{\gamma_T}{T} \Big(d\log{T}+\log{\frac{1}{\delta}}\Big)}\bigg). \label{eq:similar3}
\end{align}
with probability at least $1-\delta$.  For the Mat\'ern-$\nu$ kernel, $\gamma_T=O\big(T^{\frac{d}{2\nu +d}}(\log{T})^{\frac{2\nu}{2\nu +d}}\big)$, and for the SE kernel, $\gamma_T=O\big((\log(T))^{d+1}\big)$ (see \citep{Vak20}).  Thus, we obtain the noisy upper bounds in \eqref{eq:mat_boup_sig} and \eqref{eq:se_boup_sig} upon setting $\delta = \frac{1}{{\rm poly}(T)}$ as outlined at the start of this subsection.


\subsection{Noiseless Setting}

For the noiseless setting, we first state two usefull lemmas, the first giving a standard deterministic confidence bound, and the second relating the posterior variance and the fill distance $h_X$ (see \eqref{eq:fill_dist}).

\begin{lem}\label{lem:conf_bound}
    For any $f\in\Hk$ with $\|f\|_{\Hk}\le B$, we have with probability one that $L_t(\xv)\le f(\xv)\le U_t(\xv)$ for any $t$ and $\xv\in D$, where
    \begin{align*}
        U_t(\xv) &= \mu_{t-1}(\xv) + B \sigma_{t-1}(\xv), \\
        L_t(\xv) &= \mu_{t-1}(\xv) - B \sigma_{t-1}(\xv),
    \end{align*}
    and where $\mu_{t-1}(\cdot)$ and $\sigma_{t-1}(\cdot)$ are given in \eqref{eq:posterior_mean}--\eqref{eq:posterior_variance} with $\lambda = 0$.
\end{lem}

\begin{lem}\label{lem:pf_ub}
    {\em (Adaption of \citep[Thm.~3]{San16b})}
    Consider the set of points $X:=\{\xv_t\}_{t=1}^{T}$ obtained by Algorithm \ref{alg:max-var}.  Then:
    \begin{enumerate}
        \item For functions in the Mat\'ern RKHS, we have
            \begin{equation}
                \max_{\xv \in D} \sigma_T(\xv) = O(h_X^{\nu}). \label{eq:fill_Mat}
            \end{equation}
        \item For functions in the SE RKHS, there exists $C_p >0$ such that
            \begin{equation}
                \max_{\xv \in D} \sigma_T(\xv) = O\Big(e^{-\frac{C_p}{h_X}}\Big). \label{eq:fill_SE}
            \end{equation}
    \end{enumerate}
\end{lem}

For the Mat\'ern kernel, as discussed in Appendix \ref{app:proof_mat_cor}, it is known that maximum-variance sampling leads to the fill distance being $h_X=\Theta(T^{-\frac{1}{d}})$, which is the best possible.  Hence, the desired upper bound \eqref{eq:mat_boup_nosig} for the Mat\'ern kernel follows directly from \eqref{eq:fill_Mat} along with Lemma \ref{lem:conf_bound} and similar steps to \eqref{eq:similar1}--\eqref{eq:similar3}.

For the SE kernel, the scaling of $h_X$ induced via maximum variance sampling is not clear; see the discussion in \citep[Sec.~4.1]{San16b}.  To overcome this, we adopt the simpler approach of sampling on a uniformly-spaced grid to ensure that $h_X=\Theta(T^{-\frac{1}{d}})$ (e.g., see \citep[Thm.~4.3]{Kan19}) and applying the following recent result.

\begin{lem}\label{lem:se_sig_upper}
     {\em (\citep[Eq.~(32)]{Xu22})}  Consider the domain $D=[0,1]^d$ and a grid-based subset $X\subset D$, i.e., $X=\{(\frac{k_1}{N},\ldots,\frac{k_d}{N}) | k_i\in\{0,\ldots,N-1\}\}$.  Then, letting $T$ denote the size of $X$, sampling each point once yields the following upper bound on the noiseless posterior standard deviation (with $\lambda = 0$):
     \begin{equation}
         \max_{\xv \in D} \sigma_T(\xv)  = O\big(e^{-\frac{d}{2} T^{\frac{1}{d}}}\big). \label{eq:pointwise_SE}
     \end{equation}
\end{lem}

Since we consider a bounded domain, the point-wise guarantee \eqref{eq:pointwise_SE} immediately implies the $L^2$ guarantee $\EE[\|f-\mu_T\|_{L^2}]= O\big(e^{-\frac{d}{2} T^{\frac{1}{d}}}\big)$.  Then, combining with Lemma \ref{lem:conf_bound} and proceeding similarly to \eqref{eq:similar3}, we obtain the noiseless upper bound in \eqref{eq:se_boup_nosig}.  

Moreover, in Algorithm \ref{alg:two-phase-bq}, we can let \textsc{EstimateFunc} be $\hat{f} = \mu_{T/2}$ with a uniform grid, and substitution into Theorem \ref{thm:avg_int_l2} readily yields Corollary \ref{cor:avg_se_nosig}.

\section{Additional Experimental Results} \label{app:addexp}


\subsection{Further Synthetic Experiments}

In Figures \ref{fig:varynoise-2}, \ref{fig:varysplit-1}, and \ref{fig:varysplit-4}, we present additional results on synthetic kernel-based functions and benchmark functions.  Overall, while there is no definitive ordering between the methods in general, we observe similar findings to those discussed in Section \ref{sec:results}.  In particular, for the plots of error vs.~split fraction, the trend can be decreasing (particularly at low noise), increasing (particularly at high noise), or ``U-shaped'', but $0.5$ is generally a reasonable choice.  As we already discussed in Section \ref{sec:results}, there is often a sudden drop at $1.0$.


We note that some of the MVS curves exhibit non-monotone behavior (e.g., for Alpine-2D).  We believe that this is because the only randomness in MVS is in the noise and the 3 initial points, whereas MC and MVS-MC have much more randomness due to being randomized algorithms.  When there is limited randomness and few queries have been made, the algorithm is essentially outputting an uncertain guess, and it can happen that this guess luckily has a low error, but then this luck diminishes as more samples are taken.  In contrast, MVS-MC and MC have enough internal randomness to ``average out'' the lucky and unlucky scenarios.

\begin{figure}
    \centering
    
    \vskip 0.2in
    \begin{subfigure}{\columnwidth}
        \centering
        \includegraphics[width=0.275\columnwidth]{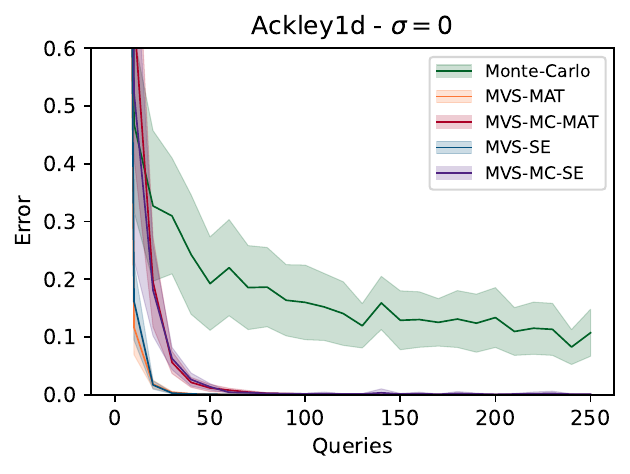}
        \includegraphics[width=0.275\columnwidth]{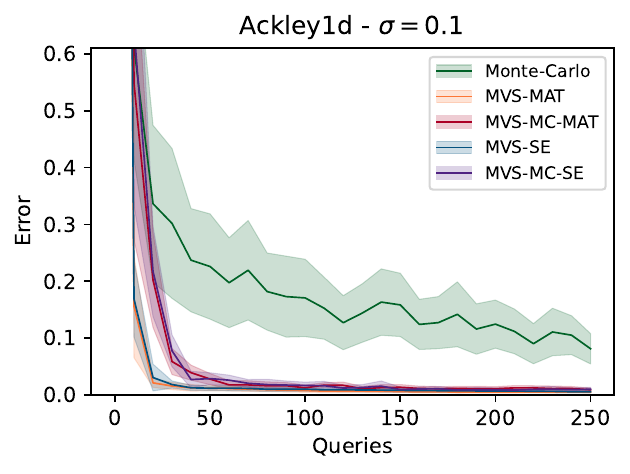}
        \includegraphics[width=0.275\columnwidth]{figs/benchmark/1d/ack1-0.5.pdf}

        \caption{Ackley-1D}
    \end{subfigure}

    \vskip 0.2in
    \begin{subfigure}{\columnwidth}
        \centering
        \includegraphics[width=0.275\columnwidth]{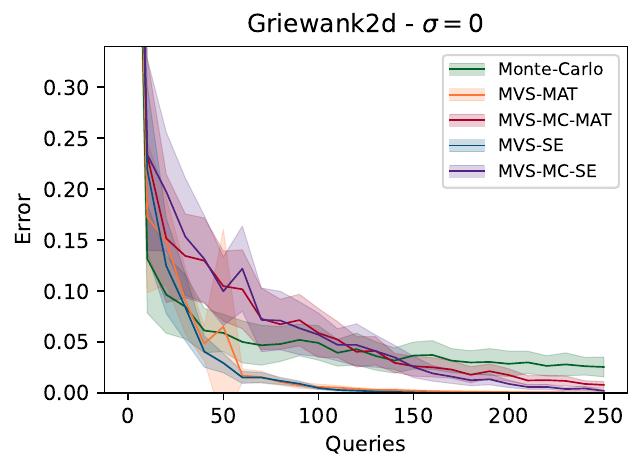}
        \includegraphics[width=0.275\columnwidth]{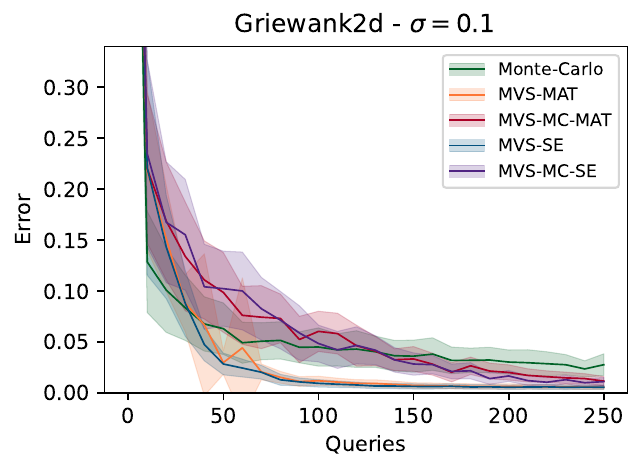}
        \includegraphics[width=0.275\columnwidth]{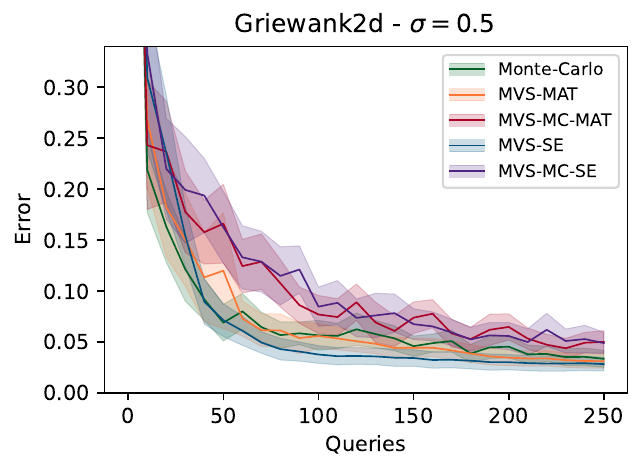}

        \caption{Griewank-2D}
    \end{subfigure}
    

    \vskip 0.2in
    \begin{subfigure}{\columnwidth}
        \centering
        \includegraphics[width=0.275\columnwidth]{figs/benchmark/2d/alp2-0.pdf}
        \includegraphics[width=0.275\columnwidth]{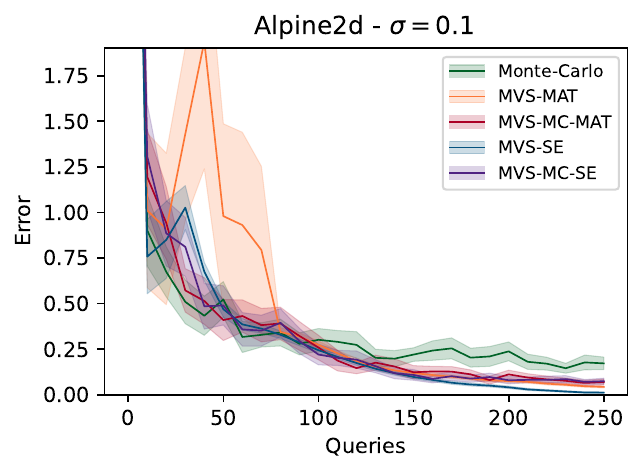}
        \includegraphics[width=0.275\columnwidth]{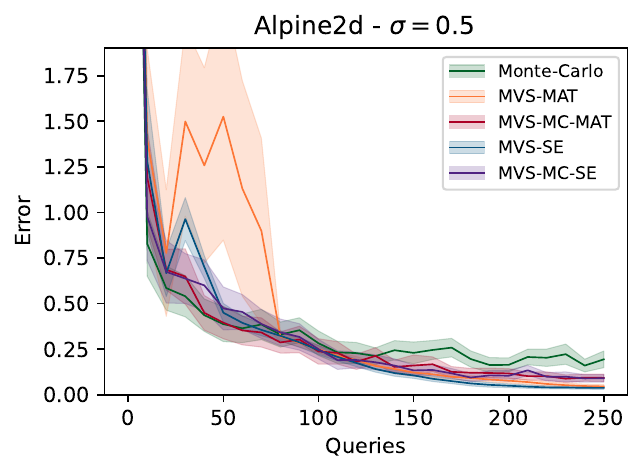}

        \caption{Alpine-2D}
    \end{subfigure}
    
    
    \vskip 0.2in
    \begin{subfigure}{\columnwidth}
        \centering
        \includegraphics[width=0.275\columnwidth]{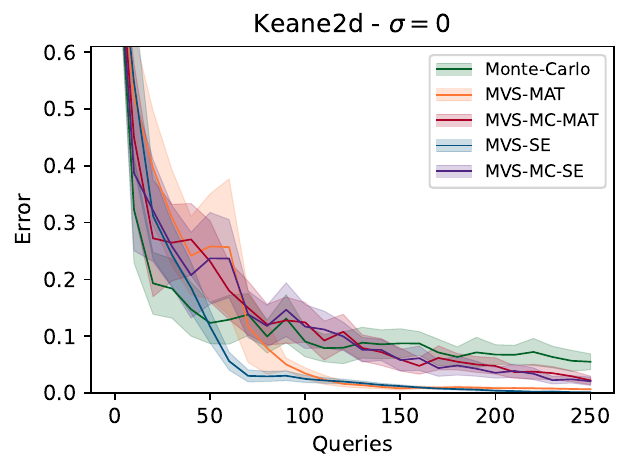}
        \includegraphics[width=0.275\columnwidth]{figs/benchmark/2d/keane-0.1.pdf}
        \includegraphics[width=0.275\columnwidth]{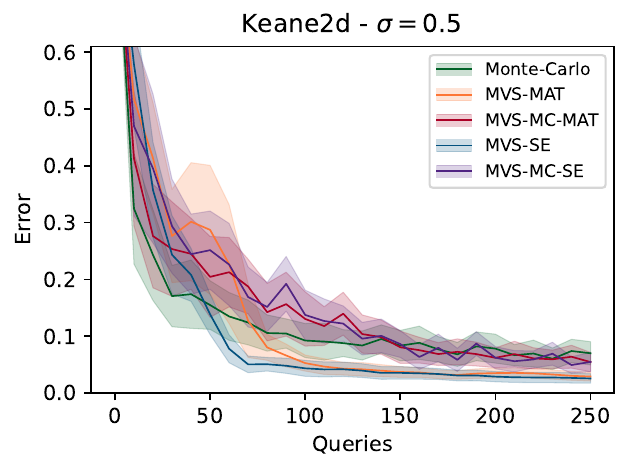}

        \caption{Keane-2D}
    \end{subfigure}

    \caption{Comparison of algorithms and the effect of noise. \label{fig:varynoise-2}}
\end{figure}

\begin{figure}
    \centering




    \begin{subfigure}{\columnwidth}
        \centering
        \includegraphics[width=0.275\columnwidth]{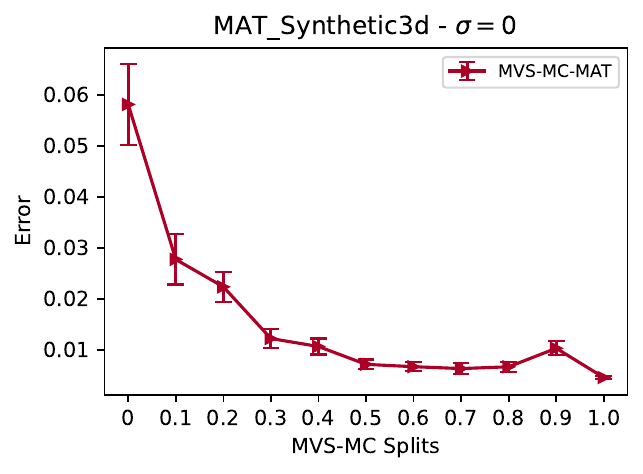}
        \includegraphics[width=0.275\columnwidth]{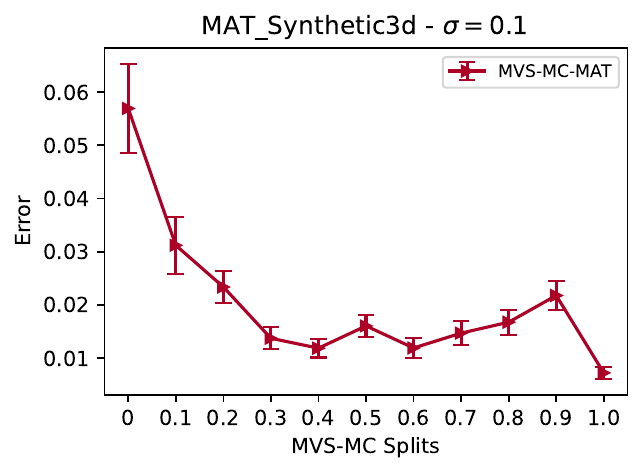}
        \includegraphics[width=0.275\columnwidth]{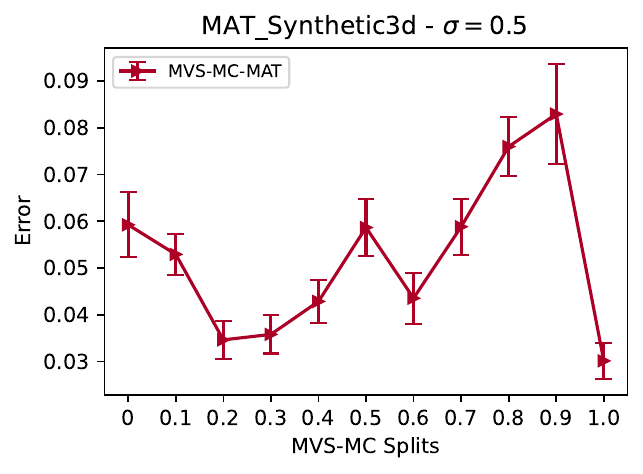}
        \caption{Mat\'ern Synthetic 3D}
    \end{subfigure}
    
    \vskip 0.2in
    \begin{subfigure}{\columnwidth}
        \centering
        \includegraphics[width=0.275\columnwidth]{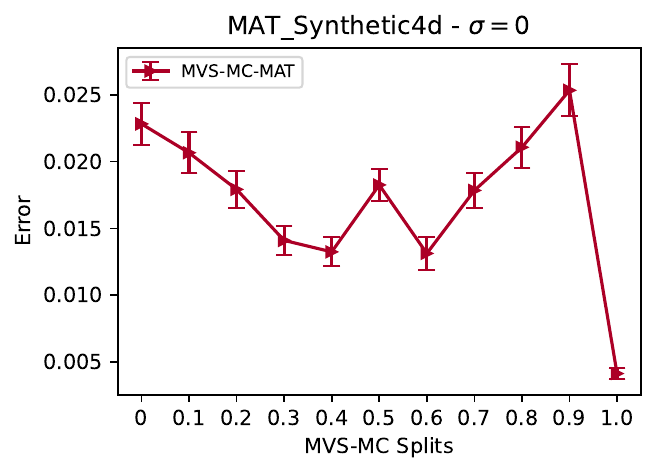}
        \includegraphics[width=0.275\columnwidth]{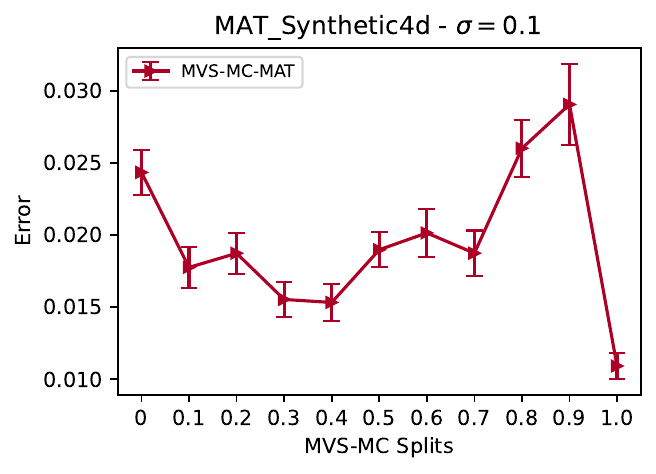}
        \includegraphics[width=0.275\columnwidth]{figs/splits/syn/split-mat_syn4-0.5.pdf}

        \caption{Mat\'ern Synthetic 4D}
    \end{subfigure}

    \vskip 0.2in
    \begin{subfigure}{\columnwidth}
        \centering
        \includegraphics[width=0.275\columnwidth]{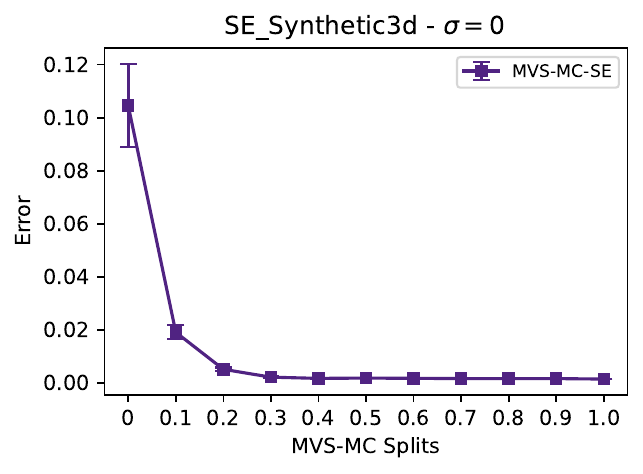}
        \includegraphics[width=0.275\columnwidth]{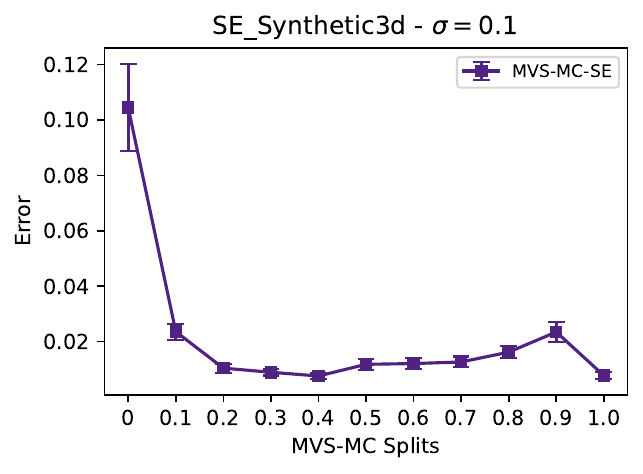}
        \includegraphics[width=0.275\columnwidth]{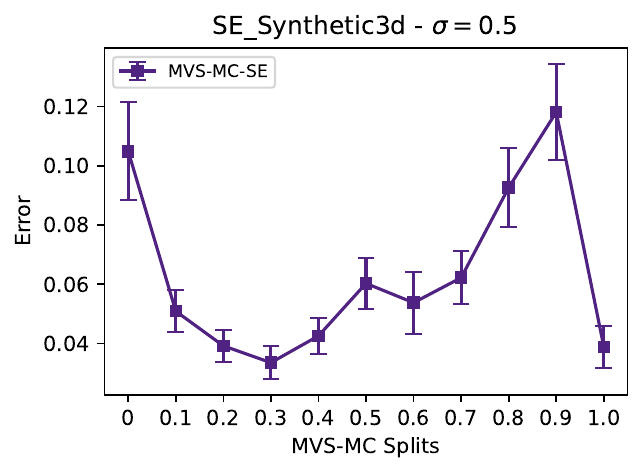}
        \caption{SE Synthetic 3D}
    \end{subfigure}
    
    \vskip 0.2in
    \begin{subfigure}{\columnwidth}
        \centering
        \includegraphics[width=0.275\columnwidth]{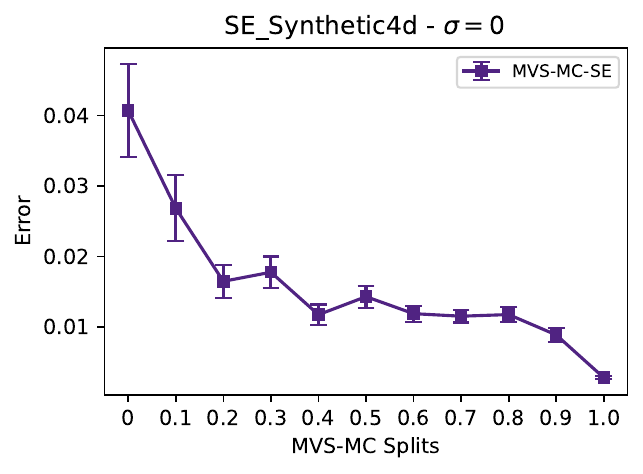}
        \includegraphics[width=0.275\columnwidth]{figs/splits/syn/split-se_syn4-0.1.pdf}
        \includegraphics[width=0.275\columnwidth]{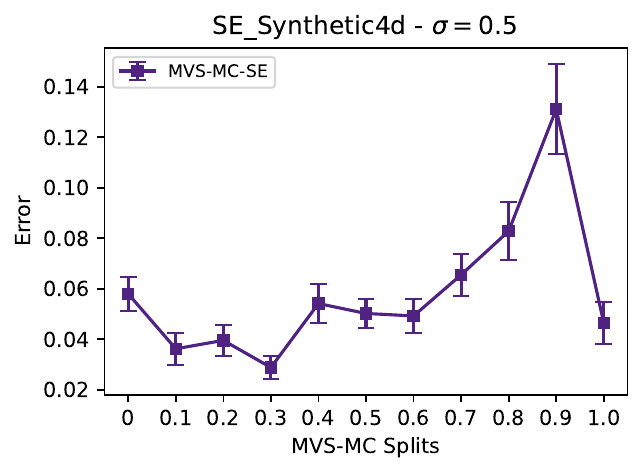}
        \caption{SE Synthetic 4D}
    \end{subfigure}

    \caption{Comparison of different MVS-MC splits, i.e., the fraction of rounds for which MVS is used. \label{fig:varysplit-1}}
    
\end{figure}

\begin{figure}
    \centering
    \begin{subfigure}{\columnwidth}
        \centering
        \includegraphics[width=0.275\columnwidth]{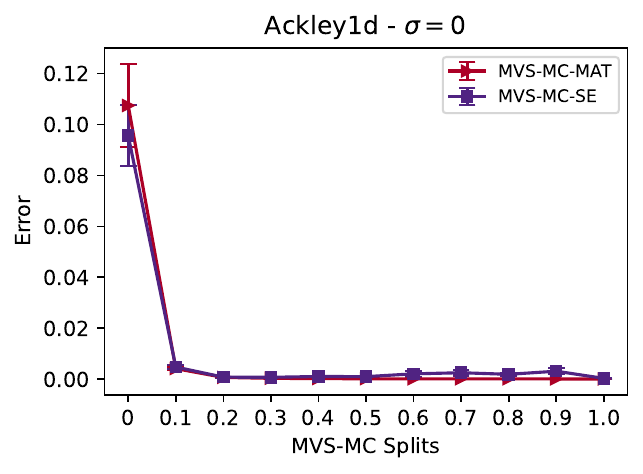}
        \includegraphics[width=0.275\columnwidth]{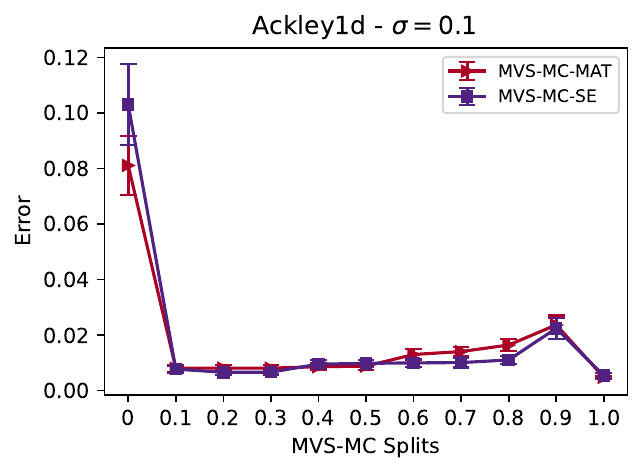}
        \includegraphics[width=0.275\columnwidth]{figs/splits/1d/split-ack1-0.5.pdf}

        \caption{Gramacy-Lee-1D}
    \end{subfigure}
    
    \begin{subfigure}{\columnwidth}
        \centering
        \includegraphics[width=0.275\columnwidth]{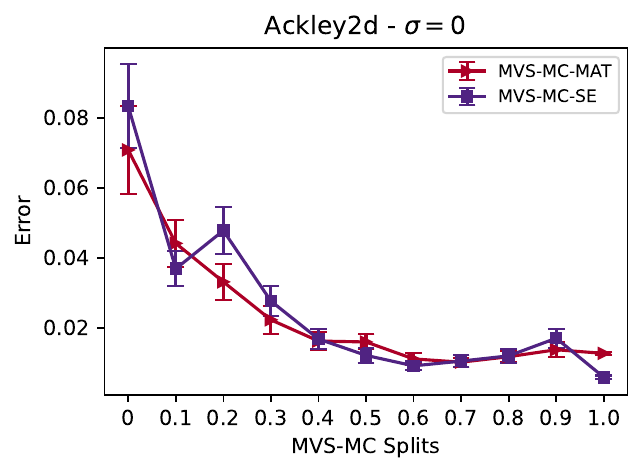}
        \includegraphics[width=0.275\columnwidth]{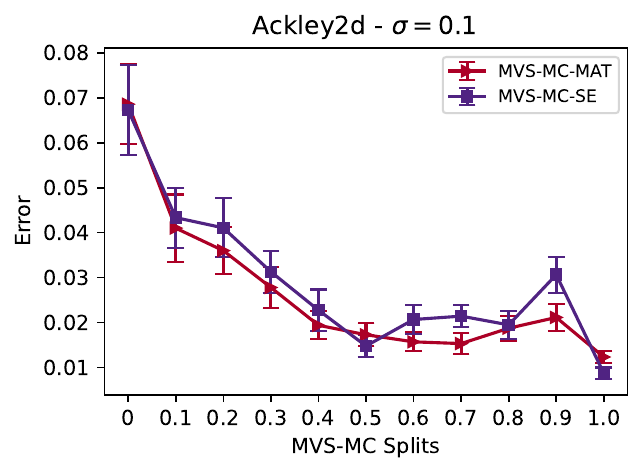}
        \includegraphics[width=0.275\columnwidth]{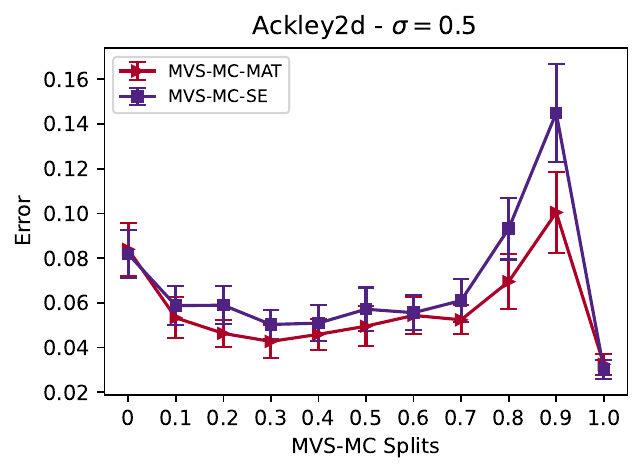}

        \caption{Ackley-2D}
    \end{subfigure}


    
    \vskip 0.2in
    \begin{subfigure}{\columnwidth}
        \centering
        \includegraphics[width=0.275\columnwidth]{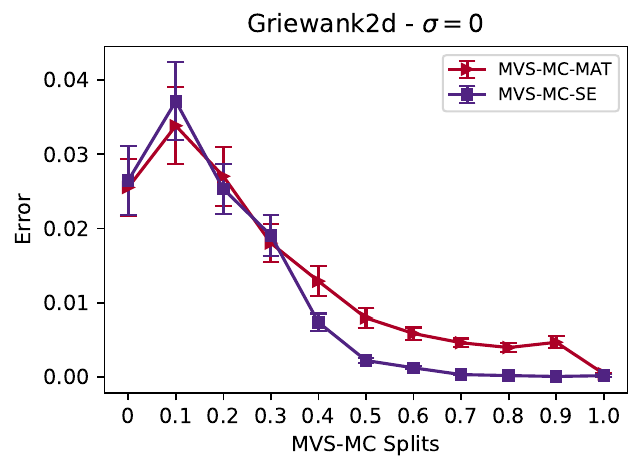}
        \includegraphics[width=0.275\columnwidth]{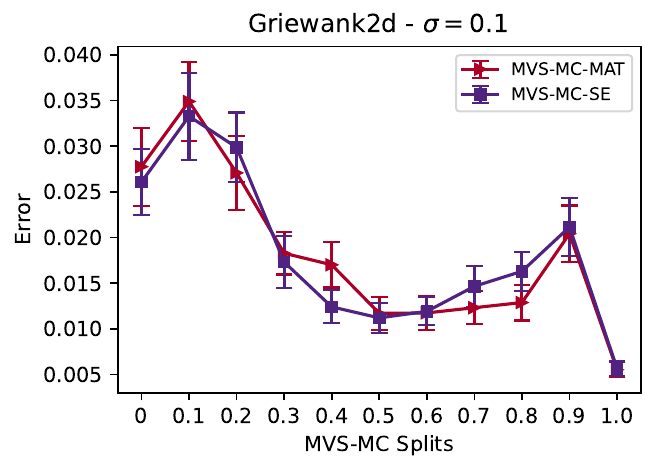}
        \includegraphics[width=0.275\columnwidth]{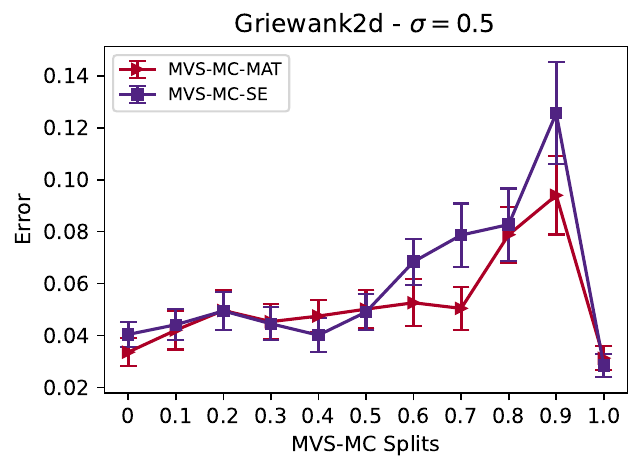}

        \caption{Griewank-2D}
    \end{subfigure}

    \vskip 0.2in
    \begin{subfigure}{\columnwidth}
        \centering
        \includegraphics[width=0.275\columnwidth]{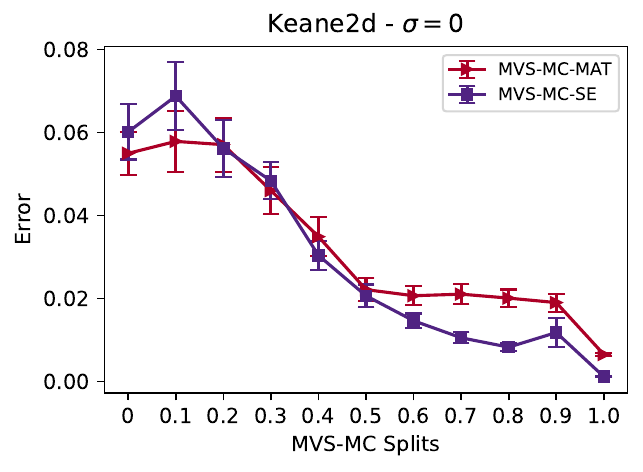}
        \includegraphics[width=0.275\columnwidth]{figs/splits/2d/split-keane-0.1.pdf}
        \includegraphics[width=0.275\columnwidth]{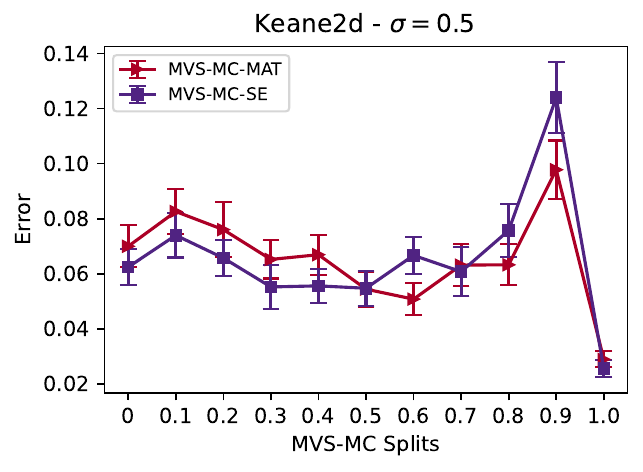}

        \caption{Keane-2D}
    \end{subfigure}

    \caption{Comparison of different MVS-MC splits, i.e., the fraction of rounds for which MVS is used. \label{fig:varysplit-4}}
    
\end{figure}









    
    

\subsection{Sensor Measurement Data} \label{app:addexp-sensor}

We consider the problem of estimating an average sensor reading from limited queries, which each query consists of reading the value at a given time instant.  Note that since the algorithms we consider are non-adaptive, the query times can be pre-computed.  The data set consists of energy consumption readings for London Households that took part in the UK Power Networks led Low Carbon London Project, between November 2011 and February 2014.\footnote{The data can be downloaded at \href{https://data.london.gov.uk/dataset/smartmeter-energy-use-data-in-london-households}{data.london.gov.uk}.}

We construct a time-series signal (shown in Figure \ref{fig:sensor}) of length 19,548 by sampling the data at intervals of one hour.  Our goal is to estimate the average energy consumption during this period. Although the domain is now discrete, we let MVS and MVS-MC work on the continuous space, and round the selected decimal value to the nearest point in the data set.  To create a {\em noisy} BQ problem, we artificially add Gaussian $N(0,\sigma^2)$ noise to each query, with $\sigma \in \{0,0.1,0.5\}$.  The results are shown in Figure \ref{fig:energy}; in this case, we found the various methods to perform relatively similarly to each other.

\begin{figure}[!t]
\centering
    
     \includegraphics[width=\textwidth]{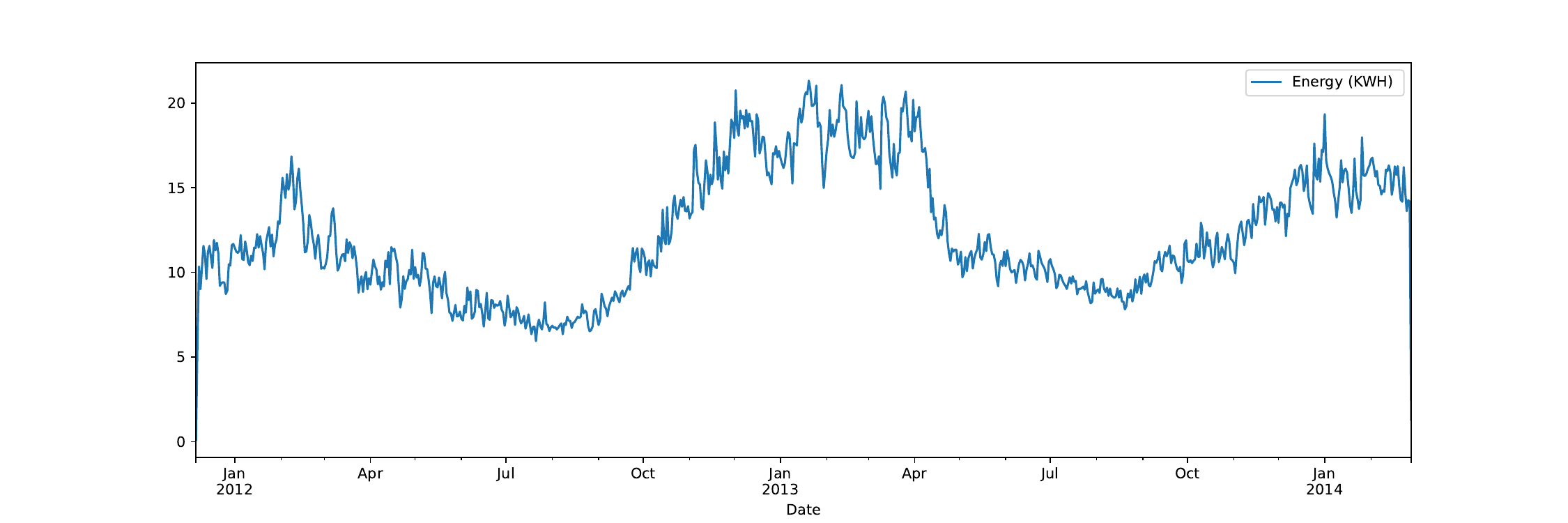}
    
    \caption{Time-series function for energy measurements. \label{fig:sensor}}
\end{figure}

\begin{figure}[!t]
\centering
    \begin{subfigure}[b]{0.32\linewidth}
    \includegraphics[width=\linewidth]{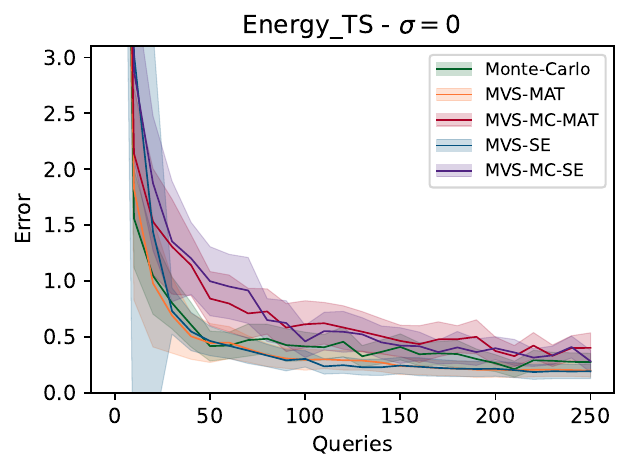}
    \end{subfigure}
    \begin{subfigure}[b]{0.32\linewidth}
    \includegraphics[width=\linewidth]{figs/realworld/energy-0.1.pdf}
    \end{subfigure}
    \begin{subfigure}[b]{0.32\linewidth}
    \includegraphics[width=\linewidth]{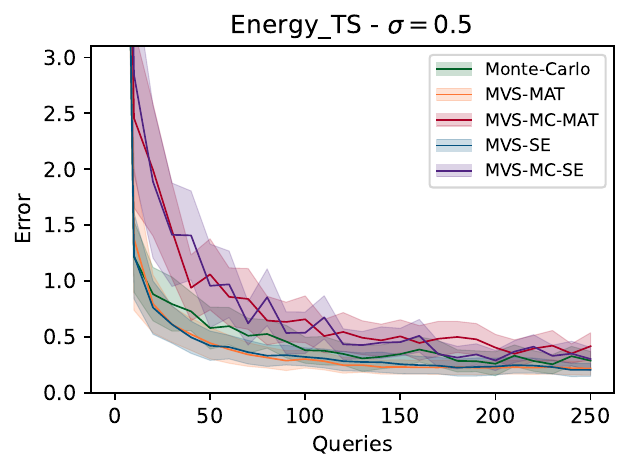}
    \end{subfigure}
    
    \caption{Results for time series energy data. \label{fig:energy}}
\end{figure}

\end{document}